\def\b{{\bf b}}
\def\C{{\bf C}}
\def\I{{\bf I}}
\def\X{{\bf X}}
\def\x{{\bf x}}
\def\y{{\bf y}}
\def\u{{\bf u}}
\def\W{{\bf W}}
\def\0{{\bf 0}}
\def\1{{\bf 1}}
\def\AM{{\mathcal A}}
\def\ª{{\mathcal T}}
\def\XM{{\mathcal X}}
\def\RB{{\mathbb R}}
\def\bet{\mbox{\boldmath$\beta$\unboldmath}}
\def\epsi{\mbox{\boldmath$\epsilon$\unboldmath}}
\def\pii{\mbox{\boldmath$\pi$\unboldmath}}
\def\Si{\mbox{\boldmath$\Sigma$\unboldmath}}
\def\vps{\mbox{\boldmath$\varepsilon$\unboldmath}}
\def\argmin{\mathop{\rm argmin}}
\def\sgn{\mathrm{sgn}}
\def\GIG{\mathsf{GIG}}
\def\IG{\mathsf{IG}}
\def\EP{\mathsf{EP}}
\def\EIG{\mathsf{EGIG}}
\def\EG{\mathsf{EG}}
\def\GT{\mathsf{GT}}
\begin{document}
\title{EP-GIG Priors and Applications in Bayesian Sparse Learning}

\author{\name Zhihua Zhang,  Shusen Wang and Dehua Liu \\
\addr College of Computer Science \& Technology \\
Zhejiang University \\
Hangzhou, China 310027 \\
\texttt{\{zhzhang, wssatzju\}@gmail.com} \AND
\name Michael I. Jordan\\
\addr Computer Science Division and Department of Statistics \\
University of California, Berkeley \\
CA 94720-1776 USA \\
\texttt{jordan@stat.berkeley.edu}
}

\date{April 2012}

\maketitle

\begin{abstract}
In this paper
we propose a novel framework for the construction of  sparsity-inducing priors. In particular,
we define such priors as a mixture of exponential power distributions with a generalized inverse Gaussian density (EP-GIG).
EP-GIG is a  variant of generalized hyperbolic distributions, and
the special cases include Gaussian scale mixtures and Laplace scale mixtures.
Furthermore, Laplace scale mixtures can subserve a Bayesian framework for sparse learning
with nonconvex penalization.
The densities of EP-GIG can be explicitly expressed. Moreover, the corresponding posterior distribution also follows
a generalized inverse Gaussian distribution. These properties lead us to EM algorithms
for  Bayesian sparse learning. We show that these algorithms bear an interesting
resemblance to iteratively re-weighted $\ell_2$ or $\ell_1$ methods. In addition, we present two extensions for grouped 
variable selection and
logistic regression.
\end{abstract}

\begin{keywords} Sparsity priors, scale mixtures of exponential power distributions, generalized inverse Gaussian distributions, expectation-maximization algorithms, iteratively re-weighted minimization methods.
\end{keywords}

\section{Introduction}

In this paper we are concerned with sparse supervised learning problems
over a training dataset ${\XM}=\{(\x_i, y_i)\}_{i=1}^n$.
The point of departure for our work is the traditional formulation of
supervised learning as a regularized optimization problem:
\[
\min_{\b} \; \Big\{L(\b; \XM) +   P_{\lambda}(\b)\Big\},
\]
where $\b$ denotes the model parameter vector, $L(\cdot)$ the loss function penalizing data misfit, $P_{\lambda}(\cdot)$
the regularization term penalizing model complexity, and $\lambda>0$ the tuning parameter balancing the relative significance
of the loss function and the penalty.

Variable selection is a fundamental problem in high-dimensional learning
problems, and is closely tied to the notion that the data-generating
mechanism can be described using a sparse representation. In supervised
learning scenarios, the problem is to obtain sparse estimates for the
regression vector $\b$.  Given that it is NP-hard to use the $\ell_0$
penalty (i.e., the number of the nonzero elements of $\b$)~\citep{Weston:2003}, attention
has focused on use of the $\ell_1$ penalty~\citep{TibshiraniLASSO:1996}.
But in addition a number of studies have emphasized the advantages of
nonconvex penalties---such as the bridge penalty and the log-penalty---for
achieving sparsity~\citep{FuBridge:1998,Fan01,MazumderSparsenet:11}.

The regularized optimization problem can be cast into a maximum \emph{a posteriori}
(MAP) framework.  This is done by taking a Bayesian decision-theoretic approach
in which the loss function $L(\b; \XM)$ is based on the conditional
likelihood of the output $y_i$ and the penalty $P_{\lambda}(\b)$ is
associated with a prior distribution of $\b$. For example, the least-squares
loss function is associated with Gaussian likelihood, while there exists
duality between the $\ell_1$  penalty and the Laplace prior.

The MAP framework provides us with Bayesian underpinnings for the sparse
estimation problem.  This has led to Bayesian versions of the lasso, which
are based on expressing the Laplace prior as a scale-mixture of a Gaussian
distribution and an exponential density~\citep{AndrewsMallows:1974,West:1987}.  \cite{Figueiredo:2003}, and \cite{Kiiveri:2008} presented 
a Bayesian lasso based on the Expectation-Maximization (EM) algorithm.   \cite{CaronDoucet:icml} then considered an EM estimation 
with normal-gamma or
normal-inverse-gaussian priors.  In one latest work, 
\cite{PolsonScottTR}   proposed using generalized hyperbolic distributions,
variance-mean mixtures of Gaussians with generalized inverse Gaussian densities. Moreover,  \cite{PolsonScottTR} devised EM algorithms via data augmentation methodology. However, these treatments is  not fully Bayesian. \cite{LeeCaronDoucetHolmes:2010} referred to such  
a method based on MAP estimation as a quasi-Bayesian approach. Additionally, an empirical-Bayes sparse learning
was  developed by \cite{TippingJMLR:2001}.

Recently,
\cite{Park:2008} and \cite{Hans:2009} proposed  full Bayesian lasso models based on Gibbs sampling.
Further work by 
\cite{GriffinBrownBA:2010} involved
the use of a family of normal-gamma priors as a generalization of the Bayesian lasso. This prior has been also used
by \cite{ArchambeuNIPS:2009} to develop sparse probabilistic projections.
In the work of \cite{CarvalhoPolson:2010}, the authors proposed  horseshoe priors which are
a mixture of normal distributions and a half-Cauchy density on the positive reals with scale parameter.
\cite{KyungBA:2010} conducted performance
analysis of Bayesian lassos in depth theoretically and empirically.

There has also been work on nonconvex penalties within a Bayesian framework.
\cite{ZouLi:2008} derived their local linear approximation
(LLA) algorithm by combining the EM algorithm with an inverse  Laplace
transformation. In particular, they showed that the bridge penalty can be
obtained by mixing the Laplace distribution with a stable distribution.
Other authors have shown that the prior induced from the log-penalty
has an interpretation as a scale mixture of Laplace distributions with an
inverse gamma density~\citep{CevherNIPS:2009,GarriguesfNIPS:2010,LeeCaronDoucetHolmes:2010,ArmaganDunsonLee}.
Additionaly, \cite{GriffinBrownTA:2010} devised a family
of normal-exponential-gamma priors for a Bayesian adaptive lasso~\citep{ZouJasa:2006}.
\cite{PolsonBS:2010,PolsonScott:2011} provided a unifying framework for the construction of sparsity priors using L\'{e}vy processes.

In this paper we develop a novel framework for constructing sparsity-inducing priors. Generalized
inverse Gaussian (GIG) distributions~\citep{Jorgensen:1982} are conjugate with respect to an exponential power (EP)
distribution~\citep{BoxTiao:1992}---an extension of Gaussian and Laplace distributions. Accordingly,
we propose a family of distributions that we refer to as \emph{EP-GIG}.
In particular, we define EP-GIG  as a scale mixture of EP distributions with a GIG density, and derive their explicit densities.
EP-GIG can be regarded as a variant of generalized hyperbolic distributions, and
include Gaussian scale mixtures and  Laplacian scale mixtures as special cases. The Gaussian scale mixture
is a class of generalized hyperbolic distributions~\citep{PolsonScottTR}
and its special cases include  normal-gamma distributions~\citep{GriffinBrownBA:2010} as well as the Laplacian distribution.
The  generalized double Pareto distribution in \citep{CevherNIPS:2009,ArmaganDunsonLee,LeeCaronDoucetHolmes:2010} and the 
bridge distribution inducing the $\ell_{1/2}$ bridge penalty~\citep{ZouLi:2008} are special cases of Laplacian scale mixtures.  In Appendix~\ref{app:cases},
we devise a set of now EP-GIG priors.

Since GIG priors are conjugate with respect to EP distributions, it is feasible to
apply EP-GIG to Bayesian sparse learning. Although it has been illustrated that fully Bayesian sparse learning methods based on Markov chain Monte Carlo  sampling
work well,
our main  attention is paid to a quasi-Bayesian approach. The important purpose is to explore the equivalent 
relationship between the MAP estimator and the classical regularized
estimator  in depth.  In particular,  using the property of  EP-GIG as a
scale-mixture of exponential power distributions, we devise EM algorithms for finding a sparse MAP estimate of
$\b$.

When we set the exponential power distribution to be the Gaussian distribution, the resulting
EM algorithm is closely related to the iteratively  re-weighted $\ell_2$ minimization methods
in~\citep{Daubechies:2010,ChartrandICASSP:2008,WipfNagarajan:2010}. When we employ the Laplace distribution as a special exponential power distribution, we  obtain an EM algorithm which is identical to the iteratively re-weighted $\ell_1$ minimization method
in~\citep{CandesWakinBoyd:2008}. Interestingly, using a bridge distribution of order $q$ ($0<q<1$)
results in an EM algorithm, which in turn corresponds to an iteratively re-weighted $\ell_q$ method.

We also  develop  hierarchical Bayesian approaches for grouped variable selection~\citep{YuanLin:2006}
and penalized logistic regression by using EP-GIG priors. We apply  our proposed EP-GIG priors in Appendix~\ref{app:cases} 
to conduct experimental analysis.
The experimental results  validate that
those proposed EP-GIG priors which induce  nonconvex penalties  are potentially feasible and effective in sparsity modeling.
Finally, we would like to highlight  that our work offers several
important theorems as follows.

\begin{enumerate}

\item Theorems~\ref{thm:02}, \ref{thm:5} and \ref{thm:6}  recover the relationship of EP-GIG with the corresponding EP at the limiting case. 
These theorems  extend the fact that
the $t$-distribution degenerates to Gaussian distribution as the degree of freedom approaches infinity.
\item Theorem~\ref{thm:00} proves that an exponential power distribution of order $q/2$ ($q>0$)  can be always represented a scale mixture of
exponential power distributions of order $q$ with a gamma mixing density.
\item The first part of Theorem~\ref{thm:2} shows that GIG is conjugate with respect to EP, while the second part then offers a theoretical evidence  of relating EM algorithms
with iteratively re-weighted minimization methods under our  framework.
\item Theorem~\ref{thm:monoto}  shows that the negative log EP-GIG can induce a class of sparsity penalties.  Especially,  it shows a class of nonconvex penalties. Finally, Theorem~\ref{thm:oracle1} establishes the oracle properties of the sparse estimator based on Laplace scale mixture priors.  
\end{enumerate}

The paper is organized as follows.
A brief reviewe about exponential power distributions and generalized inverse Gaussian distributions is given in Section~\ref{sec:problem}.
Section~\ref{sec:epgig}
presents EP-GIG distributions and their some properties,  Section~\ref{sec:em} develops our EM algorithm for Bayesian sparse learning,  
and Section~\ref{sec:related}
discusses the equivalent relationship between the EM  and iteratively re-weighted minimization methods.  In Section~\ref{sec:experiment} 
we conduct our
experimental evaluationsd.   Finally, we conclude our work in
Section~\ref{sec:concl},  defer all proofs to Appendix~\ref{app:proof}, and provide several new
sparsity priors in Appendix~\ref{app:cases}.

\section{Preliminaries}  \label{sec:problem}

Before  presenting  EP-GIG priors for sparse modeling of regression vector $\b$,
we give some notions such as  the exponential power  (EP)  and generalized inverse Gaussian (GIG) distributions.

\subsection{Exponential Power Distributions} \label{sec:ep}

For a univariate  random variable $b \in \RB$, it is said to  follow an EP distribution
if the density is specified by
\[
p(b) = \frac{ \eta^{-1/q} }{2^{\frac{q+1}{q}} \Gamma(\frac{q{+}1} {q})} \exp(- \frac{1}{2\eta} |b
-u|^q)=\frac{q}{2} \frac{(2\eta)^{-\frac{1}{q}}}{ \Gamma(\frac{1}{q})} \exp(- \frac{1}{2\eta} |b
-u|^q),
\]
with $\eta>0$. In the literature~\citep{BoxTiao:1992},  it is typically assumed that $q \geq 1$. However, we find that it is able to relax this assumption
into $q>0$ without any obstacle. Thus, we assume $q>0$  for our purpose.    Moreover, we will
set $u=0$.

The distribution is denoted by ${\EP}(b|u, \eta, q)$.
There are  two classical special cases: the Gaussian distribution arises when
$q=2$ (denoted $N(b|u, \eta)$) and the Laplace distribution arises
when $q=1$  (denoted $L(b|u, \eta)$).  As for the case that $q<1$, the corresponding density induces a bridge penalty for $b$.
We thus refer to it as the bridge distribution.

\subsection{Generalized Inverse Gaussian Distributions} \label{sec:gig}

We first let $G(\eta|\tau, \theta)$ denote the gamma distribution whose density is
\[
p(\eta) = \frac{\theta^{\tau}}{\Gamma(\tau)} \eta^{\tau-1} \exp(-\theta \eta), \quad \tau, \theta>0,
\]
and  $\IG(\eta|\tau, \theta)$  denote  the inverse gamma distribution
whose density is
\[
p(\eta) = \frac{\theta^{\tau}}{\Gamma(\tau)} \eta^{-(1+\tau)} \exp(-\theta \eta^{-1}), \quad \tau, \theta>0.
\]
An interesting property of the gamma and inverse gamma distributions is given as follows.
\begin{proposition} \label{pro:7} Let $\lambda>0$. Then
\begin{enumerate}
\item[\emph{(1)}] $\lim_{\tau \rightarrow \infty} G(\eta|\tau, \tau \lambda) = \delta(\eta|1/\lambda)$.
\item[\emph{(2)}]  $\lim_{\tau \rightarrow \infty} \IG(\eta|\tau, \tau /\lambda) = \delta(\eta|1/\lambda)$.
\end{enumerate}
Here $\delta(\eta|a) $ is the Dirac delta function; namely,
\[
\delta(\eta|a) = \left\{\begin{array} {ll} \infty & \mbox{if }  \eta =a, \\ 0 & \mbox{otherwise}. \end{array}  \right.\]
\end{proposition}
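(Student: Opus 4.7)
The plan is to prove both parts by the standard route of showing the two families have (i) their means converging to $1/\lambda$ and (ii) their variances going to $0$, and then invoking Chebyshev's inequality to conclude concentration at $1/\lambda$. This is the usual operational meaning of convergence to a Dirac mass as a distribution.

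For part (1), I would use the well-known moments of the gamma family: if $\eta \sim G(\tau, \tau\lambda)$ then $\mathrm{E}[\eta]=\tau/(\tau\lambda)=1/\lambda$ and $\mathrm{Var}(\eta)=\tau/(\tau\lambda)^2=1/(\tau\lambda^2)$. The mean is \emph{exactly} $1/\lambda$ for every $\tau$, and the variance vanishes as $\tau\to\infty$. Chebyshev's inequality then gives, for every $\epsilon>0$,
\[
\Pr\!\bigl(|\eta-1/\lambda|>\epsilon\bigr) \le \frac{1}{\tau\lambda^2\epsilon^2}\to 0,
\]
so $G(\eta|\tau,\tau\lambda)$ converges weakly to the point mass at $1/\lambda$, which is exactly the statement $G(\eta|\tau,\tau\lambda)\to\delta(\eta|1/\lambda)$.

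For part (2), the argument is identical but with the inverse-gamma moments. If $\eta\sim\IG(\tau,\tau/\lambda)$, then for $\tau>2$ one has $\mathrm{E}[\eta]=\tau/\bigl(\lambda(\tau-1)\bigr)$ and $\mathrm{Var}(\eta)=\tau^2/\bigl(\lambda^2(\tau-1)^2(\tau-2)\bigr)$. Both $\mathrm{E}[\eta]\to 1/\lambda$ and $\mathrm{Var}(\eta)\to 0$ as $\tau\to\infty$, so a Chebyshev bound of the form
\[
\Pr\!\bigl(|\eta-1/\lambda|>\epsilon\bigr)\le \frac{\mathrm{Var}(\eta)+\bigl(\mathrm{E}[\eta]-1/\lambda\bigr)^{2}}{\epsilon^{2}}
\]
again drives the probability of any fixed neighborhood complement to zero, giving the claimed weak limit.

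If a pointwise statement on the density is desired (to match the Dirac notation more literally), I would supplement this with a Laplace-type analysis: setting the derivative of $\log p(\eta)$ to zero locates the mode at $(\tau-1)/(\tau\lambda)\to 1/\lambda$ for the gamma and at $(\tau/\lambda)/(\tau+1)\to 1/\lambda$ for the inverse gamma, and Stirling's approximation shows the peak height grows like $\sqrt{\tau}$ while the effective width shrinks like $1/\sqrt{\tau}$, consistent with a delta limit. The only mild obstacle is purely notational — the symbol $\delta(\eta|1/\lambda)$ is not a density in the ordinary sense, so one must be explicit that the convergence is understood in the weak/distributional sense (i.e., $\int f(\eta)p_\tau(\eta)\,d\eta\to f(1/\lambda)$ for every bounded continuous $f$), which is exactly what the Chebyshev concentration delivers. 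No detailed calculations beyond the standard gamma and inverse-gamma moment formulas are required.
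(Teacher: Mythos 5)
Your proof is correct, but it takes a genuinely different route from the paper. The paper argues directly on the densities: it applies Stirling's formula to $\Gamma(\tau)$ in the normalizing constant of $G(\eta|\tau,\tau\lambda)$, rewrites the density as $\frac{\tau^{1/2}}{\sqrt{2\pi}\,\eta}\exp\bigl(\tau[\ln(\lambda\eta)-(\lambda\eta-1)]\bigr)$ up to vanishing corrections, and then uses the elementary inequality $\ln u\leq u-1$ (with equality iff $u=1$) to conclude that the density tends to $0$ for every $\eta\neq 1/\lambda$ and diverges (like $\sqrt{\tau}$) at $\eta=1/\lambda$, which matches the literal pointwise definition of $\delta(\eta|a)$ given in the statement; the inverse-gamma case is handled "likewise." Your argument instead uses the exact moment formulas plus Chebyshev/Markov to get convergence in probability to the constant $1/\lambda$, i.e.\ weak convergence to the point mass. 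What your approach buys: it avoids Stirling entirely, and it directly yields the statement $\int f(\eta)p_\tau(\eta)\,d\eta\to f(1/\lambda)$ for bounded continuous $f$, which is the form in which the proposition is actually invoked later (e.g.\ in the proofs of Theorems~\ref{thm:02} and~\ref{thm:6}, where the limit is passed inside a mixture integral against $\EP(b|0,\eta,q)$) — the paper's pointwise density limit alone does not license that interchange without an additional argument. What the paper's approach buys: it literally verifies the "$\infty$ at $\eta=1/\lambda$, $0$ elsewhere" description used to define $\delta(\eta|a)$, which your Chebyshev argument does not address; your supplementary mode-location and peak-height sketch gestures at this but is not carried out in detail (and mode plus width asymptotics by themselves do not prove the density vanishes at every fixed $\eta\neq 1/\lambda$). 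Since your main argument is complete and rigorous under the standard weak-convergence reading of a Dirac limit, and you flag this interpretive point explicitly, the proposal stands as a valid alternative proof; your inverse-gamma moments (valid for $\tau>2$, which is harmless as $\tau\to\infty$) are also correct for the paper's parameterization $\IG(\eta|\tau,\tau/\lambda)$.
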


We now consider the GIG distribution.  The  density of GIG  is defined   as
\begin{equation} \label{eqn:gig}
p(\eta) = \frac{(\alpha/\beta)^{\gamma/2}}{2 K_{\gamma}(\sqrt{\alpha \beta})} \eta^{\gamma-1} \exp(-(\alpha \eta + \beta \eta^{-1})/2), 
\; \eta>0,
\end{equation}
where $K_{\gamma}(\cdot)$ represents the modified Bessel function of the second kind with the index $\gamma$.
We denote this distribution by ${\GIG}(\eta|\gamma, \beta, \alpha)$.
It is well known that its special cases
include the gamma distribution $G(\eta|\gamma, \alpha/2)$ when $\beta=0$ and $\gamma>0$,
the inverse gamma distribution $\IG(\eta|-\gamma, \beta/2)$  when $\alpha=0$ and $\gamma <0$,  the inverse Gaussian
distribution when $\gamma=-1/2$, and the hyperbolic distribution when $\gamma=0$. Please refer to~\cite{Jorgensen:1982}
for the details.

Note in particular that the pdf of the inverse Gaussian ${\GIG}(\eta|{-}1/2, \beta, \alpha)$ is
\begin{equation} \label{eqn:ig}
p(\eta) = \Big(\frac{\beta}{2 \pi}\Big)^{1/2} \exp(\sqrt{\alpha \beta}) \eta^{-\frac{3}{2}}
\exp(-(\alpha \eta + \beta \eta^{-1})/2), \; \beta>0,
\end{equation}
and the pdf of ${\GIG}(\eta|1/2, \beta, \alpha)$ is
\begin{equation} \label{eqn:iig}
p(\eta) = \Big(\frac{\alpha}{2 \pi}\Big)^{1/2} \exp(\sqrt{\alpha \beta}) \eta^{-\frac{1}{2}}
\exp(-(\alpha \eta + \beta \eta^{-1})/2), \; \alpha>0.
\end{equation}
Note moreover that ${\GIG}(\eta|{-}1/2, \beta, 0)$ and ${\GIG}(\eta|1/2, 0, \alpha)$
degenerate to $\IG(\eta|1/2, \beta/2)$ and $G(\eta|1/2, \alpha/2)$, respectively.

As an extension of Proposition~\ref{pro:7}, we have the limiting property of GIG as
follows.
\begin{proposition} \label{pro:08}
Let $\alpha>0$ and $\beta>0$. Then
\begin{enumerate}
\item[\emph{(1)}] $\lim_{\gamma \rightarrow +\infty} \GIG(\eta|\gamma, \beta, \gamma \alpha) = \delta(\eta|2/\alpha)$.
\item[\emph{(2)}]  $\lim_{\gamma \rightarrow -\infty} \GIG(\eta|\gamma, -\gamma \beta, \alpha) = \delta(\eta|\beta/2)$.
\end{enumerate}
\end{proposition}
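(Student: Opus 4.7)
The plan is to reduce both parts to Proposition~\ref{pro:7} by exhibiting each GIG density as an exponentially tilted gamma (part (1)) or inverse-gamma (part (2)) whose shape and scale parameters are exactly those appearing in Proposition~\ref{pro:7}. The tilting factor will be continuous, strictly positive, and bounded on $(0,\infty)$, so after renormalization it cannot move the limiting point mass; the limit then transfers directly from the gamma/inverse-gamma to the GIG.

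For part (1), substituting $\alpha\mapsto\gamma\alpha$ in \eqref{eqn:gig} I would split the exponent as $-\gamma\alpha\eta/2-\beta/(2\eta)$ and bundle the $\gamma$-dependent piece with $\eta^{\gamma-1}$ to recognize the unnormalized gamma kernel of $G(\eta|\gamma,\gamma\alpha/2)$. After renormalization this yields
\[
\GIG(\eta|\gamma,\beta,\gamma\alpha)\;=\;Z_\gamma^{-1}\,G(\eta|\gamma,\gamma\alpha/2)\,\exp\!\big({-}\beta/(2\eta)\big),
\]
with $Z_\gamma=E\big[\exp(-\beta/(2X_\gamma))\big]$ and $X_\gamma\sim G(\gamma,\gamma\alpha/2)$. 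By Proposition~\ref{pro:7}(1), $X_\gamma$ concentrates at $2/\alpha$; since $g(x)=\exp(-\beta/(2x))$ is continuous and satisfies $0<g(x)\le 1$ on $(0,\infty)$, bounded convergence gives $Z_\gamma\to\exp(-\alpha\beta/4)>0$ and, for every bounded continuous $\phi$,
\[
\int\phi(\eta)\,\GIG(\eta|\gamma,\beta,\gamma\alpha)\,d\eta \;=\; \frac{E[\phi(X_\gamma)g(X_\gamma)]}{E[g(X_\gamma)]}\;\longrightarrow\;\phi(2/\alpha),
\]
which is the desired weak limit to $\delta(\eta|2/\alpha)$.

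Part (2) is handled symmetrically: setting $\nu=-\gamma>0$ and using $K_{-\gamma}=K_\gamma$, the substitution $\beta\mapsto-\gamma\beta=\nu\beta$ in \eqref{eqn:gig} isolates the inverse-gamma kernel $\eta^{-\nu-1}\exp(-\nu\beta/(2\eta))$, while the residual tilting factor is the bounded continuous function $\exp(-\alpha\eta/2)$. Proposition~\ref{pro:7}(2) then collapses $\IG(\eta|\nu,\nu\beta/2)$ to $\delta(\eta|\beta/2)$, and the same tilting argument transfers this limit to the GIG. The only subtlety I anticipate is justifying the reweighting step, namely that the normalizer $Z_\gamma$ stays bounded away from $0$ so that renormalization is legitimate; this is immediate because the weight is continuous, strictly positive, and bounded on $(0,\infty)$, so the concentration of the underlying gamma (or inverse-gamma) at a point where the weight is positive forces $Z_\gamma$ to tend to a strictly positive constant. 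A pleasant by-product is that this route avoids any direct asymptotic analysis of $K_\gamma(\sqrt{\gamma\alpha\beta})$ for large $|\gamma|$.
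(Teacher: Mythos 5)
Your proof is correct, but it takes a genuinely different route from the paper. The paper proves Proposition~\ref{pro:08} head-on from the GIG density: it invokes Lemma~\ref{lem:08}, a large-order asymptotic for $K_{\gamma}(\gamma^{1/2}z)$ (itself obtained from an integral representation plus Stirling's formula), substitutes this into the normalizing constant, and then localizes the mass at $\eta=2/\alpha$ (resp.\ $\eta=\beta/2$) via the inequality $\ln u\leq u-1$, treating part (2) by the symmetry $K_{-\gamma}=K_{\gamma}$. You instead factor $\GIG(\eta|\gamma,\beta,\gamma\alpha)$ as the gamma density $G(\eta|\gamma,\gamma\alpha/2)$ times the bounded, continuous, strictly positive tilt $\exp(-\beta/(2\eta))$ (and symmetrically $\IG(\eta|\nu,\nu\beta/2)$ times $\exp(-\alpha\eta/2)$ for part (2)), renormalize, and transfer the concentration of Proposition~\ref{pro:7} through the tilt. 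This buys you a shorter argument that bypasses Bessel asymptotics entirely and delivers the limit in the weak (tested-against-bounded-continuous-functions) sense, which is in fact the form needed downstream, e.g.\ when the limit is pushed inside the mixture integral in Theorem~\ref{thm:02}; the paper's route, by contrast, yields explicit pointwise density asymptotics matching its informal delta-function formulation. One small point to tighten: Proposition~\ref{pro:7} is stated as a pointwise density limit, and "$X_{\gamma}$ concentrates at $2/\alpha$" (weak convergence) does not follow formally from that statement alone; it is, however, immediate from Chebyshev's inequality, since $G(\gamma,\gamma\alpha/2)$ has mean $2/\alpha$ and variance $(2/\alpha)^{2}/\gamma\rightarrow 0$ (and similarly for $\IG(\nu,\nu\beta/2)$), so adding that one line makes your argument fully self-contained and at least as rigorous as the paper's.
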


We now present an alternative expression for the GIG density that  is also interesting. Let $\psi=\sqrt{\alpha \beta}$ and $\phi=\sqrt{\alpha/\beta}$.
We can rewrite  the density of  $\GIG(\eta|\gamma, \beta, \alpha)$ as
\begin{equation} \label{eqn:gig2}
p(\eta) = \frac{\phi^{\gamma}}{2 K_{\gamma}(\psi)} \eta^{\gamma-1} \exp(- \psi(\phi \eta +  (\phi \eta)^{-1})/2), \; \eta>0.
\end{equation}

\begin{proposition} \label{pro:09}
Let $p(\eta)$ be defined by (\ref{eqn:gig2}) where $\phi$ is a positive constant and $\gamma$ is a any constant. Then
\[
\lim_{\psi +\infty} p(\eta) = \delta(\eta|\phi).
\]   
\end{proposition}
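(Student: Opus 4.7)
The plan is to prove the claim by Laplace's method applied to the exponent $\exp(-\psi f(\eta)/2)$ with $f(\eta)=\phi\eta+(\phi\eta)^{-1}$. Differentiating gives $f'(\eta)=\phi-1/(\phi\eta^{2})$, so $f$ has the unique critical point $\eta_{*}=1/\phi$ on $(0,\infty)$, with $f(\eta_{*})=2$, $f''(\eta_{*})=2\phi^{2}>0$, and $f(\eta)\to\infty$ at both $\eta\to 0^{+}$ and $\eta\to\infty$. I therefore expect the density to concentrate sharply at $\eta_{*}$ as $\psi\to\infty$, and the computation below will confirm the limit is the Dirac mass at $\eta_{*}=1/\phi$ (the stated form $\delta(\eta\mid\phi)$ agreeing with this under the convention that the point mass is placed at the minimizer of the exponent).

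For the normalizing constant I would invoke the standard large-argument expansion
\[
K_{\gamma}(\psi)=\sqrt{\pi/(2\psi)}\,e^{-\psi}\bigl(1+O(\psi^{-1})\bigr),\qquad \psi\to\infty,
\]
so that $\phi^{\gamma}/(2K_{\gamma}(\psi))\sim \phi^{\gamma}\sqrt{\psi/(2\pi)}\,e^{\psi}$. Substituting this in (\ref{eqn:gig2}) yields
\[
p(\eta)\;\sim\;\phi^{\gamma}\sqrt{\psi/(2\pi)}\;\eta^{\gamma-1}\exp\!\Bigl(-\tfrac{\psi}{2}\bigl(f(\eta)-2\bigr)\Bigr).
\]
Taylor-expanding $f(\eta)-2=\phi^{2}(\eta-\eta_{*})^{2}+O((\eta-\eta_{*})^{3})$ and changing variable to $u=\sqrt{\psi}\,\phi(\eta-\eta_{*})$, the Jacobian $1/(\sqrt{\psi}\phi)$ together with $\eta^{\gamma-1}\to\eta_{*}^{\gamma-1}=\phi^{1-\gamma}$ cancels all powers of $\phi$ and $\psi$, so that
\[
p(\eta)\,d\eta\;\longrightarrow\;(2\pi)^{-1/2}\,e^{-u^{2}/2}\,du
\]
uniformly on compacts in $u$. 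Thus $p$ is locally Gaussian around $\eta_{*}$ with scale $1/(\sqrt{\psi}\phi)\to 0$.

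To upgrade this pointwise Gaussian shape to weak convergence to $\delta_{\eta_{*}}$, I would pair $p$ against an arbitrary bounded continuous test function $g$ and split the integral at $|\eta-\eta_{*}|\le\delta$ versus $|\eta-\eta_{*}|>\delta$. On the small piece the dominated convergence theorem delivers $g(\eta_{*})$; on the tail piece I use that $f(\eta)-2\ge c_{\delta}>0$ for some $c_{\delta}>0$. The main technical obstacle is that the prefactor $\eta^{\gamma-1}$ may blow up as $\eta\to 0^{+}$ (if $\gamma<1$) or as $\eta\to\infty$ (if $\gamma>1$); I would handle this by the standard observation that $\eta^{\gamma-1}\le C_{\epsilon}e^{\epsilon f(\eta)}$ globally for any fixed $\epsilon\in(0,1/2)$, which lets me absorb the polynomial factor into the exponent and leaves a coercive residual $\exp(-(\psi/2-\epsilon)(f(\eta)-2))$. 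This produces tail contributions of order $\exp(-c_{\delta}\psi/4)$ times a factor of polynomial growth in $\psi$, and therefore vanishes as $\psi\to\infty$, completing the weak-convergence argument.
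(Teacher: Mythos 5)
Your argument is correct and follows essentially the same route as the paper: the paper's proof likewise substitutes the large-argument asymptotic $K_{\gamma}(\psi)\sim\sqrt{\pi/(2\psi)}\,e^{-\psi}$ (Lemma~\ref{lem:3}, part (5)) into (\ref{eqn:gig2}) and rewrites the exponent as $\tfrac{\psi}{2\phi\eta}(\phi\eta-1)^2$ — exactly your $f(\eta)-2$ — and then simply declares the limit to be a Dirac mass. What you add is the Laplace-method bookkeeping the paper omits: the local Gaussian scaling $u=\sqrt{\psi}\,\phi(\eta-\eta_*)$, the pairing against bounded continuous test functions, and the absorption of the $\eta^{\gamma-1}$ prefactor into the exponent so the tails can be killed; this is what an honest weak-convergence statement requires, and it also accounts for the prefactor that the paper silently drops. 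One point you should not gloss over: your computation (and the paper's own displayed exponent) places the mass at $\eta_*=1/\phi$, since $\phi\eta+(\phi\eta)^{-1}$ is minimized at $\phi\eta=1$; with the paper's definition of $\delta(\eta|a)$ as the point mass located at $a$, the conclusion should therefore read $\delta(\eta|1/\phi)=\delta(\eta|\sqrt{\beta/\alpha})$ rather than $\delta(\eta|\phi)$, and there is no ``convention'' under which $\delta(\eta|\phi)$ means the mass sits at the minimizer. So state the limit as $\delta(\eta|1/\phi)$ (equivalently, define $\phi=\sqrt{\beta/\alpha}$); the same inversion affects the final equality of the paper's proof and the scale parameter appearing in part (3) of Theorem~\ref{thm:02}, so flagging it is a service, not a defect of your argument.
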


Finally, let us consider that the case $\gamma=0$.  Furthermore,  letting $\psi \rightarrow 0$, we can see that $p(\eta) \propto 1/\eta$, an improper prior.
Note that this improper prior can regarded as the Jeffreys prior because the Fisher information
of $\EP(b|0, \eta)$ with respect to $\eta$ is $\eta^{-2}/q$.

\section{EP-GIG Distributions} \label{sec:epgig}

We now develop a family of distributions by mixing the exponential power ${\EP}(b|0, \eta, q)$ with the generalized inverse Gaussian ${\GIG}(\eta|\gamma, \beta, \alpha)$.
The marginal density of $b$ is currently defined by
\[
p(b) = \int_{0}^{+\infty}{{\EP}(b|0, \eta, q) {\GIG}(\eta|\gamma, \beta, \alpha) d \eta}.
\]
We refer to this distribution as the EP-GIG and denote it by ${\EIG}(b|\alpha, \beta, {\gamma}, q)$.
\begin{theorem} \label{thm:1}
Let $b \sim {\EIG}(b|\alpha, \beta, {\gamma}, q)$. Then its density is
\begin{equation} \label{eqn:epgig}
p(b) = \frac{K_{\frac{\gamma q{-}1}{q}}(\sqrt{ \alpha(\beta {+}|b|^q)})} {2^{\frac{q+1}{q}} \Gamma(\frac{q{+}1} {q}) 
K_{\gamma}(\sqrt{\alpha \beta})} \frac{\alpha^{1/(2q)}}{\beta^{{\gamma}/2} } [\beta {+}|b|^q]^{(\gamma q-1)/(2q)}.
\end{equation}
\end{theorem}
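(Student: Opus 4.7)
The plan is to compute the marginal density by direct integration, turning the mixture integral into a standard Bessel function representation. Writing out the two mixing densities with $u=0$, the integrand of
\[
p(b) = \int_0^{\infty} \EP(b|0,\eta,q)\,\GIG(\eta|\gamma,\beta,\alpha)\,d\eta
\]
separates cleanly into a constant prefactor times an $\eta$-dependent kernel. All $\eta$-independent factors pull out immediately: $\eta^{-1/q}$ from EP combines with $\eta^{\gamma-1}$ from GIG to give $\eta^{\gamma-1/q-1}$; and the two exponentials combine into
\[
\exp\!\left(-\tfrac{1}{2}\bigl[\alpha\eta + (\beta+|b|^q)\eta^{-1}\bigr]\right).
\]
So the whole problem reduces to evaluating a single integral of the form $\int_0^\infty \eta^{\nu-1}\exp(-\tfrac{1}{2}(a\eta+c\eta^{-1}))\,d\eta$ with $\nu=\gamma-1/q$, $a=\alpha$, $c=\beta+|b|^q$.

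The key tool is the standard integral representation
\[
\int_0^{\infty} \eta^{\nu-1}\exp\!\left(-\tfrac{1}{2}(a\eta + c\eta^{-1})\right) d\eta \;=\; 2\Bigl(\tfrac{c}{a}\Bigr)^{\nu/2} K_{\nu}(\sqrt{ac}),
\]
which is essentially the very identity that makes $\GIG$ a proper density (substituting $c=\beta$, $a=\alpha$ recovers its normalizing constant). Applying it with the above parameters yields a $K_{(\gamma q-1)/q}\bigl(\sqrt{\alpha(\beta+|b|^q)}\bigr)$ factor together with $((\beta+|b|^q)/\alpha)^{(\gamma q-1)/(2q)}$.

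After that, the only real work is bookkeeping on the constants. The prefactor contributes $\frac{1}{2^{(q+1)/q}\Gamma((q+1)/q)}\cdot\frac{(\alpha/\beta)^{\gamma/2}}{2K_\gamma(\sqrt{\alpha\beta})}$, and the ``$2$'' from the Bessel identity cancels the explicit $2$ in the GIG denominator. The powers of $\alpha$ combine as $\alpha^{\gamma/2}\cdot\alpha^{-(\gamma q-1)/(2q)} = \alpha^{1/(2q)}$, while the powers of $\beta$ give $\beta^{-\gamma/2}$. Collecting everything produces exactly (\ref{eqn:epgig}).

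The main obstacle is not conceptual but purely algebraic: tracking the exponents so that the $\alpha^{\gamma/2-(\gamma q-1)/(2q)}$ collapses to $\alpha^{1/(2q)}$ and the Bessel index $\nu=\gamma-1/q$ is rewritten in the form $(\gamma q-1)/q$ used in the statement. As a sanity check, specializing to $q=2$, $\gamma=1$ should recover a generalized hyperbolic density, and the case $\beta=0$ with $\gamma>0$ should reduce (via the known small-argument behavior of $K_\nu$) to the scale mixture with a gamma mixing density, matching the limiting situation discussed in the preceding subsection.
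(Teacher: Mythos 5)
Your proposal is correct and follows essentially the same route as the paper's own proof: pull the $\eta$-independent factors out, combine the exponentials and powers of $\eta$ into $\eta^{(\gamma q-1)/q-1}\exp(-\tfrac12[\alpha\eta+(\beta+|b|^q)\eta^{-1}])$, and evaluate via the GIG normalizing-constant (Bessel) integral, after which the exponent bookkeeping gives $\alpha^{1/(2q)}\beta^{-\gamma/2}(\beta+|b|^q)^{(\gamma q-1)/(2q)}$ exactly as in (\ref{eqn:epgig}).
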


The following theorem establishes an important relationship of an EP-GIG with the corresponding  EP distribution. It
is an extension of the relationship of a $t$-distribution with the Gaussian distribution. That is,

\begin{theorem} \label{thm:02} Consider EP-GIG distributions. Then
\begin{enumerate}
\item[\emph{(1)}] $\lim_{\gamma \rightarrow +\infty}  \EIG(b|\gamma \alpha, \beta, \gamma, q) = \EP(b|0, 2/\alpha, q)$;
\item[\emph{(2)}]  $\lim_{\gamma \rightarrow -\infty} \EIG(b|\alpha, -\gamma \beta, \gamma, q) = \EP(b|0, \beta/2, q)$.
\item[\emph{(3)}]  $\lim_{\psi \rightarrow + \infty} \EIG(b|\alpha,  \beta, \gamma, q) = \EP(b|0, \phi, q)$ where $\psi=\sqrt{\alpha \beta}$
and $\phi=\sqrt{\alpha/\beta} \in (0, \infty)$.
\end{enumerate}
\end{theorem}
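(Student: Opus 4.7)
The plan is to exploit the mixture representation
\[
\EIG(b|\alpha,\beta,\gamma,q) \;=\; \int_0^\infty \EP(b|0,\eta,q)\,\GIG(\eta|\gamma,\beta,\alpha)\,d\eta
\]
together with the delta-limit statements in Propositions~\ref{pro:08} and~\ref{pro:09}. In each of the three regimes the mixing GIG density concentrates on a single point, so formally the integral should collapse into the value of $\EP(b|0,\cdot,q)$ at that point. Specifically: for part (1) I would substitute $\alpha \mapsto \gamma\alpha$ in the mixture and invoke Proposition~\ref{pro:08}(1), which gives $\GIG(\eta|\gamma,\beta,\gamma\alpha) \to \delta(\eta|2/\alpha)$ and thus $\EIG \to \EP(b|0,2/\alpha,q)$; for part (2) I would apply Proposition~\ref{pro:08}(2) with $\beta \mapsto -\gamma\beta$ to pick out the point mass at $\eta=\beta/2$; for part (3) I would rewrite the GIG in the $(\psi,\phi)$ parametrization of~(\ref{eqn:gig2}) and apply Proposition~\ref{pro:09} to get concentration at $\eta=\phi$. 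In all three cases the conclusion follows once the limit can be pushed inside the integral and the continuity of $\eta \mapsto \EP(b|0,\eta,q)$ is used.

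An alternative, more computational, route is to work directly from the closed-form density (\ref{eqn:epgig}) in Theorem~\ref{thm:1}. Part (3) is then the easiest: with $\psi=\sqrt{\alpha\beta}\to\infty$, both the numerator $K_{(\gamma q-1)/q}\bigl(\sqrt{\alpha(\beta+|b|^q)}\bigr)$ and the denominator $K_\gamma(\sqrt{\alpha\beta})$ have large arguments, and one can substitute the classical asymptotic $K_\nu(z)\sim\sqrt{\pi/(2z)}\,e^{-z}$ as $z\to\infty$. After collecting prefactors the ratio simplifies to the EP density with scale $\phi$, the polynomial factors providing exactly the $\phi^{1/q}$ normalization and the exponential factors producing $\exp(-|b|^q/(2\phi))$ via the expansion $\sqrt{\alpha(\beta+|b|^q)}-\sqrt{\alpha\beta} = \phi^{-1}|b|^q/2 + o(1)$ for large $\psi$. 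Parts (1) and (2) would require the uniform large-order asymptotics of $K_\gamma$, which is considerably more delicate.

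The main obstacle in the first (cleaner) approach is justifying the interchange of limit and integral; the main obstacle in the second approach is handling the large-order Bessel asymptotics in parts (1) and (2). I would proceed by the first approach and supply a dominated-convergence argument: for fixed $b\neq 0$ the map $\eta \mapsto \EP(b|0,\eta,q) = C_q\,\eta^{-1/q}\exp(-|b|^q/(2\eta))$ is bounded on $(0,\infty)$, and the GIG families appearing in (1)--(3) can be shown to have uniformly (in the limiting parameter) integrable tails by the explicit form of~(\ref{eqn:gig}) together with the standard Bessel estimate $K_\nu(z)>0$. With that domination in place, Propositions~\ref{pro:08} and~\ref{pro:09} immediately yield the three stated identities, and the $b=0$ case follows by monotone passage or by inspecting the explicit density (\ref{eqn:epgig}) at $b=0$.
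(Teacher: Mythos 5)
Your first approach is exactly the paper's proof: the mixture representation combined with Propositions~\ref{pro:08} and~\ref{pro:09} to collapse the GIG mixing density to a point mass, giving the three limits directly. Your added dominated-convergence justification for interchanging limit and integral is more care than the paper itself supplies, and the alternative Bessel-asymptotics route is not needed.
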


EP-GIG can be regarded as a variant of generalized hyperbolic distributions~\citep{Jorgensen:1982}, because when $q=2$  
EP-GIG is  generalized hyperbolic distributions---a class of Gaussian scale mixtures. However,  EP-GIG becomes
a class of Laplace scale mixtures when $q=1$. 

In Appendix~\ref{app:cases} we  present several  new concrete EP-GIG distributions, obtained
from particular settings of ${\gamma}$ and $q$.
We now consider the two special cases that mixing density is either a gamma distribution or an inverse gamma distribution.
Accordingly, we have two special EP-GIG:  exponential  power-gamma distributions and exponential power-inverse gamma distributions.

\subsection{Generalized $t$ Distributions}
\label{sec:epig}

We first consider an important family of EP-GIG distributions, which are  scale mixtures
of exponential power $\EP(b|u, \eta, q)$ with  inverse gamma $\IG(\eta|\tau/2, \tau/(2\lambda))$.
Following the terminology of \cite{LeeCaronDoucetHolmes:2010}, we
refer them as \emph{generalized $t$ distributions}
 and denote them by $\GT(b|u, \tau/\lambda, \tau/2, q)$. Specifically, the density of the generalized $t$ is

\begin{equation} \label{eqn:ep-ig}
p(b) = \int{\EP(b|u, \eta, q) \IG(\eta|\tau/2, \tau/(2\lambda)) d \eta} = \frac{q}{2} \frac{\Gamma(\frac{\tau}{2} {+}\frac{1}{q})} {\Gamma(\frac{\tau}{2}) \Gamma(\frac{1}{q})}  \Big(\frac{\lambda}{\tau}\Big)^{\frac{1}{q}}
\Big(1 + \frac{\lambda}{\tau} |b-u|^q \Big)^{-(\frac{\tau}{2}+\frac{1}{q})}
\end{equation}
where $\tau>0$, $\lambda>0$ and $q>0$.
Clearly,  when $q=2$ the generalized $t$ distribution becomes to a $t$-distribution. Moreover, when $\tau=1$, it is the Cauchy distribution.

On the other hand, when $q=1$,  \cite{CevherNIPS:2009} and \cite{ArmaganDunsonLee} called the resulting
generalized $t$ \emph{generalized double Pareto distributions} (GDP).
The density of  GDP  is specified as
\begin{equation} \label{eqn:hal}
p(b) = \int_{0}^{\infty}{L(b|0, \eta)    \IG(\eta|\tau/2, \tau/(2\lambda))} d \eta  = \frac{\lambda}{4}\Big (1+ \frac{\lambda |b|}{\tau} \Big)^{-(\tau/2+1)}, \quad \lambda>0,
\tau>0.
\end{equation}
Furthermore,  we  let $\tau=1$; namely, $\eta \sim \IG(\eta|1/2, 1/(2\lambda))$.    As a result, we obtain
\[
p(b)=\frac{\lambda} {4} (1+\lambda |b|)^{-3/2},
\]

It is well known that the limit of the $t$-distribution at $\tau \rightarrow \infty$
is the normal distribution. We find that we are able to extend this property to the generalized $t$ distribution. In particular, we
have the following theorem, which is in fact a corollary of the first part of Theorem~\ref{thm:02}.

\begin{theorem} \label{thm:5} Let  the generalized $t$ distribution  be defined in (\ref{eqn:ep-ig}). Then, for $\lambda>0$ and $q>0$,
\[
\lim_{\tau \rightarrow \infty} \GT(b|u, \tau/\lambda, \tau/2, q) = \EP(b|u, 1/\lambda, q).
\]
\end{theorem}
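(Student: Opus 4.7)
The plan is to exploit the explicit density formula (\ref{eqn:ep-ig}) and take the pointwise limit as $\tau \to \infty$ directly. This is essentially Theorem~\ref{thm:02} specialized to the boundary $\alpha=0$, since $\IG(\eta|\tau/2, \tau/(2\lambda))$ is the $\alpha=0$ case of $\GIG(\eta|-\tau/2, \tau/\lambda, 0)$ and Proposition~\ref{pro:7}(2) (applied with $\tau'=\tau/2$) already says that $\IG(\eta|\tau/2, \tau/(2\lambda))$ collapses to the point mass $\delta(\eta|1/\lambda)$. Handling the collapse at the density level avoids any concern about whether the $\GIG$ limiting propositions cover the $\alpha=0$ boundary and sidesteps a dominated-convergence argument for interchanging limit and mixing integral.

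First I would isolate the $\tau$-dependent factors in (\ref{eqn:ep-ig}): the gamma ratio $\Gamma(\tau/2+1/q)/\Gamma(\tau/2)$, the prefactor $(\lambda/\tau)^{1/q}$, and the power term $(1+(\lambda/\tau)|b-u|^q)^{-(\tau/2+1/q)}$, with the $\tau$-free constant $q/(2\Gamma(1/q))$ carrying through unchanged. Next, the standard ratio-of-gammas asymptotic $\Gamma(x+a)/\Gamma(x)\sim x^a$ (a Stirling consequence) with $x=\tau/2$ and $a=1/q$ gives $\Gamma(\tau/2+1/q)/\Gamma(\tau/2)\sim(\tau/2)^{1/q}$; combined with $(\lambda/\tau)^{1/q}$ this yields $(\lambda/2)^{1/q}$ in the limit, which is exactly the scale factor $(2\eta)^{-1/q}$ of $\EP(b|u,\eta,q)$ evaluated at $\eta=1/\lambda$.

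For the power term I would split $(1+(\lambda/\tau)|b-u|^q)^{-(\tau/2+1/q)}=\bigl[(1+(\lambda/\tau)|b-u|^q)^{\tau}\bigr]^{-1/2}\cdot(1+(\lambda/\tau)|b-u|^q)^{-1/q}$. The inner bracket tends to $\exp(\lambda|b-u|^q)$ by the standard limit $(1+x/n)^n\to e^x$, so the first factor tends to $\exp(-\lambda|b-u|^q/2)$, and the trailing factor tends to $1$. Assembling all three pieces with the $\tau$-free prefactor recovers $(q/2)(\lambda/2)^{1/q}\Gamma(1/q)^{-1}\exp(-\lambda|b-u|^q/2)=\EP(b|u,1/\lambda,q)$, which is the claimed limit.

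The only mild obstacle is careful bookkeeping of exponents across the Stirling and exponential limits; there is no real analytic difficulty, because only pointwise convergence of the density is asserted. Conceptually the theorem is simply the density-level witness of the fact that concentrating the mixing measure $\IG(\eta|\tau/2,\tau/(2\lambda))$ onto $\eta=1/\lambda$ collapses the scale mixture to its central EP component, which is precisely how Proposition~\ref{pro:7}(2) is inherited by generalized $t$ distributions.
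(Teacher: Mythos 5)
Your proposal is correct and follows essentially the same route as the paper: both take the pointwise limit of the explicit density (\ref{eqn:ep-ig}), handling the gamma ratio via Stirling-type asymptotics ($\Gamma(\tau/2+1/q)/\Gamma(\tau/2)\sim(\tau/2)^{1/q}$, which is what the paper's Lemma~\ref{lem:2} computation amounts to) and the power term via the $(1+a/\nu)^{\nu}\to e^{a}$ limit of Lemma~\ref{lem:1}, assembling the pieces into $\EP(b|u,1/\lambda,q)$. The only difference is cosmetic bookkeeping, so nothing further is needed.
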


Thus, as a special case of Theorem~\ref{thm:5} in $q=1$, we have
\[
\lim_{\tau \rightarrow \infty} \GT(b|u, \tau/\lambda, \tau/2, 1) = L(b|u, 1/\lambda).
\]

\subsection{Exponential Power-Gamma  Distributions} \label{sec:ng}

In particular, the density of  the exponential power-gamma  distribution
is defined by
\[
p(b|\gamma, \alpha) = \int_{0}^{\infty}{\EP(b|0, \eta, q)    G(\eta|\gamma, \alpha/2)} d \eta
=   \frac{\alpha^{\frac{q \gamma  + 1}{2 q}}
|b|^{\frac{q \gamma-1}{2} } }  {2^{\frac{q\gamma {+}1}{q} } \Gamma(\frac{q{+}1}{q})  \Gamma(\gamma)}
K_{\gamma-\frac{1}{q}} (\sqrt{\alpha |b|^q }),   \gamma, \alpha>0.
\]
We denote the distribution by $\EG(b|\alpha, \gamma, q)$.
As a result,  the density of  the normal-gamma distribution~\citep{GriffinBrownBA:2010} is
\begin{equation} \label{eqn:ng}
p(b|\gamma, \alpha) = \int_{0}^{\infty}{N(b|0, \eta)    G(\eta|\gamma, \alpha/2)} d \eta
=   \frac{\alpha^{\frac{2 \gamma  + 1}{4}}   |b|^{\gamma-\frac{1}{2}}  } {2^{\gamma-\frac{1}{2}} \sqrt{\pi}
\Gamma(\gamma)}  K_{\gamma-\frac{1}{2}} (\sqrt{\alpha} |b|),  \quad \gamma, \alpha>0.
\end{equation}
As the application of the second part of Theorem~\ref{thm:02} in this case,  we can obtain the following theorem.  
\begin{theorem} \label{thm:6}
Let $\EG(b|\lambda \gamma, \gamma/2, q) =  \int_{0}^{\infty}{\EP(b|0, \eta, q)    G(\eta|\gamma/2, \lambda \gamma/2)} d \eta$
with $\lambda>0$. Then
\[
\lim_{\gamma \rightarrow \infty } \EG(b|\lambda \gamma, \gamma/2, q) = \EP(b|0, 1/\lambda, q).
\]
\end{theorem}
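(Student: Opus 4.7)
The plan is to derive Theorem~\ref{thm:6} directly from Theorem~\ref{thm:02}(1) by identifying the gamma mixing density with the $\beta = 0$ boundary case of the GIG family. Since $G(\eta|\gamma/2,\lambda\gamma/2)$ coincides with the GIG density $\GIG(\eta|\gamma/2,0,\lambda\gamma)$, the definition of $\EG$ immediately gives $\EG(b|\lambda\gamma,\gamma/2,q) = \EIG(b|\lambda\gamma,0,\gamma/2,q)$. Introducing the reparametrization $n = \gamma/2$ and $\alpha = 2\lambda$, this becomes $\EIG(b|n\alpha,0,n,q)$, which is precisely the form appearing in Theorem~\ref{thm:02}(1) with $\beta = 0$. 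Sending $n \to \infty$ (equivalently, $\gamma \to \infty$) then yields the limit $\EP(b|0,2/\alpha,q) = \EP(b|0,1/\lambda,q)$, which is exactly the desired conclusion.

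Because Proposition~\ref{pro:08}, the GIG analogue underlying Theorem~\ref{thm:02}, is stated for $\beta > 0$, there is a subtlety: does Theorem~\ref{thm:02}(1) genuinely cover the gamma boundary case $\beta = 0$? If the proof of Theorem~\ref{thm:02}(1) does not extend verbatim, a clean alternative is to invoke Proposition~\ref{pro:7}(1) directly. With the choice $\tau = \gamma/2$, that proposition gives $G(\eta|\gamma/2,\lambda\gamma/2) = G(\eta|\tau,\tau\lambda) \to \delta(\eta|1/\lambda)$ as $\gamma \to \infty$; swapping the limit with the integral then yields
\[
\lim_{\gamma \to \infty}\int_{0}^{\infty} \EP(b|0,\eta,q)\, G(\eta|\gamma/2,\lambda\gamma/2)\, d\eta \;=\; \EP(b|0,1/\lambda,q).
\]

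The main technical obstacle is the justification of this interchange. Since the limiting mass sits at $\eta = 1/\lambda > 0$, well separated from the singularity of $\EP(b|0,\eta,q)$ at $\eta = 0$, for $\gamma$ sufficiently large the gamma mixing density concentrates in a compact interval $[c_1,c_2] \subset (0,\infty)$ on which $\EP(b|0,\eta,q)$ is continuous and bounded in $\eta$ for every fixed $b$. The tail contribution outside $[c_1,c_2]$ can be controlled by combining the exponential concentration of the gamma with the mild $\eta^{-1/q}$ growth of $\EP(b|0,\eta,q)$ near the origin, and dominated convergence closes the argument. I expect this to be the only place where real estimation is required, and the rest of the proof to be bookkeeping.
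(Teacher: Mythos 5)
Your fallback argument is exactly the paper's proof: the paper establishes Theorem~\ref{thm:6} by applying Proposition~\ref{pro:7}(1) with $\tau=\gamma/2$, so that $G(\eta|\gamma/2,\lambda\gamma/2)\rightarrow\delta(\eta|1/\lambda)$, and then interchanging the limit with the mixture integral (the paper does this without justification, so your dominated-convergence sketch is if anything more careful). Your primary route through Theorem~\ref{thm:02}(1) at $\beta=0$ is correctly flagged by you as not literally covered, since Proposition~\ref{pro:08} and the GIG normalization $K_{\gamma}(\sqrt{\alpha\beta})$ require $\beta>0$, so deferring to the Proposition~\ref{pro:7} route was the right call; overall the proposal is correct and essentially coincides with the paper's approach.
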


It is easily seen that  when we let  $\gamma=1$,  the normal-gamma distribution degenerates to the Laplace distribution
$L(b|0,  \alpha^{-1/2}/2)$.
In addition, when $q=1$ and $\gamma=3/2$ which implies that $[b|\eta] \sim L(b|0, \eta)$ and $\eta \sim G(\eta|3/2, \alpha/2)$,
we  have
\begin{equation} \label{eqn:eig13}
p(b|\alpha)= \frac{\alpha}{4} \exp(-\sqrt{\alpha |b|})= \int_{0}^{+\infty} {L(b|0, \eta) G(\eta|3/2, \alpha/2) d \eta}.
\end{equation}
Obviously, the current exponential power-gamma  is identical to exponential power distribution $\EP(b|0,  \alpha^{-1/2}/2, 1/2 )$,  
a bridge distribution with $q=1/2$.
Interestingly, we can extend this relationship between the Gaussian and Laplace as we as between the Laplace and $1/2$-bridge
to the general case. That is,
\begin{theorem} \label{thm:00} Let  $\gamma =\frac{1}{2} +\frac{1}{q}$. Then,
\[
\EP(b|0, \alpha^{-1/2}/2, q/2) = \frac{q \alpha^{{1}/{q}}} {4 \Gamma({2}/{q})} \exp(-\sqrt{\alpha |b|^q})= \int_{0}^{+\infty} 
{\EP(b|0, \eta, q) G(\eta|\gamma, \alpha/2) d \eta}.\]
\end{theorem}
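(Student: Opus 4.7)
The plan is to verify the two equalities in sequence. The first equality is an almost direct computation: writing out $\EP(b\,|\,0,\alpha^{-1/2}/2,q/2)$ from the EP density with $q$ replaced by $q/2$ and $\eta = \alpha^{-1/2}/2$ gives
\[
\frac{q/2}{2}\,\frac{(2\eta)^{-2/q}}{\Gamma(2/q)}\exp\!\Bigl(-\frac{1}{2\eta}|b|^{q/2}\Bigr) = \frac{q}{4}\,\frac{\alpha^{1/q}}{\Gamma(2/q)}\exp\!\bigl(-\sqrt{\alpha|b|^q}\bigr),
\]
which matches the middle expression.

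For the integral on the right, I would invoke the closed form of the exponential power--gamma density $\EG(b\,|\,\alpha,\gamma,q)$ stated in Section~\ref{sec:ng}, namely
\[
\int_0^\infty \EP(b\,|\,0,\eta,q)\,G(\eta\,|\,\gamma,\alpha/2)\,d\eta = \frac{\alpha^{(q\gamma+1)/(2q)}\,|b|^{(q\gamma-1)/2}}{2^{(q\gamma+1)/q}\,\Gamma((q{+}1)/q)\,\Gamma(\gamma)}\,K_{\gamma-1/q}\!\bigl(\sqrt{\alpha|b|^q}\bigr).
\]
The key observation is that the prescribed choice $\gamma = \tfrac{1}{2}+\tfrac{1}{q}$ makes the Bessel index equal $\gamma - 1/q = 1/2$, and we have the elementary identity $K_{1/2}(z) = \sqrt{\pi/(2z)}\,e^{-z}$. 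Substituting this closed form collapses the Bessel factor against the $|b|^{(q\gamma-1)/2}$ prefactor, leaving only a pure exponential $\exp(-\sqrt{\alpha|b|^q})$ multiplied by a constant in $\alpha$.

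The remaining task is to show that the resulting constant simplifies to $\frac{q\alpha^{1/q}}{4\Gamma(2/q)}$. After cancellation the constant takes the form
\[
\frac{\sqrt{\pi}\,\alpha^{1/q}}{2^{1+2/q}\,\Gamma((q{+}1)/q)\,\Gamma(1/2+1/q)}.
\]
Using $\Gamma((q{+}1)/q) = (1/q)\Gamma(1/q)$, the identity reduces to checking
\[
\Gamma(1/q)\,\Gamma(1/q + 1/2) = 2^{1-2/q}\sqrt{\pi}\,\Gamma(2/q),
\]
which is precisely the Legendre duplication formula applied at $z = 1/q$.

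The only non-routine step, and hence the main obstacle, is recognizing that the duplication formula is exactly what makes the constants align; once that is identified the rest of the argument is bookkeeping. Everything else reduces to (i) specializing the already-derived EP-gamma density to $\gamma = 1/2 + 1/q$ and (ii) using the elementary $K_{1/2}$ identity. The theorem should therefore be presented as a corollary of the EP-gamma density formula together with these two classical facts about the gamma and Bessel functions.
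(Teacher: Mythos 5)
Your proposal is correct and follows essentially the same route as the paper's proof: specialize the exponential power--gamma density to $\gamma=\tfrac12+\tfrac1q$ so the Bessel index becomes $\tfrac12$, apply $K_{1/2}(z)=\sqrt{\pi/(2z)}\,e^{-z}$, and match the constants via the Legendre duplication formula (the paper uses the same identity, stated as $\Gamma(\tfrac{q+1}{q})\Gamma(\tfrac12+\tfrac1q)=2^{-2/q}\sqrt{\pi}\,\tfrac2q\Gamma(\tfrac2q)$, without naming it). Your additional direct verification of the first equality is a harmless elaboration of what the paper treats as immediate.
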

This theorem implies that a $q/2-$bridge distribution can be represented as a scale mixture of  $q-$bridge distributions.
 A class of  important settings are  $q=2^{1{-}m}$ and $\gamma =\frac{1}{2} +\frac{1}{q}= \frac{1+ 2^m}{2}$ where
$m$  is any nonnegative integer.

\subsection{Conditional Priors, Marginal Priors and Posteriors} \label{sec:poster}

We now study the posterior distribution of $\eta$ conditioning on $b$. It is immediate that
the  posterior distribution follows ${\GIG}(\eta|(\gamma q{-}1)/q, (\beta+|b|^q), \alpha)$. This implies that GIG distributions 
are conjugate with respect to the EP distribution.
We note that in the cases $\gamma=1/2$ and $q=1$ as well as $\gamma=0$ and $q = 2$,
the posterior distribution is ${\GIG}(\eta|{-}1/2, (\beta+|b|^q), \alpha)$.
In the cases $\gamma=3/2$ and $q=1$ as well as $\gamma=1$ and $q = 2$,
the posterior distribution is ${\GIG}(\eta|1/2, (\beta+|b|^q), \alpha)$. 
When $\gamma=-1/2$ and $q=1$ or $\gamma=-1$ and $q=2$, the  posterior distribution is  ${\GIG}(\eta|{-}3/2, (\beta+|b|^q), \alpha)$.

Additionally, we have the following theorem.
\begin{theorem} \label{thm:2}
Suppose that $b|\eta \sim \EP(b|0, \eta, q)$ and $\eta \sim \GIG(\eta|\gamma, \beta, \alpha)$. Then
\begin{enumerate}
\item[\emph{(i)}]   $b \sim \EIG(b|\alpha, \beta, \gamma, q)$ and $\eta|b \sim \GIG(\eta|(\gamma q{-}1)/q, (\beta+|b|^q), \alpha)$.
\item[\emph{(ii)}]  $\frac{\partial - \log p(b)} {\partial |b|^q} = \frac{1}{2} E(\eta^{-1} | b)  = \frac{1}{2} \int{\eta^{-1} p(\eta|b) d \eta}$.
\end{enumerate} \end{theorem}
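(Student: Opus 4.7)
The plan is to treat the two parts separately, since each is essentially a short calculation once set up correctly.

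For part (i), the marginal statement $b \sim \EIG(b|\alpha,\beta,\gamma,q)$ is just a restatement of Theorem~\ref{thm:1}, so nothing new needs to be done there. For the posterior, I would apply Bayes' rule and collect exponents. Multiplying the likelihood kernel $\EP(b|0,\eta,q) \propto \eta^{-1/q} \exp(-|b|^q/(2\eta))$ by the prior kernel $\GIG(\eta|\gamma,\beta,\alpha)\propto \eta^{\gamma-1}\exp(-(\alpha\eta+\beta\eta^{-1})/2)$ gives, as a function of $\eta$,
\[
\eta^{(\gamma q-1)/q - 1}\exp\!\Bigl(-\tfrac{1}{2}\bigl(\alpha\eta + (\beta+|b|^q)\eta^{-1}\bigr)\Bigr),
\]
which is immediately recognizable as the kernel of $\GIG\bigl(\eta\mid (\gamma q-1)/q,\,\beta+|b|^q,\,\alpha\bigr)$. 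The normalizing constant is then forced, so no separate calculation is needed.

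For part (ii), I would use a "differentiation under the integral sign" argument, with $s:=|b|^q$ as the differentiation variable. Write
\[
p(b)=C\int_0^\infty \eta^{-1/q}\exp(-s/(2\eta))\,\GIG(\eta|\gamma,\beta,\alpha)\,d\eta,
\qquad C=\tfrac{q}{2}\tfrac{2^{-1/q}}{\Gamma(1/q)}.
\]
Since $s$ appears only through the exponential factor $\exp(-s/(2\eta))$, differentiating under the integral brings down a factor of $-\eta^{-1}/2$. Hence
\[
\frac{\partial\,(-\log p(b))}{\partial s}
= -\frac{1}{p(b)}\frac{\partial p(b)}{\partial s}
= \frac{1}{2p(b)}\int_0^\infty \eta^{-1}\,\EP(b|0,\eta,q)\,\GIG(\eta|\gamma,\beta,\alpha)\,d\eta,
\]
and recognizing the integrand as $\eta^{-1}$ times the joint density $p(b,\eta)$, dividing by $p(b)$ converts it into the posterior expectation $\tfrac{1}{2}E(\eta^{-1}\mid b)$, which is the claim.

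The only mildly delicate point is justifying the interchange of differentiation and integration. This is standard: the integrand is smooth in $s>0$ and dominated (uniformly for $s$ in any compact subinterval of $(0,\infty)$) by $\eta^{-1-1/q}\exp(-s_0/(2\eta))\,\GIG(\eta|\gamma,\beta,\alpha)$ for a suitable $s_0>0$, whose integrability follows from the finiteness of the GIG normalizing constant together with the moment structure of the GIG (in particular, $E(\eta^{-1})<\infty$ under $\GIG(\eta|\gamma,\beta,\alpha)$ whenever $\beta>0$, and in the limiting case $\beta=0$ one uses $s>0$ to still dominate). With that dominated-convergence step in hand, both parts follow without further obstacle.
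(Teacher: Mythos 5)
Your proposal is correct, and part (i) matches the paper, which simply declares the first part ``immediate''---your Bayes-rule kernel computation is exactly what is meant. For part (ii), however, you take a genuinely different route. The paper starts from the closed-form EP-GIG density of Theorem~\ref{thm:1} and differentiates $-\log p(b)$ with respect to $|b|^q$ directly, using the Bessel-function derivative identity of Lemma~\ref{lem:3}(4); this yields the explicit ratio $\frac{1}{2}\frac{\sqrt{\alpha}}{\sqrt{\beta+|b|^q}}\,K_{\frac{\gamma q-1}{q}-1}\big(\sqrt{\alpha(\beta+|b|^q)}\big)\big/K_{\frac{\gamma q-1}{q}}\big(\sqrt{\alpha(\beta+|b|^q)}\big)$, which is then recognized as $\frac{1}{2}E(\eta^{-1}\mid b)$ via the GIG moment formula (Proposition~\ref{pro:1}) applied to the posterior from part (i). You instead differentiate under the integral sign in the mixture representation $p(b)=\int \EP(b|0,\eta,q)\,\GIG(\eta|\gamma,\beta,\alpha)\,d\eta$ with respect to $s=|b|^q$, so that the factor $-\eta^{-1}/2$ appears and the identity $\frac{\partial(-\log p(b))}{\partial|b|^q}=\frac{1}{2}E(\eta^{-1}\mid b)$ falls out as a standard ``score of a mixture equals posterior expectation of the conditional score'' argument, with no Bessel calculus at all; your domination bound for the interchange is the right one and is unproblematic for all admissible $(\alpha,\beta,\gamma)$. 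The trade-off: your argument is cleaner and generalizes immediately to any mixing density, but it delivers only the expectation form, so to obtain the explicit Bessel-ratio weight actually used in the algorithms (e.g.\ equation~(\ref{eqn:omega}) and the E-step weights) one must still invoke part (i) together with Proposition~\ref{pro:1}, which is precisely the information the paper's more computational proof produces as a by-product.
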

When as a penalty $-\log p(b)$  is applied to supervised sparse learning,  iterative re-weighted  $\ell_1$  or $\ell_2$ local  methods are
suggested for solving the resulting optimization problem.  We will see that Theorem~\ref{thm:2} shows  the equivalent relationship  
between an iterative re-weighted  method
and an EM algorithm, which is presented in Section~\ref{sec:em}.

\subsection{Duality between Priors and Penalties} \label{sec:duality}

Since there is duality between a prior and a penalty, we are able to construct a penalty
from $p(b)$; in particular, $-\log p(b)$ corresponds to a penalty.  For example,
let $p(b)$ be defined as in (\ref{eqn:eig11}) or (\ref{eqn:eig12}) (see Appendix~\ref{app:cases}).
It is then easily checked that $-\log p(b)$ is concave in $|b|$.
Moreover, if $p(b)$ is given in (\ref{eqn:eig13}), then $-\log p(b)$
induces the $\ell_{1/2}$ penalty $|b|^{1/2}$. In fact, we have the following theorem.

\begin{theorem}  \label{thm:monoto}
Let $p(b)$ be the EP-GIG density given in (\ref{eqn:epgig}). If $-\log p(b)$ is regarded as a function
of $|b^q|$,  then $-\frac{d \log (p(b))}{d |b|^q} $ is completely monotone
on $(0, \infty)$.  Furthermore,  when $0<q\leq 1$,  ${-}\log(p(b))$ is concave in $|b|$ on $(0, \infty)$; namely, ${-}\log(p(b))$
defines a class of nonconvex penalties for $b$.
\end{theorem}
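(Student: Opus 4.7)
I would treat $-\log p(b)$ as a function of $t := |b|^q \in (0,\infty)$ and reduce the complete-monotonicity assertion to a statement about the Laplace transform of a generalized inverse Gaussian law. In the mixing integral defining $p(b)$, the substitution $u = 1/(2\eta)$ collapses the EP factor and the GIG density into
\[
p(b) \;=\; C \int_0^\infty u^{1/q - \gamma - 1}\, e^{-\beta u - \alpha/(4u)}\, e^{-t u}\, du \;=:\; C\, L(t),
\]
with $C>0$ independent of $t$. Up to normalization, the integrand without the factor $e^{-tu}$ is precisely the density of $U \sim \GIG(1/q-\gamma,\, \alpha/2,\, 2\beta)$, so that $L(t)$ is a positive multiple of $E[e^{-tU}]$. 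I would then invoke the classical theorem (Barndorff-Nielsen and Halgreen, 1977) that every GIG law is infinitely divisible. The L\'evy-Khintchine formula for nonnegative infinitely divisible variables then yields $E[e^{-tU}] = \exp(-h(t))$ for a Bernstein function $h$, i.e., $h \geq 0$ with $h'$ completely monotone. Taking $-d/dt$ of $\log L(t) = -h(t) + \text{(const)}$ gives
\[
-\frac{d \log p(b)}{d\,|b|^q} \;=\; h'(t),
\]
which is completely monotone on $(0,\infty)$, proving the first assertion. (The same identity is reachable from Theorem~\ref{thm:2}(ii): $\tfrac{1}{2}E(\eta^{-1}\mid b)$ equals the mean of $U=1/(2\eta)$ under the $t$-exponentially tilted GIG, which is $-d/dt\,\log E[e^{-tU}]$.)

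For the concavity statement I would set $x := |b| > 0$, $G(t) := -\log p(b)$, and apply the chain rule:
\[
\frac{d^2}{dx^2}\, G(x^q) \;=\; q^2\, x^{2q-2}\, G''(x^q) \;+\; q(q-1)\, x^{q-2}\, G'(x^q).
\]
The first part of the theorem already yields $G'(t) \geq 0$ and $G''(t) \leq 0$ on $(0,\infty)$ as the leading consequences of complete monotonicity of $G'$. For $0 < q \leq 1$ one has $q^2 x^{2q-2} \geq 0$ and $q(q-1) x^{q-2} \leq 0$, so each of the two summands is nonpositive. Hence $-\log p(b)$ is concave in $|b|$ on $(0,\infty)$, yielding the advertised family of nonconvex penalties.

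\textbf{Where the work is.} The only nontrivial ingredient is the infinite divisibility of the GIG family, a classical but nonelementary fact; everything else is a change of variables and two lines of chain rule. A self-contained alternative would verify the Bernstein-function structure of $-\log L(t)$ directly from the explicit Bessel representation $L(t) \propto (\beta+t)^{(\gamma q-1)/(2q)} K_{(\gamma q-1)/q}\!\bigl(\sqrt{\alpha(\beta+t)}\bigr)$, but this appears to hinge on precisely the same delicate monotonicity facts about the ratios $K_{\nu-1}/K_\nu$ that underlie the infinite-divisibility proof itself, so this is the one place where genuine effort would need to be invested.
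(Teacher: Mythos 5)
Your proof is correct, but it reaches the key complete-monotonicity fact by a different route than the paper. The paper first computes the explicit derivative (its Theorem~\ref{thm:2}(ii)), obtaining $-\frac{d\log p}{d|b|^q}=\frac{\alpha}{2}\,\frac{K_{\nu-1}(\sqrt{\alpha(\beta+z)})}{\sqrt{\alpha(\beta+z)}\,K_{\nu}(\sqrt{\alpha(\beta+z)})}$ with $\nu=(\gamma q-1)/q$, and then invokes its Lemma~\ref{lem:4}: the Bessel ratio $Q_{\nu}(z)=K_{\nu-1}(\sqrt z)/(\sqrt z\,K_{\nu}(\sqrt z))$ is completely monotone, citing Grosswald (1976) for $\nu\ge 0$ and using the recurrence $K_{\tau+1}=\frac{2\tau}{u}K_\tau+K_{\tau-1}$ to extend to negative indices. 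You instead change variables $u=1/(2\eta)$ to exhibit $p(b)$, as a function of $t=|b|^q$, as a constant multiple of the Laplace transform of a $\mathrm{GIG}(1/q-\gamma,\alpha/2,2\beta)$ law (your computation of the exponent $u^{1/q-\gamma-1}$ is right, and $\alpha,\beta>0$ makes this a proper GIG for every real index), and then use infinite divisibility of GIG (Barndorff--Nielsen and Halgreen) so that $-\log p$ is, up to an additive constant, a Bernstein function of $t$, whence its derivative is completely monotone. As you yourself note, the deep ingredient is the same in both arguments---the infinite-divisibility proof for GIG rests on exactly the Grosswald theorem the paper cites---so neither route is more elementary; your structural route has the advantage of treating all indices $\nu\in\mathbb{R}$ uniformly (no separate negative-index recurrence step), while the paper's computational route produces the explicit formula for the derivative, which it reuses as the EM/reweighting weight in (\ref{eqn:w}) and (\ref{eqn:omega}). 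Your concavity step (chain rule giving $q^2x^{2q-2}G''+q(q-1)x^{q-2}G'\le 0$ for $0<q\le 1$ from $G'\ge 0$, $G''\le 0$) is exactly the paper's composition argument (nondecreasing concave function of the concave map $|b|\mapsto|b|^q$), made explicit.
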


Here a function $\phi(z)$ on $(0, \infty)$ is said to be completely monotone~\citep{FellerBook:1971} 
if it possesses derivatives $\phi^{(n)}$ of all orders and
\[
(-1)^n  \phi^{(n)} (z) \geq 0, \; z>0.
\]
Theorem~\ref{thm:monoto} implies that the first-order and second-order derivatives of   $- \log (p(b))$ with respect to $|b|^q$ 
are nonnegative and nonpositive, respectively.
Thus, ${-}\log(p(b))$ is concave and  nondecreasing in $|b|^q$ on $(0, \infty)$. Additionally,  $|b|^q$ for $0<q\leq 1$ is concave 
in $|b|$ on $(0, \infty)$.  Consequently, when $0<q\leq 1$, ${-}\log(p(b))$ is concave  in  $|b|$ on $(0, \infty)$. In other words,  
${-}\log(p(b))$
with $0<q\leq 1$ induces a nonconvex penalty for $b$.

Figure~\ref{fig:penalty} graphically depicts several penalties, which are obtained from the special priors in Appendix~\ref{app:cases}.
It is readily seen that the fist three penalty functions are concave in $|b|$ on $(0, \infty)$. In Figure~\ref{fig:penalty_others},  
we also  illustrate the penalties
induced from the $1/2$-bridge scale mixture priors (see Examples 7 and 8 in in Appendix~\ref{app:cases}),  generalized  $t$ priors
and  EP-G priors.
Again, we  see that the two penalties
induced from the $1/2$-bridge  mixture priors are concave in $|b|$ on $(0, \infty)$.  This agrees with Theorem~\ref{thm:monoto}.

\begin{figure}[!ht]
\centering
\subfigure[$\gamma=\frac{1}{2}$, $q=1$]{\includegraphics[width=50mm,height=40mm]{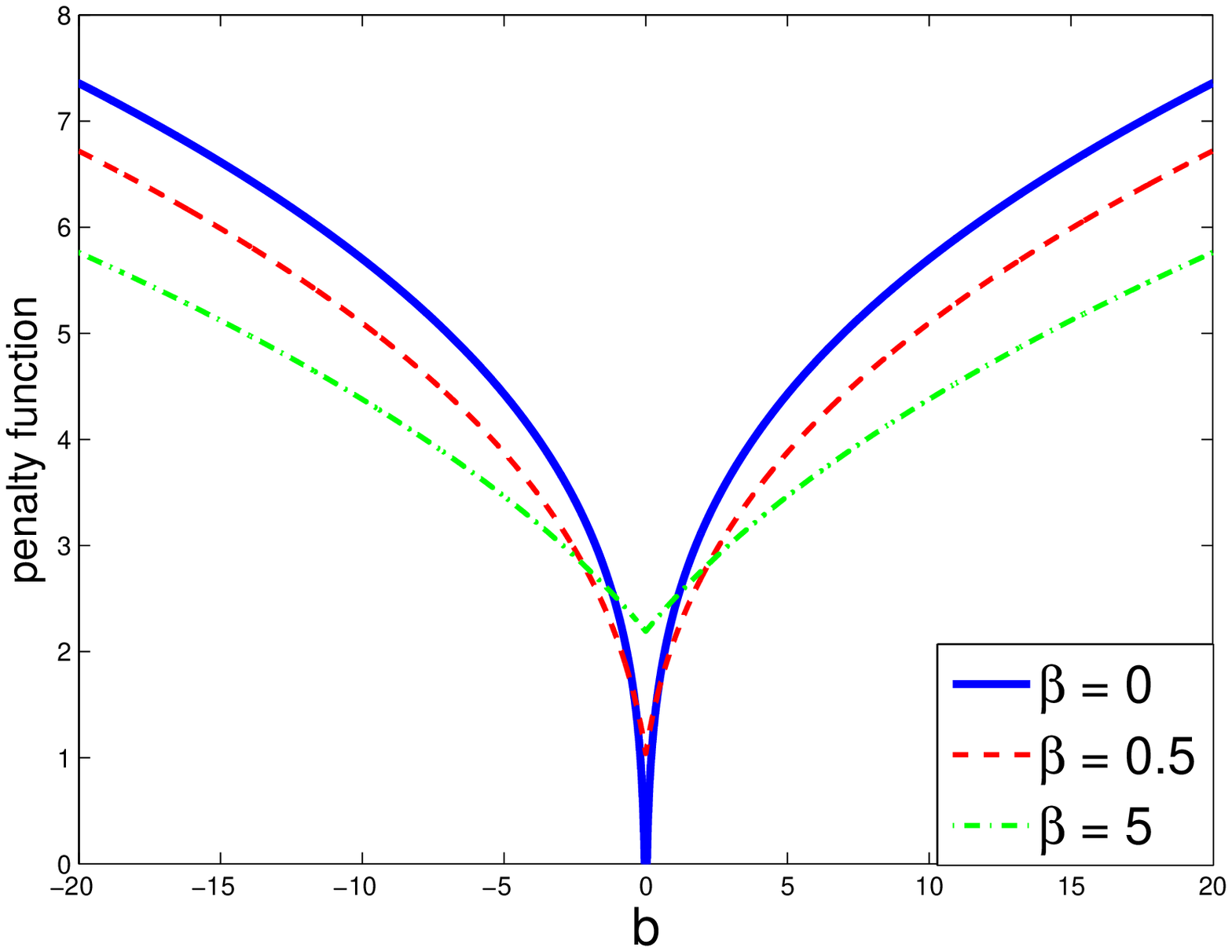}}
\subfigure[$\gamma=\frac{3}{2}$, $q=1$]{\includegraphics[width=50mm,height=40mm]{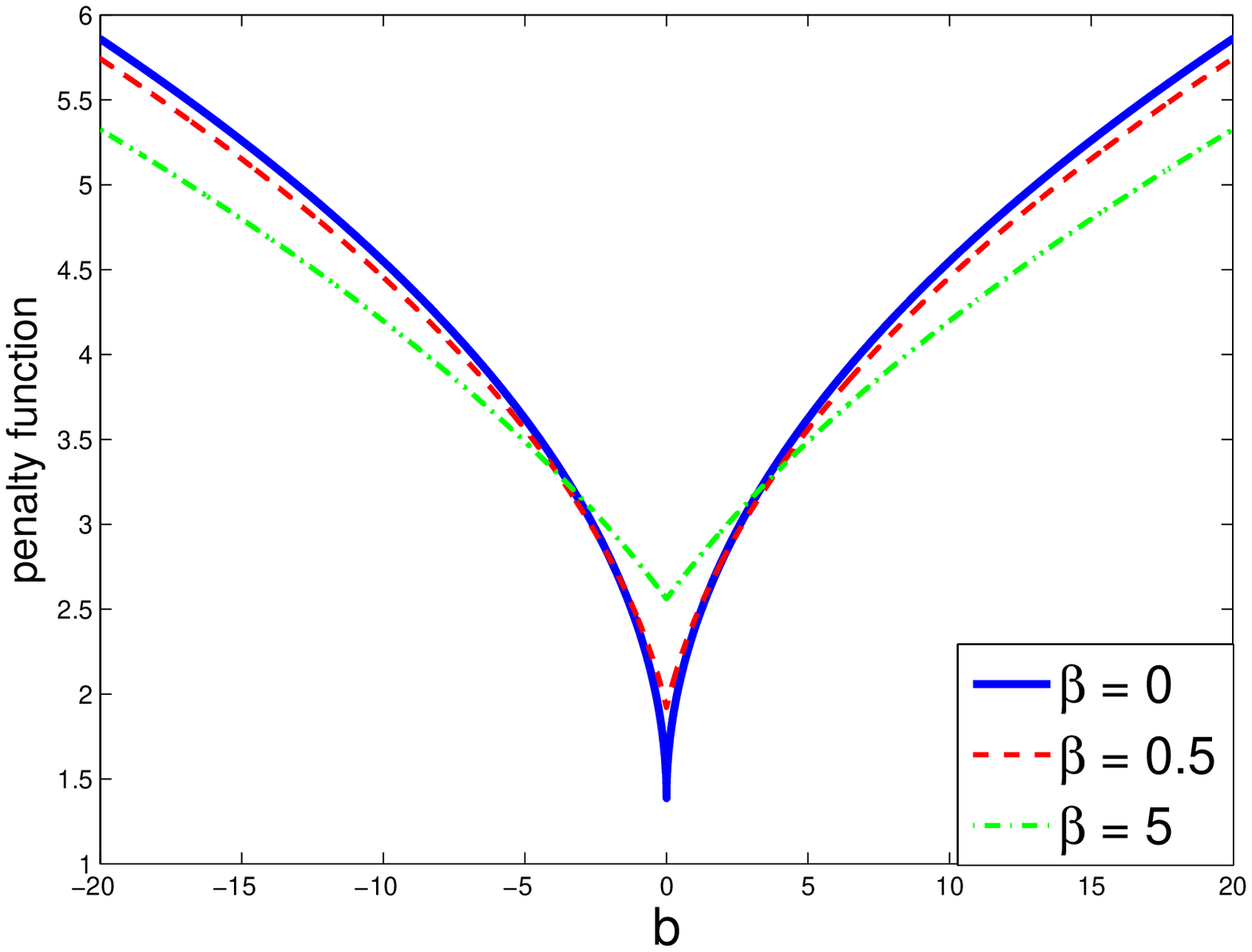}}
\subfigure[$\gamma=-\frac{1}{2}$, $q=1$]{\includegraphics[width=50mm,height=40mm]{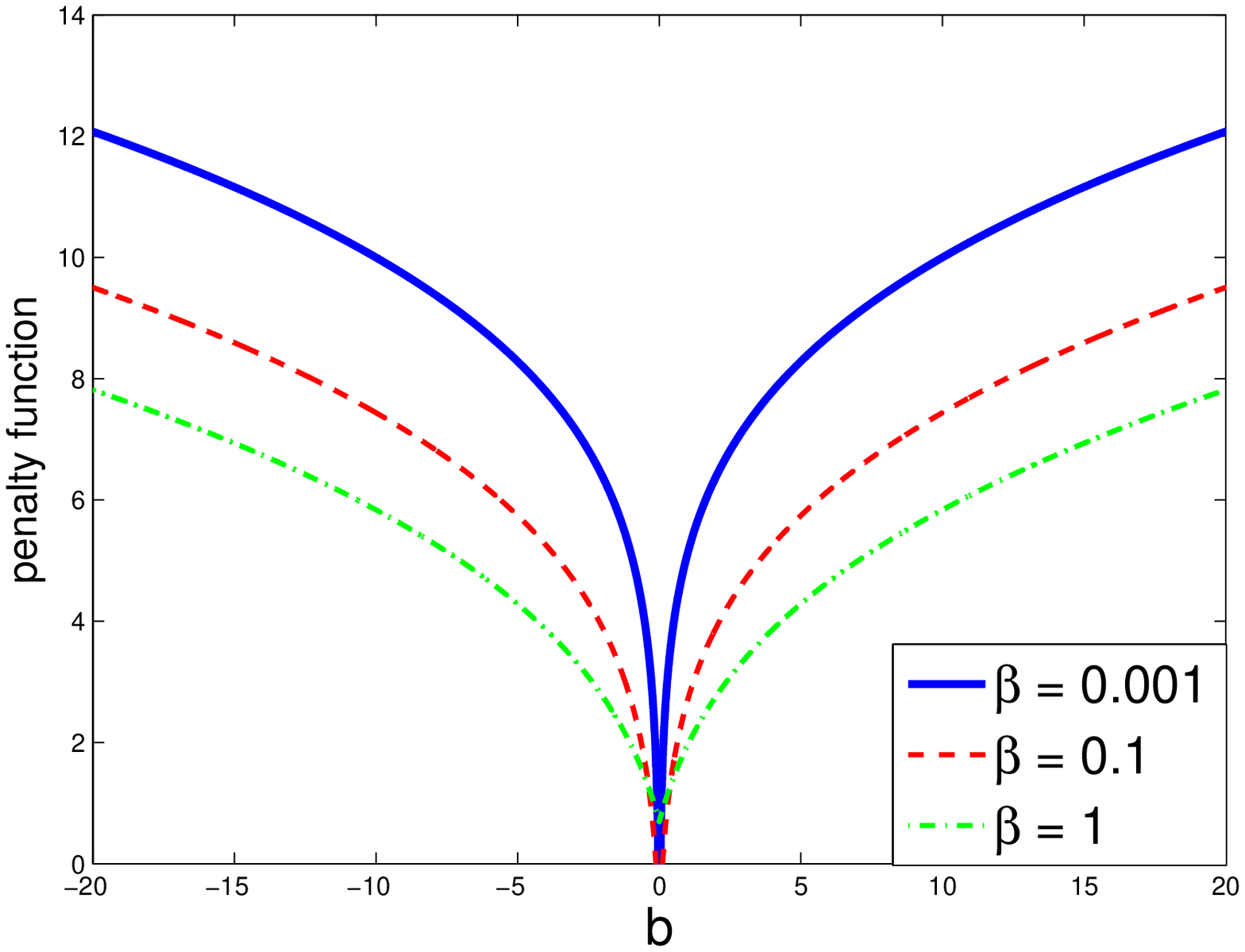}}\\
\subfigure[$\gamma=0$, $q=2$]{\includegraphics[width=50mm,height=40mm]{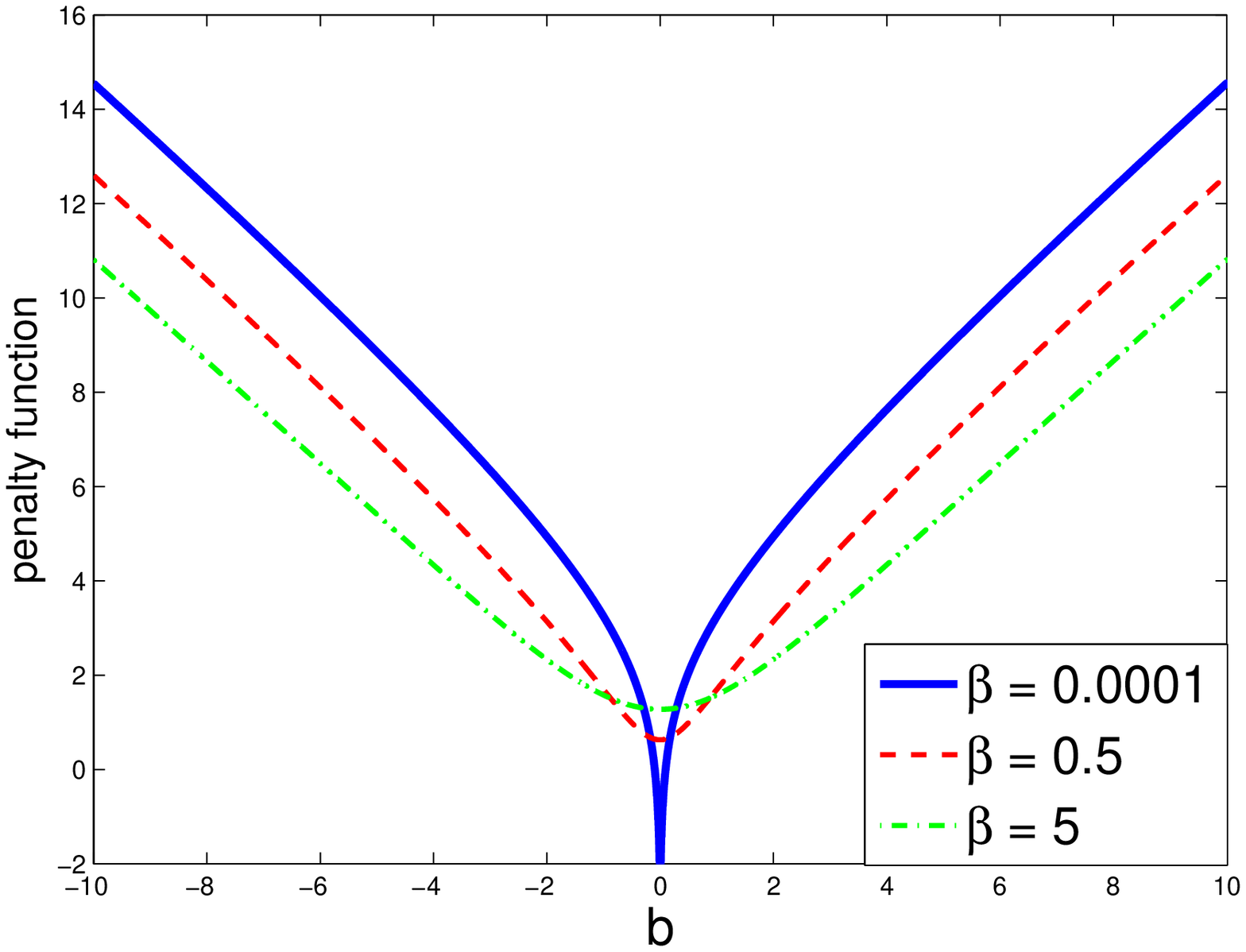}}
\subfigure[$\gamma=1$, $q=2$]{\includegraphics[width=50mm,height=40mm]{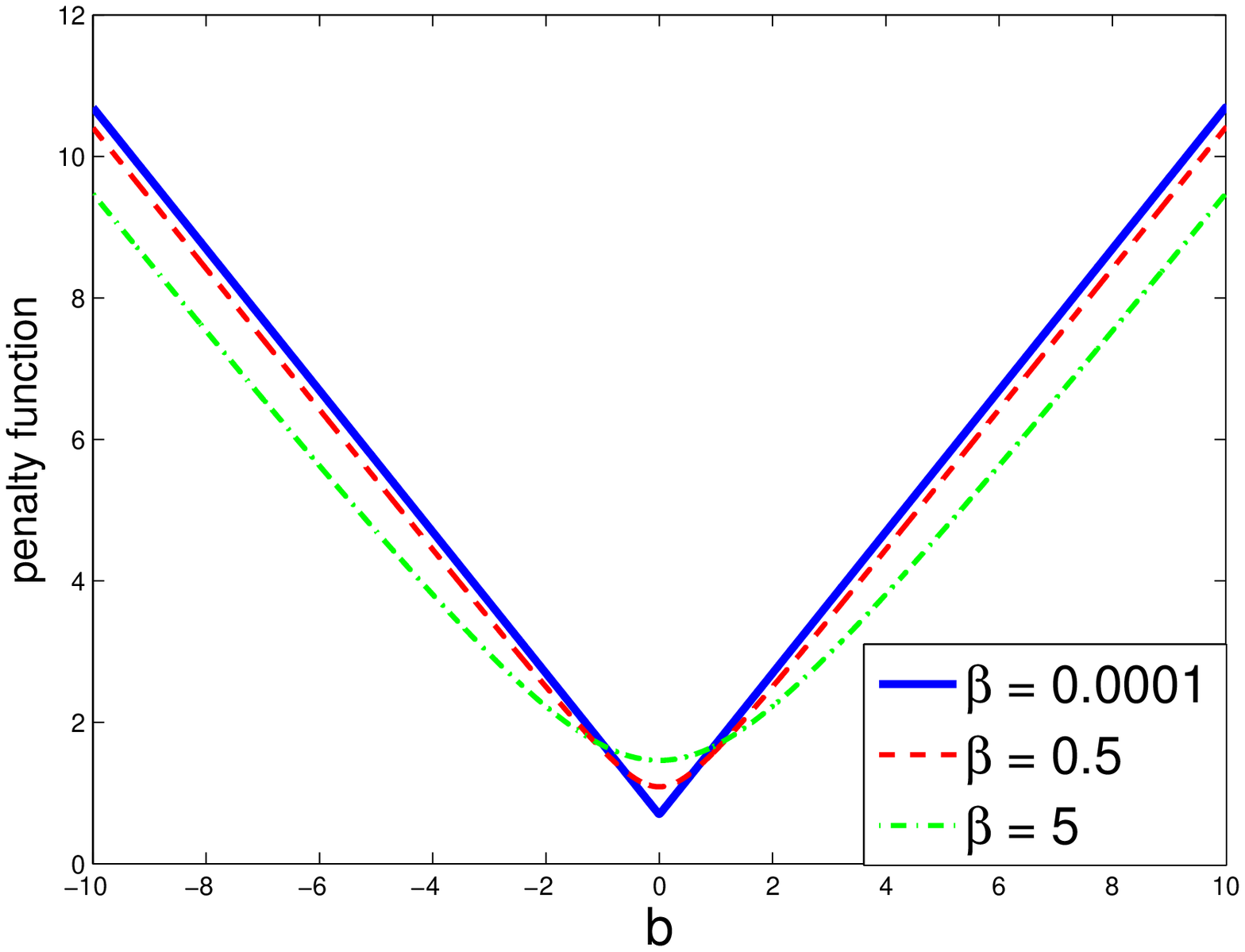}}
\subfigure[$\gamma=-1$, $q=2$]{\includegraphics[width=50mm,height=40mm]{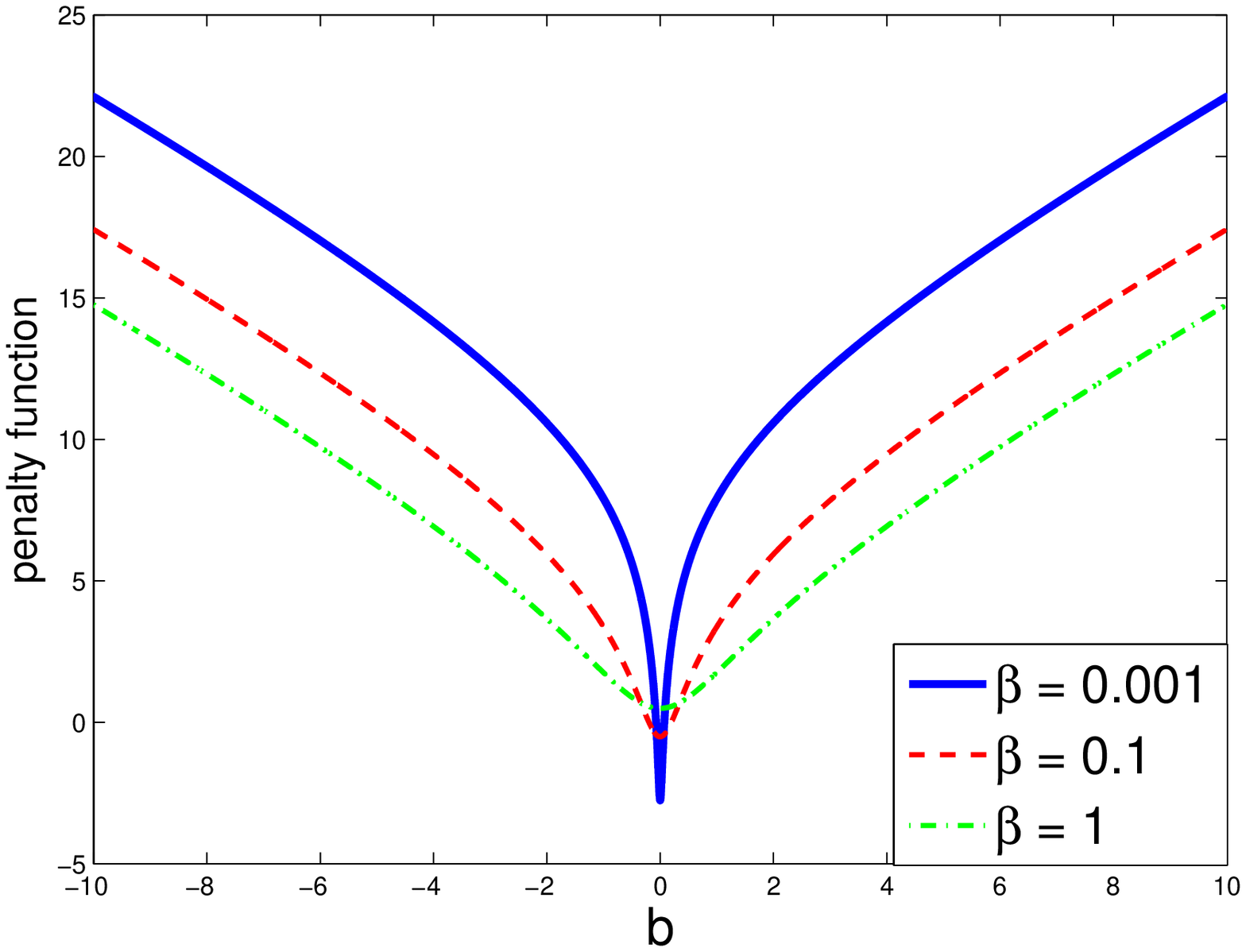}}\\
\caption{Penalty functions induced from exponential power-generalized inverse
gamma (EP-GIG) priors in which $\alpha=1$.} \label{fig:penalty}
\end{figure}

\begin{figure}[!ht]
\centering
\subfigure[EP-GIG$(\gamma=\frac{3}{2}$, $q=\frac{1}{2}, \alpha=1)$]{\includegraphics[width=60mm,height=40mm]{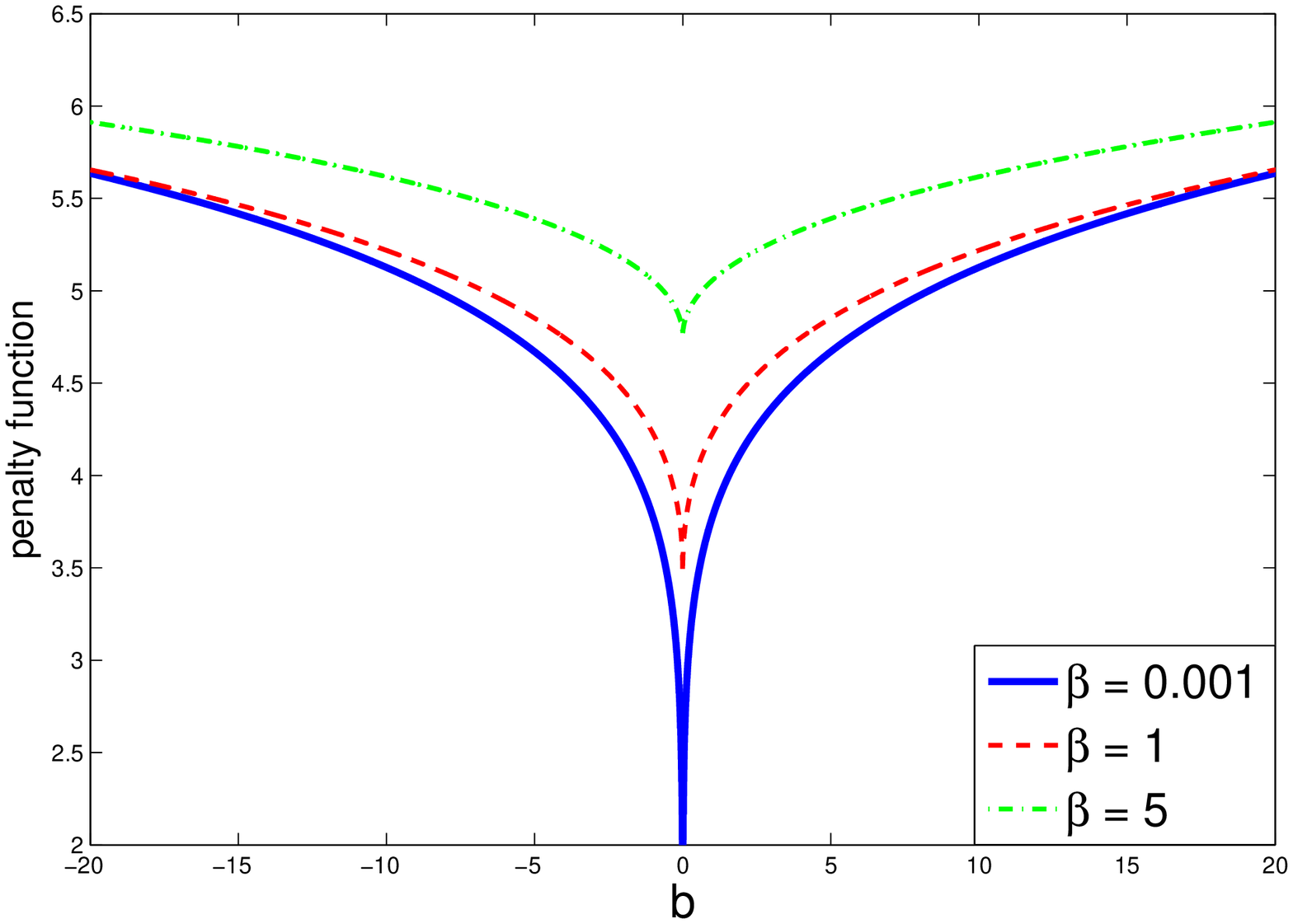}}
\subfigure[EP-GIG$(\gamma=\frac{5}{2}$, $q=\frac{1}{2}, \alpha=1)$]{\includegraphics[width=60mm,height=40mm]{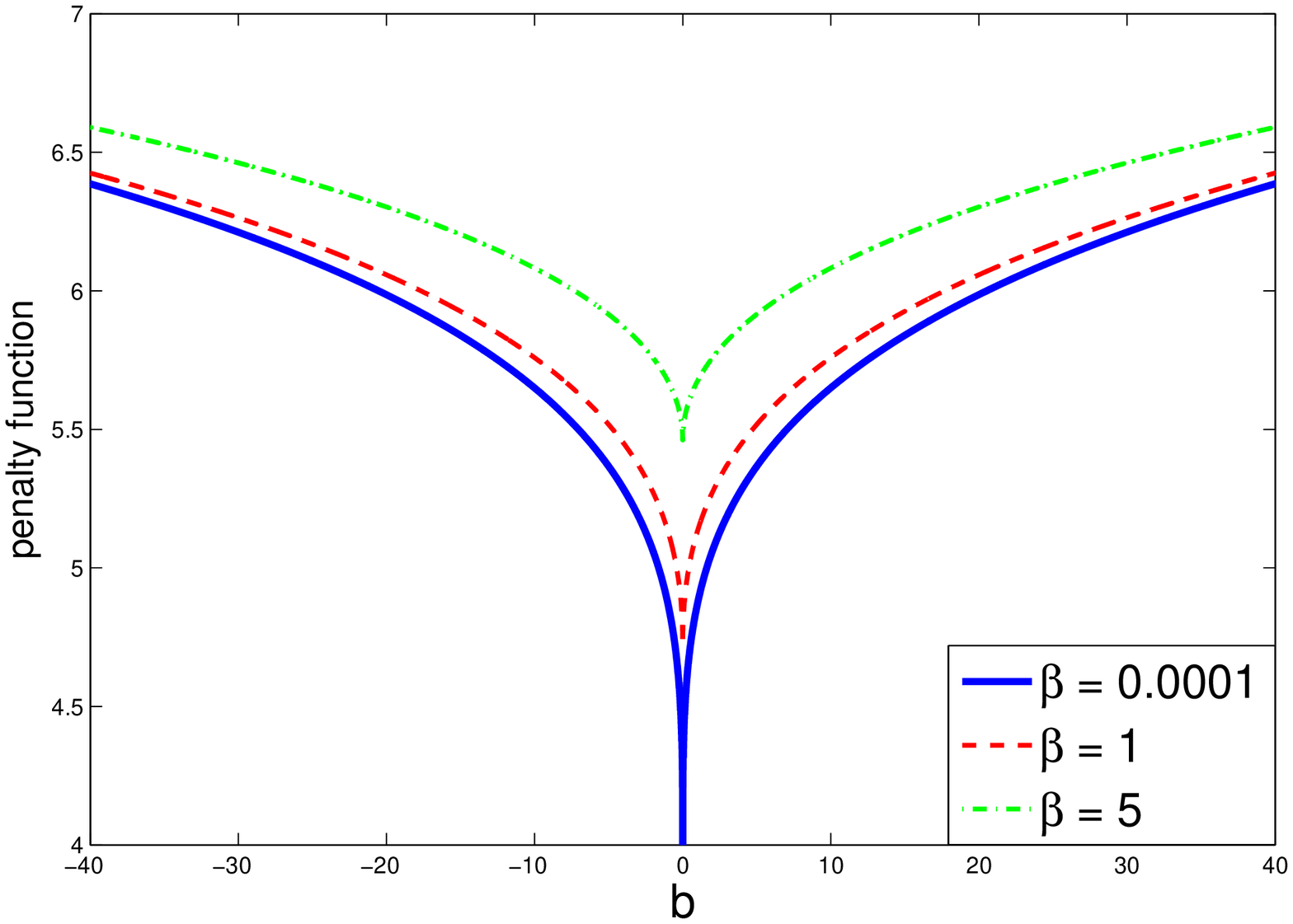}} \\
\subfigure[GT (or GDP) ($q=1$ and $\lambda=1$)]{\includegraphics[width=60mm,height=40mm]{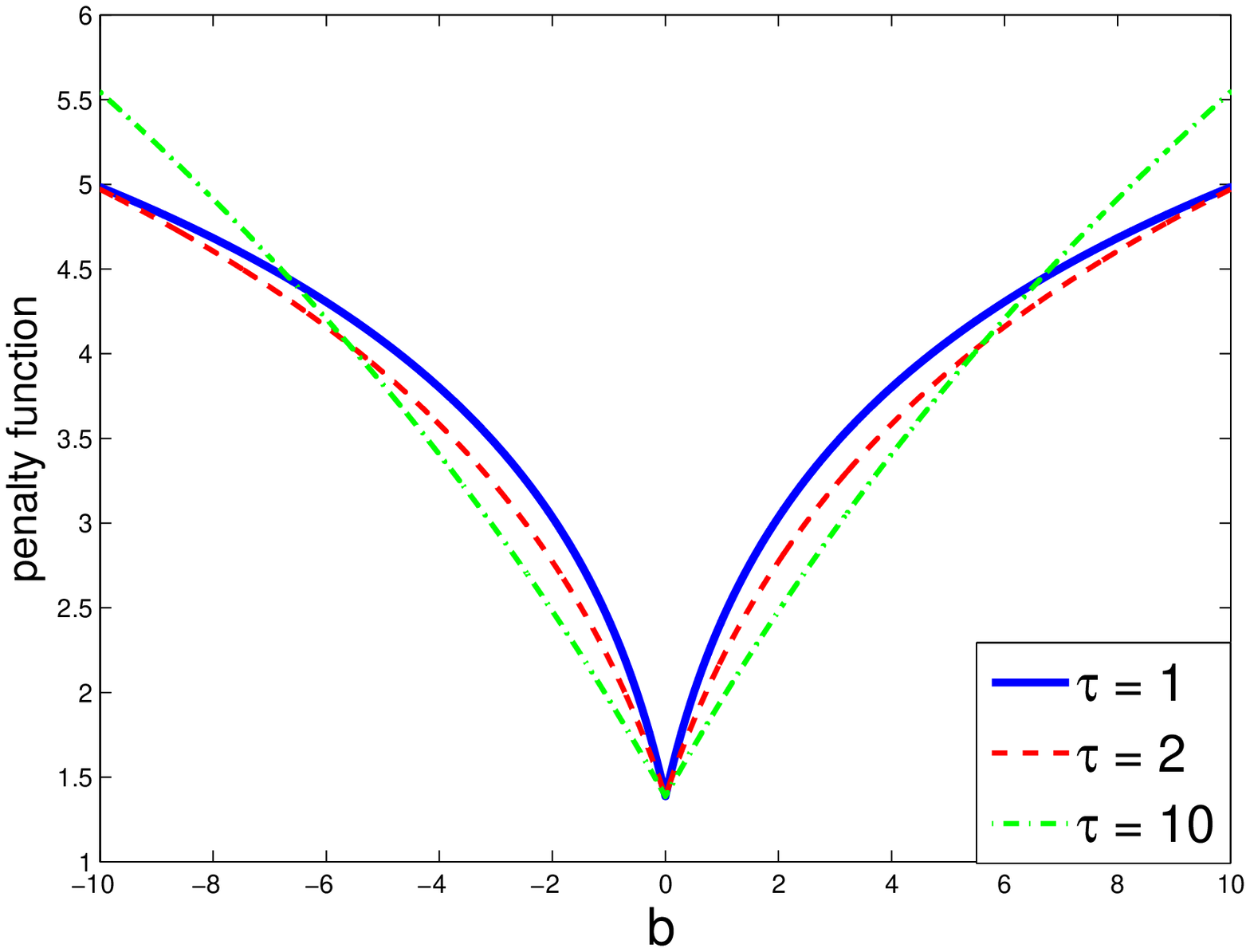}}
\subfigure[GT ($q=2$ and $\lambda =1$)]{\includegraphics[width=60mm,height=40mm]{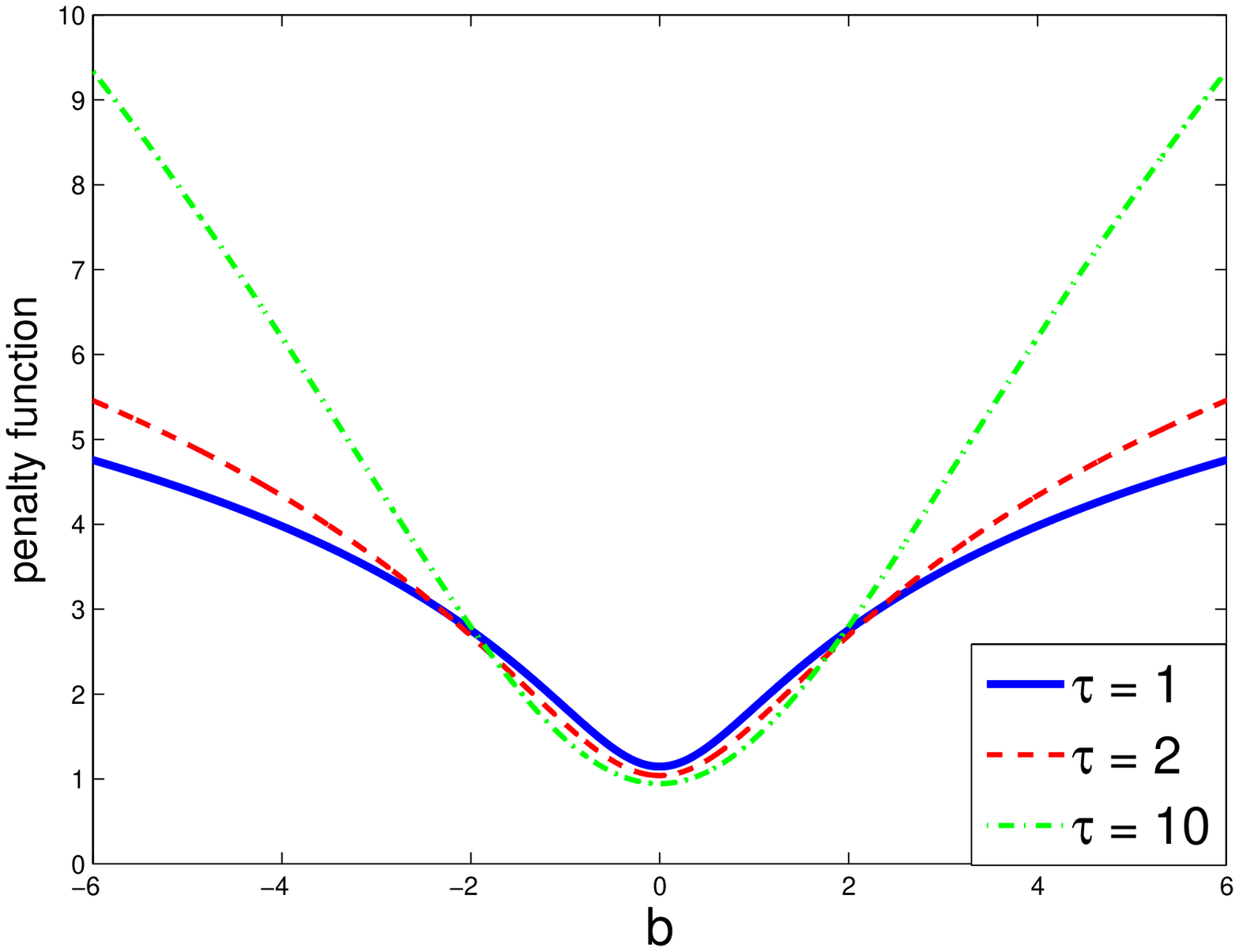}}\\
\subfigure[EG ($q=1$ and $\lambda=1$)]{\includegraphics[width=60mm,height=40mm]{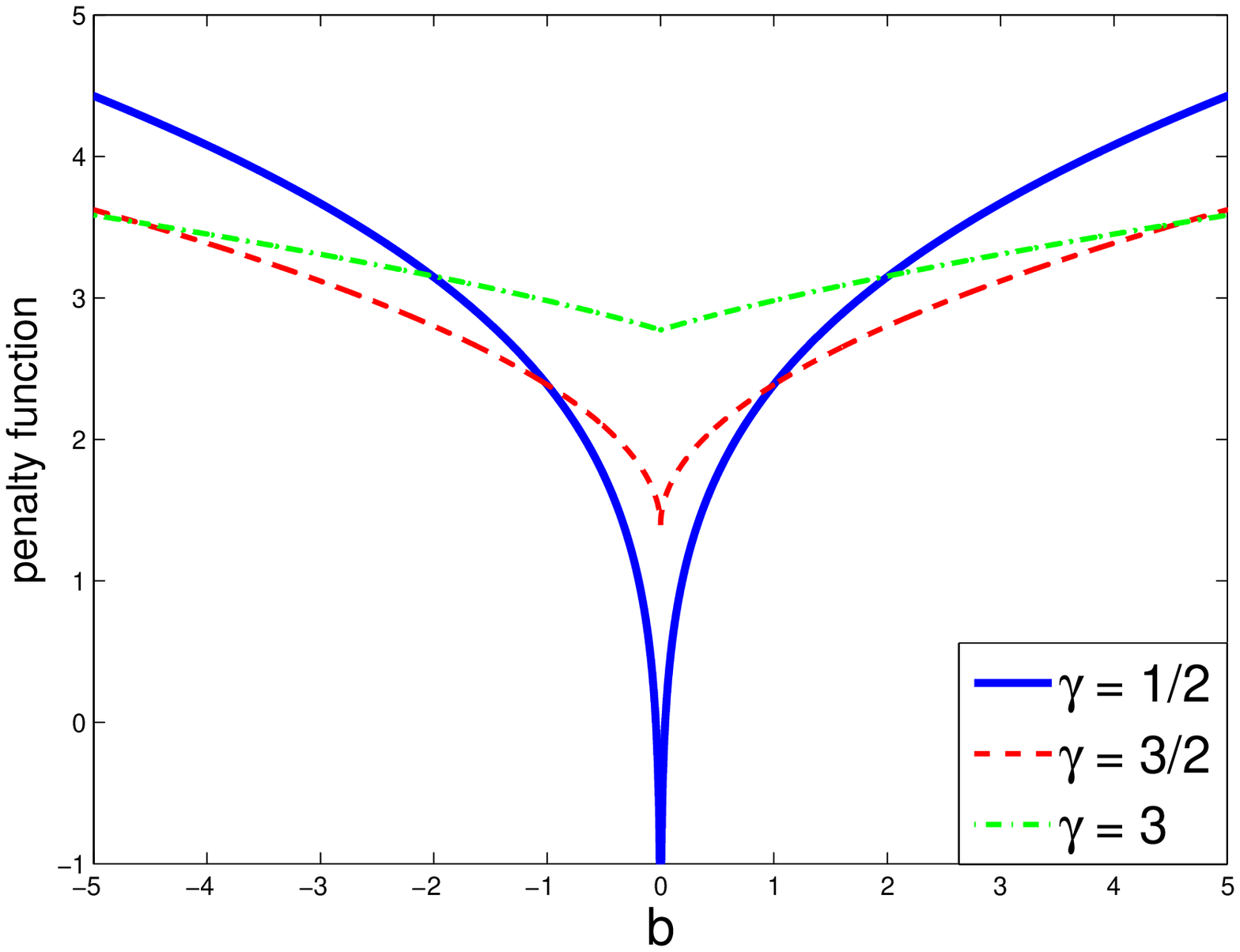}}
\subfigure[EG ($q=2$ and $\lambda=1$)]{\includegraphics[width=60mm,height=40mm]{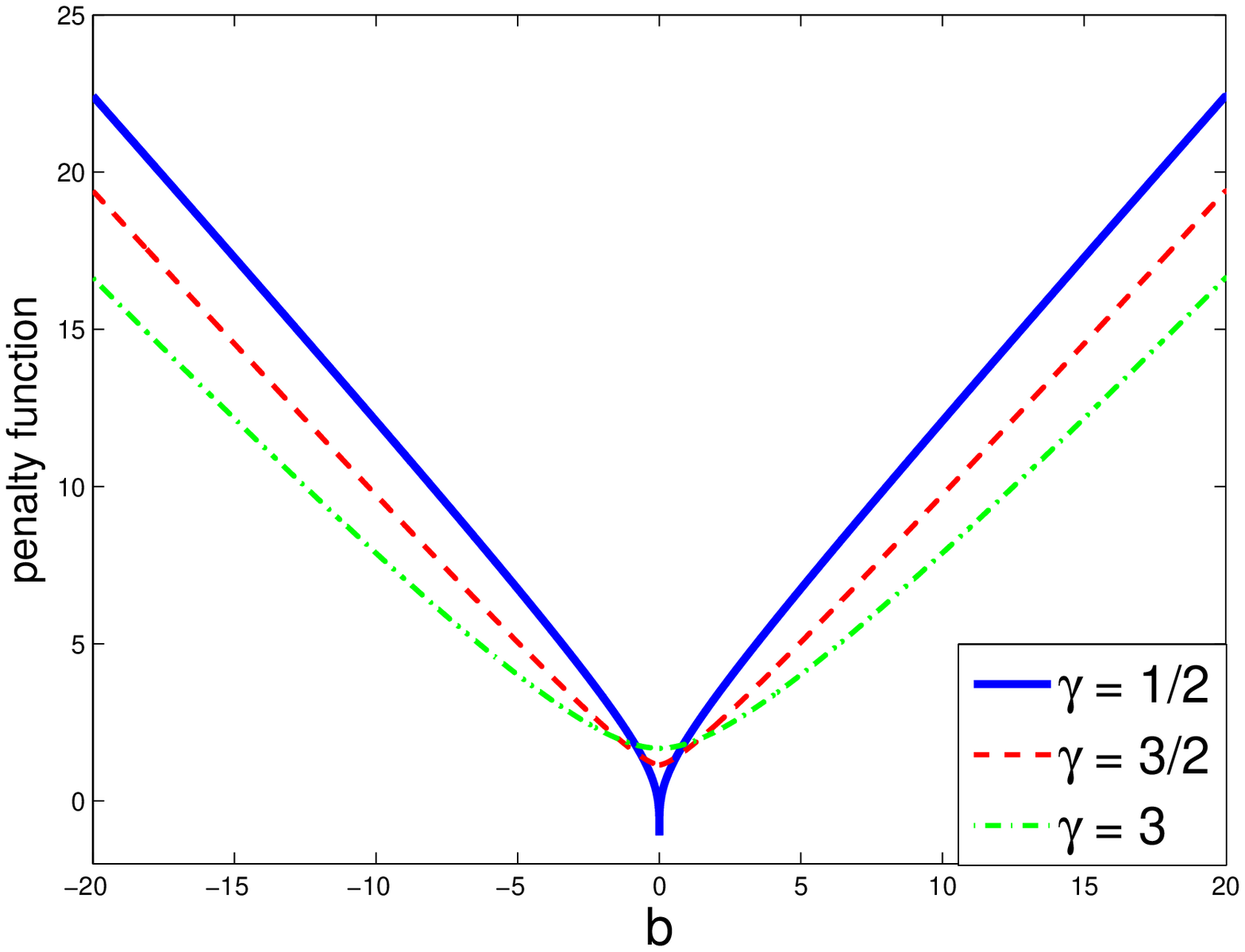}}
\caption{Penalty functions induced from $1/2$-bridge scale mixture priors, exponential power-inverse gamma
(or generalized $t$, GT) priors and exponential power-gamma (EG) priors.}
\label{fig:penalty_others}
\end{figure}

\section{Bayesian Sparse Learning Methods} \label{sec:em}

In this section we apply EP-GIG priors to empirical Bayesian sparse learning.  Suppose we are given a set of training
data $\{(\x_i, y_i): i=1,\ldots, n\}$, where
the $\x_i \in \RB^{p}$ are the input vectors and the $y_i$ are the corresponding
outputs. Moreover, we assume that $\sum_{i=1}^n \x_i=\0$ and $\sum_{i=1}^n y_i=0$. 
We now consider the following linear regression model:
\[
\y = \X\b + \vps,
\]
where $\y=(y_1, \ldots, y_n)^T$ is the $n{\times}1$ output vector,
$\X=[\x_1, \ldots, \x_n]^T$ is the $n {\times} p$
input matrix, and $\vps$ is a Gaussian error vector $N(\vps|\0, \sigma \I_n)$.  We aim to
estimate the vector of regression coefficients $\b=( b_1, \ldots, b_p)^T$ under the MAP framework.

\subsection{Bayesian Sparse Regression} \label{sec:mod1}

We place an EP-GIG prior on each of the elements of $\b$. That is,
\[
p(\b|\sigma) =\prod_{j=1}^p \EIG(b_j|\sigma^{-1}\alpha, \sigma \beta, \gamma, q).
\]
Using the property the the EP-GIG distribution is a scale mixture of
exponential power distributions, we devise an EM algorithm for the MAP estimate of $\b$.
For this purpose, we define a hierarchical model:
\begin{align*}
[\y|\b, \sigma]  & \sim N(\y|\X \b, \sigma \I_n), \\
[b_j | \eta_j, \sigma] & \stackrel{ind}{\sim}  \EP(b_j|0, \sigma \eta_j, q), \\
[\eta_j|\gamma, \beta, \alpha] & \stackrel{iid}{\sim}  \GIG(\eta_j|\gamma, \beta, \alpha), \\
 p(\sigma) & = ``Constant".
\end{align*}
According to Section~\ref{sec:poster}, we have
\[
[\eta_j|b_j, \sigma, \alpha, \beta, \gamma] \sim \GIG\big(\eta_j\big|(\gamma  q{-}1)/q, \; \beta {+} \sigma^{-1}|b_j|^q, \; \alpha\big).
\]

Given the $t$th estimates $(\b^{(t)}, \sigma^{(t)})$ of $(\b, \sigma)$, the E-step of EM calculates
\begin{align*}
Q(\b, \sigma|\b^{(t)}, \sigma^{(t)}) &\triangleq \log p(\y|\b, \sigma) + \sum_{j=1}^p
\int{\log p[b_j | \eta_j, \sigma] p(\eta_j|b_j^{(t)}, \sigma^{(t)}, \alpha, \beta, \gamma)} d \eta_j \\
& \propto  -\frac{n}{2} \log \sigma {-} \frac{1}{2 \sigma}(\y{-}\X \b)^T (\y {-}\X\b) - \frac{p}{q} \log \sigma \\
& \quad - \frac{1}{2 \sigma} \sum_{j=1}^p
|b_j|^q \int \eta_j^{-1} p(\eta_j|b_j^{(t)}, \sigma^{(t)}, \alpha, \beta, \gamma) d \eta_j.
\end{align*}
Here we omit some terms that are independent of
parameters $\sigma$ and $\b$.
In fact, we only need  calculating $E(\eta_j^{-1}|b_j^{(t)}, \sigma^{(t)})$ in the E-step.
It follows from Proposition~\ref{pro:1} (see Appendix~\ref{app:proof}) that
\begin{equation} \label{eqn:w}
w_j^{(t{+}1)} \triangleq  E(\eta_j^{-1}|b_j^{(t)}, \sigma^{(t)}) = \frac{\alpha^{1/2}}
{\big[\beta {+} |b^{(t)}_j|^q /\sigma^{(t)}\big]^{1/2}} \frac{K_{(\gamma  q{-}q{-}1)/q }
\big(\sqrt{\alpha [\beta {+} |b^{(t)}_j|^q /\sigma^{(t)} ]} \big)} {K_{(\gamma  q{-}1)/q} \big(\sqrt{\alpha [\beta {+} |b^{(t)}_j|^q /\sigma^{(t)}]}\big)}.
\end{equation}
There do not exist analytic computational formulae for arbitrary modified Bessel
functions $K_{\nu}$. In this case, we can resort to a numerical approximation
to the Bessel
function. Fortunately, once $\gamma$ and $q$ take the special values in Appendix~\ref{app:cases}, we have  closed-form
expressions for the corresponding Bessel
functions and thus for the $w_j$.
In particular, we have from Proposition~\ref{pro:2} (see Appendix~\ref{app:proof}) that
\[
w_j^{(t{+}1)}  = \left\{ \begin{array}{ll} \Big[\frac{\sigma^{(t)} \alpha} {\sigma^{(t)}\beta + |b_j^{(t)}|^q}\Big]^{1/2}
&  (\gamma q{-}1)/q=1/2, \\
\frac{\sigma^{(t)}+ [\sigma^{(t)}\alpha (\sigma^{(t)}\beta +  |b_j^{(t)}|^q)]^{1/2}} {\sigma^{(t)} \beta + |b_j^{(t)}|^q}
 & (\gamma q{-}1)/q=-1/2, \\
 \frac{3 \sigma^{(t)}}  {\sigma^{(t)} \beta + |b_j^{(t)}|^q}
 + \frac{\sigma^{(t)} \alpha} {\sigma^{(t)}+ [\sigma^{(t)}\alpha (\sigma^{(t)}\beta +  |b_j^{(t)}|^q)]^{1/2}}
 & (\gamma q{-}1)/q=-3/2.  \end{array}  \right.
\]
In Table~\ref{tab:f} we list these cases with different settings of $\gamma$ and $q$.

The M-step maximizes $Q(\b, \sigma|\b^{(t)}, \sigma^{(t)})$ with respect to $(\b, \sigma)$.
In particular, it is obtained as follows:
\begin{align*}
\b^{(t{+}1)} & = \argmin_{\b} \; (\y{-}\X \b)^T (\y {-}\X\b) +  \sum_{j=1}^p w_j^{(t{+}1)} |b_j|^q, \\
\sigma^{(t{+}1)} & = \frac{q}{qn {+} 2 p} \Big\{(\y{-}\X \b^{(t{+}1)})^T (\y {-}\X\b^{(t{+}1)}) + \sum_{j=1}^p w_j^{(t{+}1)} |b_j^{(t{+}1)}|^q \Big\}.
\end{align*}

\begin{table}[t]
\vspace{-0.1in}
\caption{E-steps of EM for different settings of $\gamma$ and $q$. Here we omit superscripts ``(t)".}
\label{tab:f}
\vskip 0.15in
\begin{center}
\begin{small}
\begin{sc}
\begin{tabular}{l|c|c|c|c}
\hline
$(\gamma, q)$ &
$\gamma=\frac{1}{2}, q =1$   & $\gamma=\frac{3}{2}, q =1$  & $\gamma=0, q =2$    & $\gamma=1, q =2$  \\
\hline
$w_j=$   & $\frac{1+ \sqrt{\alpha(\beta + \sigma^{-1} |b_j|)}} {\beta + \sigma^{-1}
|b_j|}$ & $\sqrt{\frac{\alpha} {\beta + \sigma^{-1} |b_j|}}$
& $\frac{1 + \sqrt{\alpha(\beta + \sigma^{-1} b_j^2)}} {\beta + \sigma^{-1} b_j^2}$  & $\sqrt{\frac{\alpha}
{\beta + \sigma^{-1} b_j^2}}$  \\
\hline
\end{tabular}
\end{sc}
\end{small}
\end{center}
\vspace{-0.1in}
\vskip -0.1in
\end{table}

\subsection{The Hierarchy for Grouped Variable Selection} \label{sec:extension}

In the hierarchy specified previously each $b_j$ is assumed to have distinct scale $\eta_j$. We can also let
several $b_j$ share a common scale parameter. Thus we can obtain a Bayesian approach to
group sparsity~\citep{YuanLin:2006}. We next briefly describe this approach.

Let $I_l$ for $l=1, \ldots, g$ be a partition of $I=\{1,2, \ldots, p\}$; that is,
$\cup_{j=1}^g I_j= I$ and $I_j \cap I_l = \emptyset$ for $j\neq l$.  Let $p_l$ be the cardinality of $I_l$, and
$\b_{l}=\{b_j: j \in I_l\}$  denote the subvectors of $\b$, for $l=1, \ldots, g$.
The hierarchy is then specified as
\begin{align*}
[\y|\b, \sigma]  & \sim N(\y|\X \b, \sigma \I_n), \\
[b_j | \eta_l, \sigma] & \stackrel{iid}{\sim}  \EP(b_j|0, \sigma \eta_l, q), j \in I_l \\
[\eta_l|\gamma_l, \beta, \alpha] & \stackrel{ind}{\sim}  \GIG(\eta_l|\gamma_l, \beta, \alpha), l=1, \ldots, g.
\end{align*}
Moreover, given $\sigma$, the $\b_l$ are conditionally independent. By integrating out $\eta_l$,
the marginal density of $\b_l$ conditional
on $\sigma$ is then
\[
p(\b_l|\sigma) =  \frac{K_{\frac{\gamma_l q{-}p_l}{q}}(\sqrt{ \alpha(\beta {+}\sigma^{-1} \|\b_l\|_q^q)})}
{ \big[2^{\frac{q+1}{q}} \sigma^{\frac{1}{q}} \Gamma(\frac{q{+}1} {q})\big]^{p_l} K_{\gamma_l}(\sqrt{\alpha \beta})} \frac{\alpha^{p_l/(2q)}}{\beta^{{\gamma_l}/2} }
\Big[ \beta {+} \sigma^{-1}\|\b_l\|_q^q \Big]^{(\gamma_l q-p_l)/(2q)},
\]
which implies $\b_l$ is non-factorial.  The posterior distribution of $\eta_l$ on $\b_l$ is then
$\GIG(\eta_l|\frac{\gamma_l q {-} p_l}{q}, \; \beta + \sigma^{-1}\|\b_l\|_q^q, \; \alpha)$.

In this case, the iterative procedure for $(\b, \sigma)$ is given by
\begin{align*}
\b^{(t{+}1)} & = \argmin_{\b} \; (\y{-}\X \b)^T (\y {-}\X\b) +  \sum_{l=1}^g w_l^{(t{+}1)} \|\b_l\|_q^q, \\
\sigma^{(t{+}1)} & = \frac{q}{qn {+} 2 p} \Big\{(\y{-}\X \b^{(t{+}1)})^T (\y {-}\X\b^{(t{+}1)}) +
\sum_{l=1}^g w_l^{(t{+}1)} \|\b_l^{(t{+}1)}\|_q^q \Big\},
\end{align*}
where for $l=1, \ldots, g$,
\[
w_l^{(t{+}1)}  = \frac{\alpha^{1/2}} {\big[ \beta + \|\b_l^{(t)}\|_q^q/ \sigma^{(t)}\big]^{1/2}}
\frac{K_{\frac{\gamma_l q {-}  q {-} p_l}{q}}(\sqrt{\alpha [ \beta + \|\b_l^{(t)}\|_q^q/ \sigma^{(t)}]})}
{K_{\frac{\gamma_l q {-} p_l}{q}}(\sqrt{\alpha [\beta + \|\b_l^{(t)}\|_q^q/ \sigma^{(t)}] })}.
\]
Recall that there is usually no analytic computation for $w_l^{(t{+}1)}$.
However, setting such  $\gamma_l$
that $\frac{\gamma_l q {-} p_l}{q}=\frac{1}{2}$ or $\frac{\gamma_l q {-} p_l}{q}=-\frac{1}{2}$ can yield an analytic computation.
As a result, we have
\[
w_j^{(t{+}1)}  = \left\{ \begin{array}{ll} \Big[\frac{\sigma^{(t)} \alpha} {\sigma^{(t)}\beta + \|\b_l^{(t)}\|_q^q}\Big]^{1/2}
&  (\gamma_l q{-}p_l)/q=1/2, \\
\frac{\sigma^{(t)}+ \big[\sigma^{(t)}\alpha (\sigma^{(t)}\beta +  \|\b_l^{(t)}\|_q^q) \big]^{1/2}} {\sigma^{(t)} \beta + \|\b_l^{(t)}\|_q^q}
 & (\gamma_l q{-}p_l)/q=-1/2.  \end{array}  \right.
\]

Figure~\ref{fig:graphal} depicts the hierarchical models in Section~\ref{sec:mod1} and~\ref{sec:extension}. It is clear
that when $g=p$ and $p_1=\cdots=p_g=1$, the models are identical.

\begin{figure}[!ht]
\centering
\subfigure[independent]{\includegraphics[width=65mm, height=65mm]{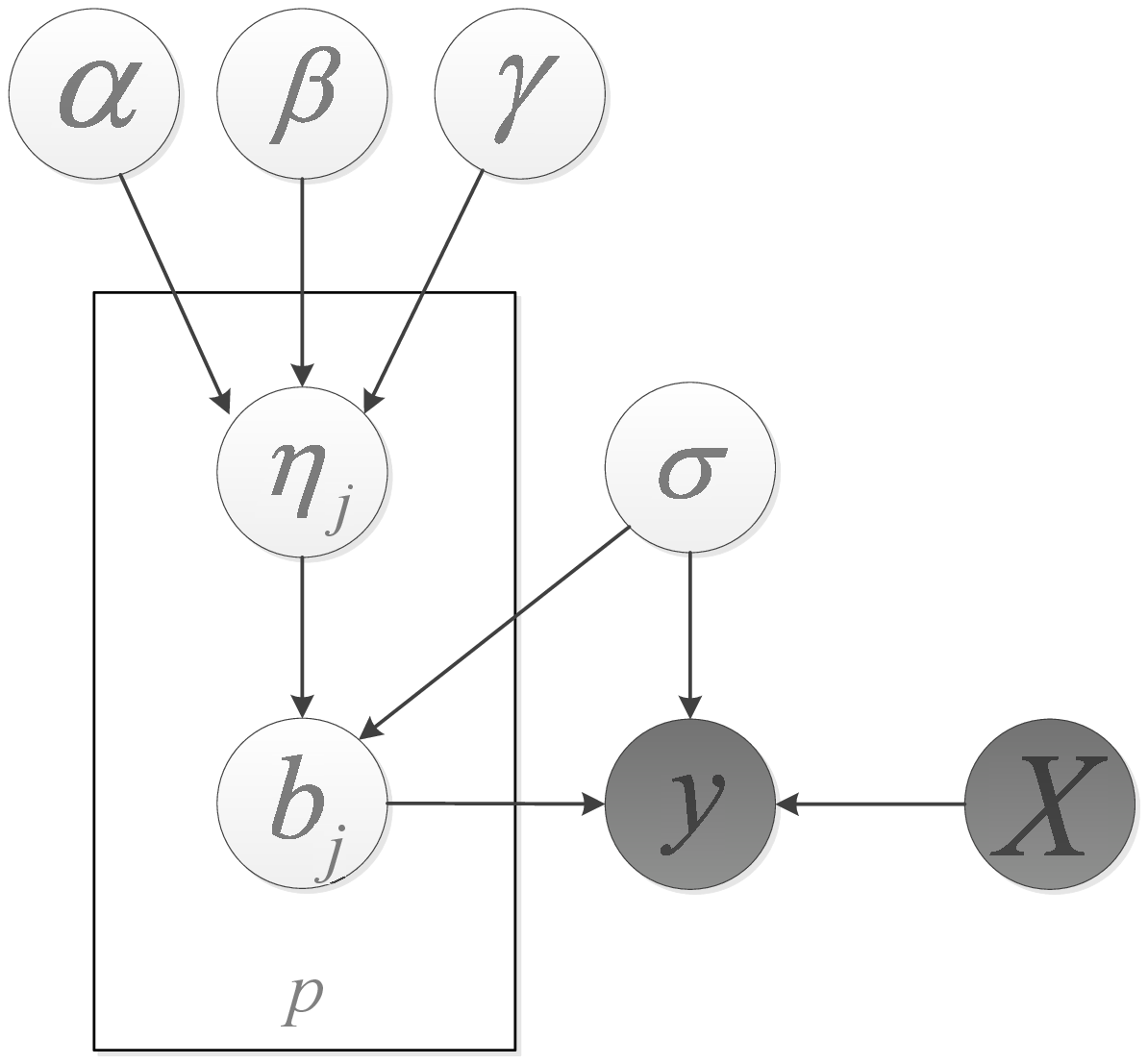}}
\subfigure[grouped]{\includegraphics[width=65mm, height=65mm]{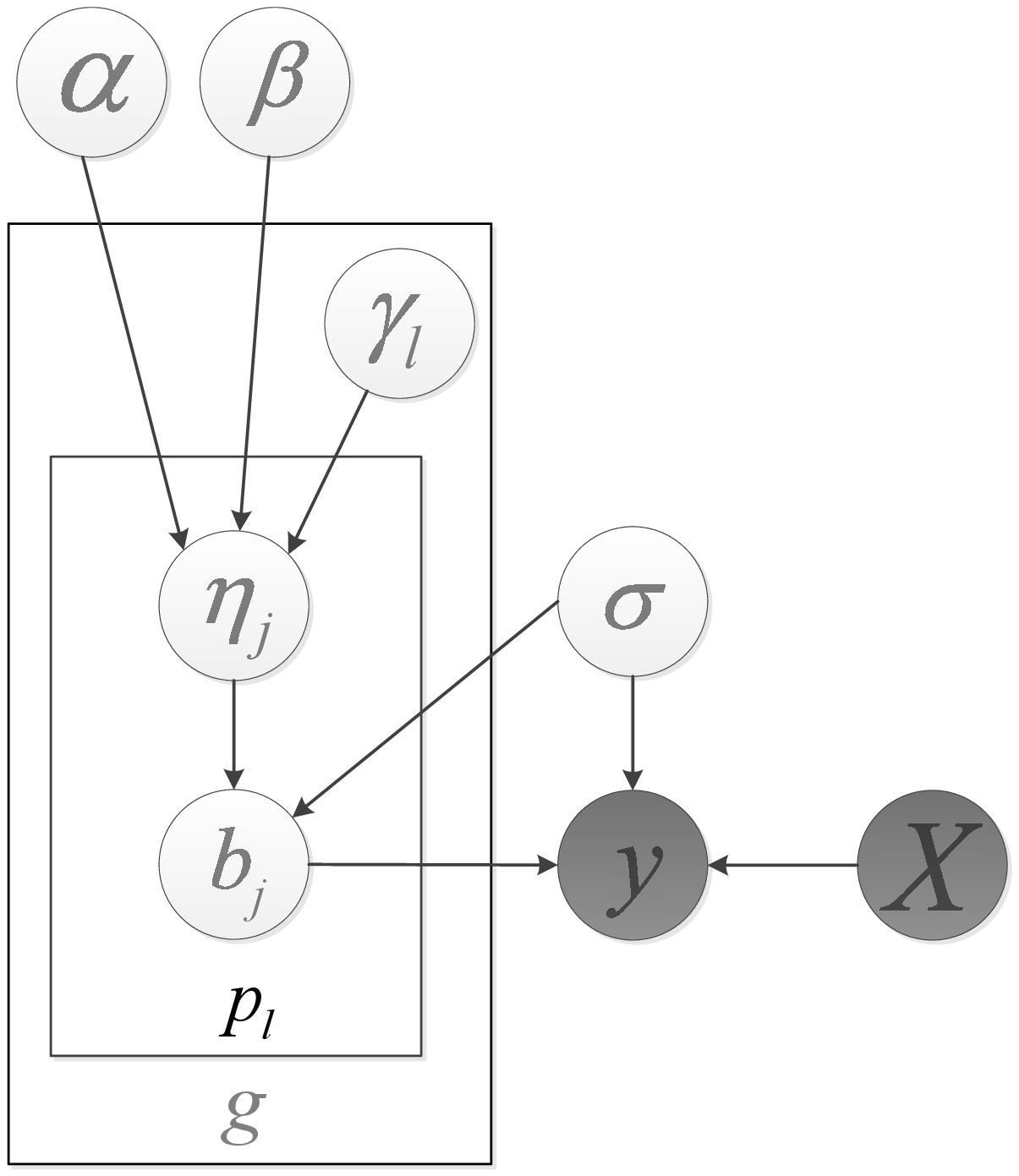}}
\caption{Graphical representations.}
\label{fig:graphal}
\end{figure}

\subsection{Extensions to Logistic Regression} \label{sec:logis}

Another extension is the application in the penalized logistic regression model for classification.
We consider a binary classification problem in which $y \in \{0, 1\}$ now represents the label of the corresponding input vector $\x$.
In the logistic regression model the expected value of $y_i$ is given by
\[P(y_i= 1 |\x_i)= \frac{1}{1+ \exp(- \x_i^T \b)} \triangleq \pi_i.
\]
In this case $\sigma=1$ and the log-likelihood function on the training data becomes
\[
 \log p(\y|\b)=  \sum_{i=1}^n [y_i \log \pi_i+(1{-}y_i) \log(1{-}\pi_i)].
\]

Given the $t$th estimate $\b^{(t)}$ of $\b$, the E-step of EM calculates
\begin{align*}
Q(\b|\b^{(t)}) &\triangleq \log p(\y|\b) + \sum_{j=1}^p
\int{\log p[b_j | \eta_j] p(\eta_j|b_j^{(t)},  \alpha, \beta, \gamma)} d \eta_j \\
& \propto  \sum_{i=1}^n [y_i \log \pi_i+(1{-}y_i) \log(1{-}\pi_i)]  - \frac{1}{2} \sum_{j=1}^p  w_j^{(t{+}1)}
|b_j|^q.
\end{align*}

As for the M-step, a feasible approach is to
first obtain a quadratic approximation to the log-likelihood function
based on its second-order Taylor series expansion at the current estimate $\b^{(t)}$ of the regression vector $\b$. 
We accordingly formulate a penalized linear regression model.
In particular,
the M-step solves the following optimization problem
\[
\b^{(t{+}1)} =\argmin_{\b \in \RB^p} \; (\tilde{\y} {-} \X\b)^{T} \W(\tilde{\y} {-} \X \b) + \sum_{j=1}^p  w_j^{(t{+}1)}
|b_j|^q,
\]
where $\tilde{\y}$, the working response, is defined by
$\tilde{\y} = \X \b^{(t)}+ \W^{-1}(\y-\pii)$,
$\W$ is a diagonal matrix with diagonal elements
$\pi_i(1-\pi_i)$,
and $\pii=(\pi_1, \ldots, \pi_n)^T$. Note that here the $\pi$ are evaluated at $\b^{(t)}$.

\section{Iteratively Re-weighted $\ell_q$ Methods} \label{sec:related}

 We employ a penalty induced from  the EP-GIG prior $\EIG(b|\alpha_0, \beta_0, \gamma, q)$. Then the penalized regression problem
is
\[
\min_{\b}  \;  \frac{1}{2} (\y {-} \X \b)^T (\y {-} \X \b) -  \lambda \sum_{j=1}^p \Big\{ \log K_{\frac{\gamma q {-}1}{q}}
(\sqrt{\alpha_0(\beta_0 {+} |b_j|^q)}) -
\frac{\gamma q {-}1} {2 q} \log(\beta_0 {+} |b_j|^q) \Big\},
\]
which can be solved via the iteratively re-weighted $\ell_q$ method. Given the $t$th estimate $\b^{(t)}$ of $\b$, the method
considers the first Taylor  approximation of $-\log \EIG(b_j|\alpha_0, \beta_0, \gamma, q)$  w.r.t.  $|b_j|^q$
at  $|b_j^{(t)}|^q$  and solves the following problem
\[
\min_{\b}  \;  \frac{1}{2} (\y {-} \X \b)^T (\y {-} \X \b) +  \lambda \sum_{j=1}^p  \omega_j^{(t+1)} |b_j|^q,
\]
where $\omega_j^{(t+1)} = \frac{\partial - \log \EIG(b_j|\alpha_0, \beta_0, \gamma, q)} {\partial |b_j|^q }|_{b_j = b_j^{(t)}} $.
It follows  from Theorem~\ref{thm:2}-(ii) that
\begin{equation} \label{eqn:omega}
\omega_j =  \frac{1}{2} \frac{\sqrt{\alpha_0}}{\sqrt{\beta_0+ |b_j|^q}}
\frac{K_{\frac{\gamma q -1} {q} -1}(\sqrt{\alpha_0(\beta_0+ |b_j|^q)})}
{K_{\frac{\gamma q {-}1} {q} }  (\sqrt{\alpha_0(\beta_0 {+} |b_j|^q)}) }.
\end{equation}

\subsection{Relationship between  EM  and  Iteratively Re-weighted Methods}

We investigate the relationship of the EM algorithm with an iteratively re-weighted $\ell_q$ method where $q=1$
or $q=2$.  When equating $\alpha_0 = \alpha/\sigma$, $\beta_0 = \beta \sigma$
and $\lambda = \sigma$, we immediately see that the $2 \omega_j$ are equal to the $w_j$ in (\ref{eqn:w}).
This implies the iteratively re-weighted minimization method is identical to the EM algorithm given in Section~\ref{sec:mod1}.

When $q=2$, the EM algorithm is identical to the re-weighted $\ell_2$
method and corresponds to a local quadratic approximation~\citep{Fan01,HunterLiAS:2004}. And when $q=1$, the EM algorithm
is the re-weighted $\ell_1$ minimization and corresponds to a local linear approximation~\citep{ZouLi:2008}.
Especially, when we set $\gamma=1$ and $q=2$, the EM algorithm is the same as one
studied by~\cite{Daubechies:2010}.  This implies that the re-weighted $\ell_2$ method
of~\cite{Daubechies:2010} can be equivalently viewed as an EM based on our proposed EP-GIG in Example~5 of Appendix~\ref{app:cases}.
When the EM algorithm is based on our proposed EP-GIG prior in  Example~4 of Appendix~\ref{app:cases}  (i.e. $\gamma=1$ and $q=2$),
it is then the combination of the   re-weighted $\ell_2$ method
of~\cite{Daubechies:2010} and the   re-weighted $\ell_2$ method
of~\cite{ChartrandICASSP:2008}.

When $\gamma=\frac{3}{2}$ and $q=1$, the EM algorithm (see Table~\ref{tab:f})  is equivalent to a  re-weighted $\ell_1$ method, 
which in turn has
a close connection with the re-weighted $\ell_2$ method of~\cite{Daubechies:2010}.
Additionally,  the EM algorithm  based on $\gamma=\frac{1}{2}$ and $q=1$ (see Table~\ref{tab:f})
can be regarded as the combination of the above re-weighted $\ell_1$ method and the re-weighted $\ell_1$
of \cite{CandesWakinBoyd:2008}.  Interestingly, the EM algorithm  based on the EP-GIG priors given in Examples 7 and 8 
of Appendix~\ref{app:cases} (i.e., $\gamma=\frac{3}{2}$ and $q=\frac{1}{2}$ or
$\gamma=\frac{5}{2}$ and $q=\frac{1}{2}$) corresponds a re-weighted
$\ell_{1/2}$ method.

In is also worth mentioning that in Appendix~\ref{app:jeff}  we present EP-Jeffreys priors. Using this prior, 
we can establish the close relationship of
the  adaptive lasso of~\cite{ZouJasa:2006} with an EM algorithm. 	In particular, when $q=1$,  the EM algorithm based on the Jeffreys prior
is equivalent to the adaptive lasso.

\subsection{Oracle Properties}

We now study the oracle property of  our sparse estimator based on Laplace scale mixture priors. For this purpose, following the setup of \cite{ZouLi:2008}, 
 we assume two conditions: (1) $y_i=\x_i^T \b^{*} + \epsilon_i$ where $\epsilon_1, \ldots, \epsilon_n$ are i.i.d errors with mean 0 and variance $\sigma^2$; 
(2) $\X^T \X/n\rightarrow \C$ where $\C$ is a positive definite matrix. Let ${\cal A}=\{j: b_{j}^{*} \neq 0\}$. 
Without loss of generality, we assume that ${\mathcal A}=\{1, 2, \ldots, p_0\}$ with $p_0<p$. Thus, partition $\C$ as
\[
\begin{bmatrix}\C_{11} & \C_{12} \\  \C_{21} & \C_{22} \end{bmatrix}, 
\]
where $\C_{11}$ is $p_0{\times} p_0$. Additionally,  let $\b^{*}_{1} =\{b^{*}_{j}: j \in {\mathcal A}\}$
 and $\b^{*}_{2} = \{u_{nj}:  j \notin {\cal A}\}$.

We in particular consider the following one-step sparse estimator: 
\begin{equation} \label{eqn:llq}
\b_n^{(1)}=\argmin_{\b}  \;  (\y {-} \X \b)^T (\y {-} \X \b) +  \lambda_n \sum_{j=1}^p   |b_j| \frac{Q_{\gamma{-}1} (\alpha_n(\beta_n +|b^{(0)}_j|)) }
{ Q_{\gamma  {-}1} (\alpha_n(\beta_n+ 1) )},
\end{equation}
where $Q_{\nu}(z)=K_{\nu-1}(\sqrt{z})/(\sqrt{z} K_v(\sqrt{z}))$ and $\b^{(0)}=(b_1^{(0)}, \ldots, b_p^{(0)})^T $ is a root-$n$-consistent estimator to $\b^{*}$.  The following theorem shows that
this estimator has the oracle property. That is,  
\begin{theorem}  \label{thm:oracle1} Let $\b_{n1}^{(1)}=\{b_{nj}^{(1)}: j \in \AM\}$  and ${\cal A}_n=\{j: b_{nj}^{(1)} \neq 0\}$. Suppose that
$\lambda_n \rightarrow \infty$, $\lambda_n/\sqrt{n} \rightarrow 0$,  $\alpha_n/{n} \rightarrow c_{1}$ and $\alpha_n\beta_n  \rightarrow c_2$, or that
$\lambda_n /n^{1/4} \rightarrow \infty$, $\lambda_n/\sqrt{n} \rightarrow 0$,  $\alpha_n/\sqrt{n} \rightarrow c_{1}$ and $\alpha_n\beta_n  \rightarrow c_2$. Here $c_1, c_2 \in (0, \infty)$. Then $\b_n^{(1)}$ satisfies the following properties:
\begin{enumerate}
\item[\emph{(1)}]  Consistency in variable selection: $\lim_{n \rightarrow \infty} P({\cal A}_n={\cal A})=1$.
\item[\emph{(2)}]  Asymptotic normality: $\sqrt{n}(\b^{(1)}_{n1} - \b^{*}_{1}) \rightarrow_d N(0, \sigma^2 \C_{11}^{-1})$. 
\end{enumerate} 
\end{theorem}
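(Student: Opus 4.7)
The proof plan is to follow the classical two-step adaptive-lasso oracle-property template of Fan--Li (2001) and especially the one-step LLA analysis of Zou--Li (2008), reducing the statement to two asymptotic conditions on the data-adaptive weights
\[
w_{nj} \;:=\; \frac{Q_{\gamma-1}\bigl(\alpha_n(\beta_n+|b_j^{(0)}|)\bigr)}{Q_{\gamma-1}\bigl(\alpha_n(\beta_n+1)\bigr)},
\qquad j=1,\dots,p.
\]
Once I show (A1) $\lambda_n w_{nj}/\sqrt{n}\to_p 0$ for $j\in\mathcal{A}$, and (A2) $\lambda_n w_{nj}/\sqrt{n}\to_p \infty$ for $j\notin\mathcal{A}$, the rest of the argument is a weighted-$\ell_1$ adaptation of Knight--Fu (2000) that I will import essentially verbatim.

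First I would analyze the weights via the modified Bessel asymptotics. Using $K_\nu(z)\sim\sqrt{\pi/(2z)}\,e^{-z}$ and hence $K_{\nu-1}(z)/K_\nu(z)\to 1$ for large $z$, we get $Q_\nu(z)\sim z^{-1/2}$ as $z\to\infty$, while for bounded $z$ the quantity $Q_\nu(z)$ is bounded away from $0$ and $\infty$. Under the first regime ($\alpha_n/n\to c_1$, $\alpha_n\beta_n\to c_2$), the argument $\alpha_n(\beta_n+|b_j^{(0)}|)$ diverges for every $j$, and the ratio simplifies to $w_{nj}\sim \sqrt{(\beta_n+1)/(\beta_n+|b_j^{(0)}|)}$. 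For $j\in\mathcal{A}$ this tends in probability to $|b_j^*|^{-1/2}$, giving (A1) from $\lambda_n/\sqrt{n}\to0$. For $j\notin\mathcal{A}$, root-$n$ consistency of $\b^{(0)}$ gives $\beta_n+|b_j^{(0)}|=O_p(1/\sqrt{n})$, so $w_{nj}=\Omega_p(n^{1/4})$, which combines with the growth of $\lambda_n$ to yield (A2). Under the second regime ($\alpha_n/\sqrt{n}\to c_1$), the argument $\alpha_n(\beta_n+|b_j^{(0)}|)$ stays bounded in probability for $j\notin\mathcal{A}$, so $Q_{\gamma-1}$ in the numerator is $\Theta_p(1)$ while the denominator decays like $\alpha_n^{-1/2}\asymp n^{-1/4}$; hence $w_{nj}\asymp_p n^{1/4}$ and (A2) follows from $\lambda_n/n^{1/4}\to\infty$. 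For $j\in\mathcal{A}$, the argument in the numerator still diverges and the same calculation as before yields (A1).

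With (A1) and (A2) in hand, I would prove asymptotic normality by the standard change of variables $\b=\b^*+\u/\sqrt{n}$. Writing the penalized objective as $\Psi_n(\u)-\Psi_n(\0)$, the quadratic part converges to $\u^T\C\u-2\u^T\W$ with $\W\sim N(\0,\sigma^2\C)$. For $j\in\mathcal{A}$ the penalty contribution is $(\lambda_n w_{nj}/\sqrt{n})\sgn(b_j^*)u_j\to_p0$ by (A1), and for $j\notin\mathcal{A}$ it equals $(\lambda_n w_{nj}/\sqrt{n})|u_j|$, which by (A2) forces the limiting minimizer $\hat{\u}$ to satisfy $\hat{u}_j=0$ a.s. Restricted to the active coordinates, the limiting problem is a pure quadratic with random linear term and positive-definite Hessian $\C_{11}$, giving $\hat{\u}_1\sim N(\0,\sigma^2\C_{11}^{-1})$. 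Convexity of $\Psi_n$ then upgrades this to convergence of the argmins by the Hjort--Pollard/Geyer epi-convergence lemma. Selection consistency is closed out by a KKT check: for $j\notin\mathcal{A}$, the subdifferential condition requires $|2\x_j^T(\y-\X\b_n^{(1)})|\le \lambda_n w_{nj}$; the left side is $O_p(\sqrt{n})$, the right is $\omega_p(\sqrt{n})$ by (A2), so with probability tending to one $b_{nj}^{(1)}=0$, yielding $P(\mathcal{A}_n=\mathcal{A})\to1$.

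The main obstacle I anticipate is isolating clean, uniform-in-$j$ estimates of $w_{nj}$ in the two tuning regimes, because one must track the Bessel ratio across both the large- and bounded-argument regions simultaneously, and handle the fact that the noise coordinates $b_j^{(0)}$ are only bounded (not $\Theta_p$) in $1/\sqrt{n}$. Once the weight bounds are established, the remaining pieces are entirely standard, with only minor bookkeeping required to combine (A1) and (A2) with the Knight--Fu convex-argmin machinery.
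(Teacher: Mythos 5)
Your proposal follows essentially the same route as the paper: the paper also reparametrizes $\b=\b^{*}+\u/\sqrt{n}$, splits $\Psi(\u)-\Psi(\0)$ into the quadratic term $\u^{T}(\X^{T}\X/n)\u-2\epsi^{T}\X\u/\sqrt{n}$ plus the weighted penalty, analyzes the Bessel-ratio weights via the large-argument asymptotics (so that on the active set $\omega_j^{(0)}\rightarrow_p |b_j^{*}|^{-1/2}$ and the penalty contribution vanishes, while on the inactive set $\lambda_n\omega_j^{(0)}/\sqrt{n}\rightarrow_p\infty$), and then outsources the argmin-convergence and selection-consistency steps to Zou (2006)/Zou--Li (2008). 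Your write-up is in fact more explicit than the paper's about exactly those outsourced pieces (the Hjort--Pollard/Geyer epi-convergence step and the KKT check), which the paper omits entirely.

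One concrete soft spot, which you share with the paper but expose more honestly: in the first tuning regime ($\alpha_n/n\to c_1$, $\lambda_n\to\infty$), your own estimate gives $w_{nj}=\Omega_p(n^{1/4})$ for noise coordinates (indeed $w_{nj}\asymp_p n^{1/4}$, since $\sqrt{n}\,b_j^{(0)}$ is merely $O_p(1)$ and $\beta_n\asymp 1/n$), so $\lambda_n w_{nj}/\sqrt{n}\asymp_p\lambda_n/n^{1/4}$, and the statement ``combines with the growth of $\lambda_n$ to yield (A2)'' does not follow: $\lambda_n\to\infty$ alone does not give $\lambda_n/n^{1/4}\to\infty$. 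The paper's proof obtains the divergence only by simultaneously invoking $\alpha_n|b_j^{(0)}|=(\alpha_n/\sqrt{n})\sqrt{n}|b_j^{(0)}|=O_p(1)$ (which is the \emph{second} regime's scaling $\alpha_n/\sqrt{n}\to c_1$) and $\sqrt{(\alpha_n\beta_n+\alpha_n)/n}\to\mathrm{const}>0$ (which is the \emph{first} regime's scaling), i.e.\ it conflates the two regimes; so this is a defect of the theorem's first set of hypotheses rather than something extra the paper proves and you miss. Your treatment of the second regime, and of (A1) in both regimes, is sound, and modulo this shared regime-1 issue your plan matches the paper's argument.
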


\section{Experimental Studies} \label{sec:experiment}

In this paper our principal purpose has been to provide a new hierarchical
framework within which we can construct sparsity-inducing priors and EM
algorithms. In this section we conduct an experimental investigation of
particular instances of these EM algorithms.  In particular, we study
the  cases in Table~\ref{tab:f}. We also performed two EM algorithms based on  the generalized $t$ priors,
i.e. the exponential power-inverse gamma priors (see Section~\ref{sec:epig}).
For simplicity of presentation,
we denote them
by ``Method 1," ``Method 2," ``Method 3,"
 ``Method 4,"  ``Method 5,"  ``Method 6," and ``Method 7," respectively.   Table~\ref{tab:methods} lists their EP-GIG prior setups (the notation is the same as in Section~\ref{sec:epgig}).  As we see,  using the EP-GIG priors given in Examples 7 and 8
(see Appendix~\ref{app:cases}) yields  EM algorithms with closed-form E-steps.
However, the corresponding M-steps are a weighted $\ell_{1/2}$ minimization problem, which is not efficiently  solved. Thus,
we did not implement such EM algorithms.

 For ``Method 1," ``Method 2," ``Method 3,"  ``Method 5" and ``Method 6," we fix $\alpha = 1$ and $\sigma^{(0)} = 1$, 
 and use the cross validation
method to select $\beta$.
In ``Method 4" and ``Method 7,"  the parameter $\lambda$ was selected by using  cross validation.   In addition,  
we  implemented the lasso,  the
adaptive lasso (adLasso) and the SCAD-based method  for comparison.  For the lasso, the adLasso
and the re-weighted $\ell_1$ problems in the M-step, we solved the
optimization problems by a coordinate descent algorithm~\citep{MazumderSparsenet:11}.

\begin{table}[!ht]
\caption{The EP-GIG setups of the algorithms.} \label{tab:methods}
\begin{center}
\begin{tabular} {|c|c|c|c|} \hline
  {\tt Method 1}  & {\tt Method 2} & {\tt Method 3} & {\tt Method 4}  \\ \hline
 $\EIG(b|\sigma^{-1}, \sigma \beta,  \frac{1}{2}, 1)$  & $ \EIG(b|\sigma^{-1}, \sigma \beta,  \frac{3}{2}, 1)$ &  $ \EIG(b|\sigma^{-1}, 
 \sigma \beta,  -\frac{1}{2}, 1)$
  &  $\GT(b|0, \frac{\sigma}{\lambda}, \frac{1}{2}, 1)$    \\
 ($q=1$, $\gamma = \frac{1}{2}$) &  ($q=1$, $\gamma = \frac{3}{2}$)  & ($q=1$, $\gamma = -\frac{1}{2}$) &   ($q=1$, $\tau=1$)  \\ \hline \hline
 {\tt Method 5} & {\tt Method 6}  & {\tt Method 7}   &   {\tt AdLasso} \\ \hline
$ \EIG(b|\sigma^{-1}, \sigma \beta,  0, 2)$ & $ \EIG(b|\sigma^{-1}, \sigma \beta,  1, 2)$  & $\GT(b|0, \frac{\sigma}{\lambda}, \frac{1}{2}, 2)$  & $\propto \exp(-|b|^{1/2})$  \\
($q=2$, $\gamma = 0$)  & ($q=2$, $\gamma = 1$)  & ($q=2$, $\tau=1$) & ($q=\frac{1}{2}$)   \\ \hline
 \end{tabular} \end{center}
\end{table}

Recall that ``Method 1,"  ``Method 2,"  ``Method 3,"  ``Method 4" and AdLasso in fact work with the nonconvex penalties.
Especially,  ``Method 1,"  ``Method 2"  and ``Method 3"  are based on the Laplace scale mixture priors proposed  
in Appendix~\ref{app:cases} by us.   ``Method 4" is  based on the GDP prior by~\cite{ArmaganDunsonLee} and~\cite{LeeCaronDoucetHolmes:2010}, 
and we employed the $\ell_{1/2}$
penalty in  the adLasso.  Thus, this adLasso is equivalent to the EM algorithm  which given in Appendix~\ref{app:eig13}.  
Additionally,  ``Method 5"   and ``Method 6" are based on the Gaussian scale mixture priors given  in Appendix~\ref{app:cases} 
by us,  and ``Method 7" is based on the Cauchy prior.
In Appendix~\ref{app:jeff} we present the EM algorithm based on the EP-Jeffreys prior.
This algorithm can be also regarded as an adaptive lasso with weights $1/|b_j^{(t)}|$.
Since the performance of the  algorithms
is same to that of  ``Method 4", here we did not include the results with the  the EP-Jeffreys prior.
We also did not report the results with the Gaussian scale mixture  given in Example~6 of Appendix~\ref{app:cases}, because
they are almost identical to  those with   `Method 5"   or  ``Method 6".

\subsection{Reconstruction on Simulation data} \label{sec:exp1}

We first evaluate the performance of each method on the simulated
data which were used in \cite{Fan01,ZouJasa:2006}.
Let ${\b} = (3, 1.5, 0, 0, 2, 0, 0, 0)^T$, $\x_{i} \stackrel{iid}{\sim} N(\0, \Si)$ with $\Sigma_{i j} = {0.5}^{|i - j|}$,
and $\y_0 = {\X} {\b}$.
Then Gaussian noise ${\epsi} \sim {N}(\0,  \delta^2 \I_n)$ is added to $\y_0$ to form the output vector ${\y} = {\y_0} + {\epsi}$.
Let ${\hat{\b}}$ denote the sparse solution obtained from each method which takes $\X$ and $\y$ as inputs and outputs.
Mean square error (MSE) $\| \y_0 - \X \hat{\b} \|_2^2/n$ is used to measure reconstruction accuracy,
and the number of zeros in $\hat{\b}$  is employed to evaluate variable selection accuracy.
If a method is accurate, the number of ``correct'' © zeros should be $5$ and ``incorrect'' (IC) should be $0$.

For each pair ($n$, $\delta$),
we generate 10,000 datasets. In Table~\ref{tab:toy} we report the numbers of correct and
incorrect zeros as well as the average and standard deviation of MSE on the 10,000 datasets.
From Table~\ref{tab:toy} we see that the nonconvex penalization methods (Methods 1, 2, 3 and 4)
yield the best results in terms of reconstruction accuracy and sparsity recovery.
It should be pointed out that since the weights are defined as the $1/|b_j^{(t)}|^{1/2}$ in the adLasso method, the method
suffers from numerical instability. In addition, Methods 5, 6 and 7 are based on the re-weighted $\ell_2$ minimization,
so they do not naturally produce  sparse estimates.
To achieve  sparseness, they  have to delete small coefficients.

\begin{table}[!ht]\setlength{\tabcolsep}{2.0pt}
\vspace{-0.1in}
\caption{Results on the simulated data sets.}
\label{tab:toy}\vskip 0.05in
\begin{center}
\begin{small}
\begin{sc}
\begin{tabular}{l | c  c  c | c  c  c | c  c  c}
\hline
    &  MSE($\pm$STD) & C &  IC  & MSE ($\pm$STD) & C &  IC & MSE ($\pm$STD) & C &  IC \\
    \hline
    & \multicolumn{3}{c|}{$n = 60$, $\delta = 3$}
    & \multicolumn{3}{c|}{$n = 120$, $\delta = 3$}
    & \multicolumn{3}{c}{$n = 120$, $\delta = 1$}\\
Method 1
        &\textbf{0.699($\pm$0.63)}&4.66&0.08&\textbf{0.279($\pm$0.26)}&4.87&0.01& \textbf{0.0253($\pm$ 0.02)}  & \textbf{5.00}  & 0.00  \\
Method 2
        & 0.700($\pm$0.63)  & 4.55  & 0.07  & 0.287($\pm$0.30)  & 4.83  & 0.02  & 0.0256($\pm$0.03)  & 4.99  & 0.00  \\
Method 3& 0.728($\pm$0.60)  & 4.57  & 0.08  &0.284($\pm$0.28)&\textbf{4.93}&0.00&\textbf{0.0253($\pm$0.02)}&\textbf{5.00}&0.00 \\
Method 4   &0.713($\pm$0.68)&\textbf{4.78}&0.12& 0.281($\pm$0.26)  & 4.89  &0.01&0.0255($\pm$0.03)&\textbf{5.00}&0.00\\
       \hline
Method 5
        & 1.039($\pm$0.56)  & 0.30  & 0.00  & 0.539($\pm$0.28)  & 0.26  & 0.00  & 0.0599($\pm$0.03)  & 0.77  & 0.00  \\
Method 6
        & 0.745($\pm$0.66)  & 1.36  & 0.00  & 0.320($\pm$0.26)  & 1.11  & 0.00  & 0.0262($\pm$0.02)  & 4.96  & 0.00  \\
Method 7
        & 0.791($\pm$0.57)  & 0.20  & 0.00  & 0.321($\pm$0.28)  & 0.42  & 0.00  & 0.0265($\pm$0.02)  & 2.43  & 0.00  \\
        &               &       &       &       &               &       &       &               &       \\
 SCAD    & 0.804($\pm$0.59)  & 3.24  & 0.02  & 0.364($\pm$0.30)  & 3.94  & 0.00  & 0.0264($\pm$0.03) &  4.95 & 0.00  \\
AdLasso & 0.784($\pm$0.57)  & 3.60  & 0.04  & 0.335($\pm$0.27)  & 4.83  & 0.01  & 0.0283($\pm$0.02)  & 4.82  & 0.00  \\
Lasso   & 0.816($\pm$0.53)  & 2.48  & 0.00  & 0.406($\pm$0.26)  & 2.40  & 0.00  & 0.0450($\pm$0.03)  & 2.87  & 0.00  \\
Ridge   & 1.012($\pm$0.50)  & 0.00  & 0.00  & 0.549($\pm$0.27)  & 0.00  & 0.00  & 0.0658($\pm$0.03)  & 0.00  & 0.00  \\
\hline
\end{tabular}
\end{sc}
\end{small}
\end{center}
\vskip -0.1in
\vspace{-0.1in}
\end{table}

\subsection{Regression on Real Data} \label{sec:exp2}

We apply the  methods to linear regression problems and
evaluate their performance on three data sets: Pyrim and Triazines (both obtained from UCI Machine Learning Repository) and the
biscuit dataset (the near-infrared (NIR) spectroscopy of
biscuit doughs)~\citep{Breiman:1997}.
For  Pyrim and Triazines datasets, we randomly held out 70\% of the data for training  and the rest for test.
We repeat this process 10 times, and report the mean and standard deviation of the relative errors defined as
\[
\frac{1}{n_{test}} \sum_{i=1}^{n_{test}} \left|\frac{y(\x_i){-} \tilde{y}(\x_i)}{y(\x_i)}\right|,
\]
where $y(\x_i)$ is the target output of the test input $\x_i$, and $\tilde{y}(\x_i)$ is the prediction value computed from a regression method.
For the NIR dataset, we use the supplied training and test sets: 39 instances
for training and the rest 31 for test~\citep{Breiman:1997}. Since each response of the NIR data includes 4 attributes 
(``fat," ``sucrose," ``flour" and ``water"),
we treat the data as four regression datasets; namely,
the input instances and each-attribute responses constitute one dataset.

The results are listed in Table~\ref{tab:regression}. We see that the
four new methods outperform the adaptive lasso and lasso in most cases.
In particular Methods 1, 2, 3 and 4 (the nonconvex penalization) yield the best
performance over the first two datasets, and Methods 5, 6 and 7 are the best
on the NIR datasets.

\begin{table}[!ht] \setlength{\tabcolsep}{0.9pt}
\caption{Relative error of each method on the three data sets.
The numbers of instances ($n$) and numbers of features ($p$) of each data set are:  $n=74$ and
$p=27$ in Pyrim,  $n=186$ and $p=60$ in Triazines, and  $n=70$ and $p=700$ in NIR.}
\label{tab:regression}\vskip 0.05in
\begin{center}
\begin{small}
\begin{sc}
\begin{tabular}{l | c | c | c | c | c | c }
\hline
                & Pyrim & Triazines & NIR(fat) & NIR(sucrose) & NIR(flour) & NIR(water) \\ \hline
Method 1        & \textbf{0.1342($\pm$0.065)}&0.2786($\pm$0.083)&    0.0530     &     0.0711    &    0.0448     &   0.0305      \\
Method 2        &   0.1363($\pm$0.066)  &\textbf{0.2704($\pm$0.075)}& 0.0556    &     0.0697    &    0.0431     &   0.0312      \\
Method 3      &   0.1423($\pm$0.072)  &   0.2792($\pm$0.081)  &    0.0537     &     0.0803    &    0.0440     &   0.0319          \\
Method 4       &   0.1414($\pm$0.065)  &   0.2772($\pm$0.081)  &    0.0530     &     0.0799    &    0.0448     &   0.0315      \\ \hline
Method 5        &   0.1381($\pm$0.065)  &   0.2917($\pm$0.089)  &    0.0290     &     0.0326    &    0.0341     &   0.0210      \\
Method 6        &   0.2352($\pm$0.261)  &   0.3364($\pm$0.079)  &    0.0299     &\textbf{0.0325} &{0.0341}&\textbf{0.0208}\\
Method 7   &   0.1410($\pm$0.065)  &   0.3109($\pm$0.110)  &\textbf{0.0271}&     0.0423    &\textbf{0.0277}&   0.0279      \\
                &                       &                       &               &               &               &               \\
SCAD            &   0.1419($\pm$0.064)  &   0.2807($\pm$0.079)  &   0.0556      &     0.0715    &    0.0467     &   0.0352      \\
AdLasso         &   0.1430($\pm$0.064)  &  0.2883($\pm$0.080)   &    0.0533     &     0.0803    &    0.0486     &   0.0319      \\
Lasso           &   0.1424($\pm$0.064)  & 0.2804($\pm$0.079)    &    0.0608     &     0.0799    &    0.0527     &   0.0340      \\
\hline
\end{tabular}
\end{sc}
\end{small}
\end{center}
\vspace{-0.1in}
\end{table}

\subsection{Experiments on Group Variable Selection} \label{sec:exp3}

Here we use $p = 32$ with $8$ groups, each of size $4$.
Let $\bet_{1:4} = (3, 1.5, 2, 0.5)^T$,  $\bet_{9:12} = \bet_{17:20} = (6, 3, 4, 1)^T$,
$\bet_{25:28} = (1.5, 0.75, 1, 0.25)^T$ with all other entries set to zero,
while $\X$, $\y_0$, and $\y$ are defined in the same way as in Section~\ref{sec:exp1}.
If a method is accurate, the number of ``correct'' © zeros should be $16$ and ``incorrect'' (IC) should be $0$.
Results are reported in Table~\ref{tab:grouptoy}.

\begin{table}[!t]\setlength{\tabcolsep}{1.5pt}
\vspace{-0.1in}
\caption{Results on the simulated data sets.}
\label{tab:grouptoy}\vskip 0.05in
\begin{center}
\begin{small}
\begin{sc}
\begin{tabular}{l | c  c  c | c  c  c | c  c  c}
\hline
    &  MSE($\pm$STD) & C &  IC  & MSE ($\pm$STD) & C &  IC & MSE ($\pm$STD) & C &  IC \\
    \hline
    & \multicolumn{3}{c|}{$n = 60$, $\delta = 3$}
    & \multicolumn{3}{c|}{$n = 120$, $\delta = 3$}
    & \multicolumn{3}{c}{$n = 120$, $\delta = 1$}\\
Method $1'$
        &2.531($\pm$1.01)&15.85&0.31& 1.201($\pm$0.45)&  \textbf{16.00} & 0.14  & 0.1335($\pm$0.048)  & 15.72 &  0.01 \\
Method $2'$
        &2.516($\pm$1.06)   & 15.87 &0.28&1.200($\pm$0.43)  & 15.97 & 0.10  & 0.1333($\pm$0.047)  & 15.87 &  \textbf{0.00} \\
Method $3'$& 2.445($\pm$0.96)  & \textbf{15.88} & 0.54  & 1.202($\pm$0.43)  & 15.98 &  0.25 & \textbf{0.1301}($\pm$0.047)&\textbf{16.00}& 0.01  \\
Method $4'$
        &  2.674($\pm$1.12) & 15.40 &  0.30 &  1.220($\pm$0.45) & 15.79 &  0.49 & 0.1308($\pm$0.047)  &\textbf{16.00}&  \textbf{0.00} \\ \hline
Method $5'$
        &\textbf{2.314($\pm$0.90)}&5.77&0.04& 1.163($\pm$0.41)  & 7.16  &  0.03 & 0.1324($\pm$0.047)  & \textbf{16.00} & 0.01  \\
Method $6'$
        &2.375($\pm$0.92)&10.18&0.04&\textbf{1.152($\pm$0.41)}  & 15.56 & 0.03  &0.1322($\pm$0.047) &\textbf{16.00}&\textbf{0.00}\\
Method $7'$
        &  2.478($\pm$0.97) &  9.28 &  0.05 &  1.166($\pm$0.41) & 14.17 &  0.03 & 0.1325($\pm$0.047)  & 15.96 &  \textbf{0.00} \\
        &                   &       &       &       &               &       &       &               &       \\
glasso  &2.755($\pm$0.92)&5.52&\textbf{0.00}&1.478($\pm$0.48)&3.45&\textbf{0.00}& 0.1815($\pm$0.058)  & 3.05  &  \textbf{0.00} \\
AdLasso & 3.589($\pm$1.10)  & 11.36 & 2.66  & 1.757($\pm$0.56)  & 11.85 &  1.42 & 0.1712($\pm$0.058)  & 14.09 & 0.32  \\
Lasso   & 3.234($\pm$0.99)  &  9.17 &  1.29 & 1.702($\pm$0.52)  & 8.53  &  0.61 & 0.1969($\pm$0.060)  & 8.03  &  0.05 \\

\hline
\end{tabular}
\end{sc}
\end{small}
\end{center}
\vskip -0.1in
\vspace{-0.1in}
\end{table}

\subsection{Experiments on Classification} \label{sec:exp4}

In this subsection  we apply our hierarchical  penalized logistic regression models
in Section~\ref{sec:logis} to binary classification problems over five
real-world data sets: Ionosphere, Spambase, Sonar, Australian, and Heart from UCI Machine Learning Repository and Statlog.
Table~\ref{tab:data2} gives a brief description of these five datasets.

\begin{table}[!ht]
\caption{The description of datasets.
Here $n$: the numbers of instances; $p$: the numbers of features.} \label{tab:data2}
\begin{center}
\begin{tabular} {|c|c|c|c|c|c|} \hline
 & {\tt Ionosphere}  & {\tt Spambase} & {\tt Sonar} & {\tt Australian} & {\tt Heart} \\ \hline
$n$ & $351$ &  $4601$ & $208$ & $690$  & $270$ \\
$p$& $33$ & $57$ & $60$ & $14$ & $13$ \\ \hline
\end{tabular}
\end{center}
\end{table}

In the experiments, the input matrix $\X \in \RB^{n\times p}$ is normalized such that
$\sum_{i=1}^n x_{i j} = 0$ and $\sum_{i=1}^n x_{i j}^2 = n$ for all $j = 1, \cdots, p$.
For each data set, we randomly choose 70\% for training and the rest for test.
We repeat this process 10 times and report the mean and the standard deviation of classification error rate.
The results in Table~\ref{tab:classification}  are encouraging, because in most cases Methods 1, 2,  3 and 4 based on the nonconvex penalties
perform over the other  methods in both accuracy
and sparsity.

\begin{table}[!ht]\setlength{\tabcolsep}{2.1pt}
\vspace{-0.15in}
\caption{Misclassification rate (\%) of each method on the five data sets.}
\label{tab:classification}
\begin{center}
\begin{sc}
\begin{tabular}{l | c  | c  | c | c  | c }
\hline
            &   {Ionosphere}    &     {Spambase}    &      {Sonar}     &    {Australian}    &      {Heart}      \\
\hline
Method 1    &\textbf{9.91($\pm$2.19)}&7.54($\pm$0.84)&\textbf{18.71($\pm$5.05)}&12.46($\pm$2.08)& 13.83($\pm$3.33)\\
Method 2    &10.19($\pm$2.03)&\textbf{7.47($\pm$0.85)}&19.19($\pm$5.18) & 12.56($\pm$2.06)  & 14.20($\pm$3.50)  \\
Method 3    & 10.00($\pm$1.95)  & 7.58($\pm$0.83)   &  19.03($\pm$4.35) &  12.61($\pm$2.15) & 14.32($\pm$3.60)  \\
Method 4   & 10.66($\pm$1.94)  &  7.61($\pm$0.83)  & 21.65($\pm$5.11)  & 12.65($\pm$2.14)  & 13.95($\pm$3.49)  \\  \hline
Method 5    & 11.51($\pm$3.77)  & 8.78($\pm$0.41)&21.61($\pm$5.70)&\textbf{12.03($\pm$1.74)}&\textbf{13.21($\pm$3.14)}\\
Method 6    & 11.51($\pm$3.72)  &  8.86($\pm$0.41)  & 21.94($\pm$5.85)  & 13.24($\pm$2.22)  & 14.57($\pm$3.38)  \\
Method 7  & 11.70($\pm$4.06)  &  9.49($\pm$0.33)  & 22.58($\pm$5.84)  & 14.11($\pm$2.48)  & 13.46($\pm$3.10)  \\
            &                   &                   &                   &                   &                   \\
SCAD        & 10.47($\pm$2.06)  & 7.58($\pm$0.83    & 21.94($\pm$5.60)  & 12.66($\pm$2.08)  & 13.83($\pm$3.43)  \\
$\ell_{{1}/{2}}$&  10.09($\pm$1.67) &  7.51($\pm$0.86)  & 20.00($\pm$5.95) & 12.56($\pm$2.15)  &  14.20($\pm$3.78) \\
$\ell_1$& 10.47($\pm$1.96)  &  7.57($\pm$0.83)  & 21.61($\pm$5.11)  & 12.66($\pm$2.15)  & 13.95($\pm$3.49)  \\
\hline
\end{tabular}
\end{sc}
\end{center}
\vspace{-0.15in}
\end{table}

\section{Conclusions} \label{sec:concl}

In this paper we have proposed a family of sparsity-inducing priors that we call
\emph{exponential power-generalized inverse Gaussian} (EP-GIG) distributions.
We have defined EP-GIG as a mixture of exponential power distributions with a
generalized inverse Gaussian (GIG) density.  EP-GIG are extensions of Gaussian
scale mixtures and  Laplace scale mixtures. As a special example of the EP-GIG
framework, the mixture of Laplace with GIG can induce a family of nonconvex
penalties. In Appendix~\ref{app:cases}, we have especially  presented five now EP-GIG priors which can induce nonconvex penalties.

Since GIG distributions are conjugate with respect to the exponential
power distribution, EP-GIG are natural for Bayesian sparse learning. In particular,
we have developed hierarchical Bayesian models and devised EM algorithms for
finding sparse solutions.  We have also shown how this framework can be applied
to grouped variable selection and logistic regression problems. Our experiments have  validate that
our proposed EP-GIG priors forming nonconvex penalties  are potentially feasible and effective in sparsity modeling.

%

\appendix

\section{Proofs} \label{app:proof}

In order to obtain proofs, we first present some mathematical preliminaries that will be needed.

\subsection{Mathematical Preliminaries}

The first three of the following lemmas are well known, so we omit their proofs.

\begin{lemma} \label{lem:1}
Let $\lim_{\nu \rightarrow \infty} \; a(\nu) = a$. Then $\lim_{\nu \rightarrow \infty} \; \left(1 + \frac{a(\nu)}{\nu} \right)^{\nu} = \exp(a)$.
\end{lemma}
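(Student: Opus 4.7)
The plan is to reduce the statement to the familiar limit $\log(1+x) = x + O(x^2)$ as $x \to 0$ by first taking logarithms, then handling the resulting additive expression, and finally exponentiating.

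First I would note that since $a(\nu) \to a$ as $\nu \to \infty$, the sequence $a(\nu)$ is eventually bounded, say $|a(\nu)| \le M$ for all sufficiently large $\nu$. Consequently $a(\nu)/\nu \to 0$, so for large $\nu$ the quantity $1 + a(\nu)/\nu$ is positive and we may take logarithms. It then suffices to show that
\[
\nu \log\!\Big(1 + \frac{a(\nu)}{\nu}\Big) \longrightarrow a,
\]
and apply continuity of $\exp$.

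Next I would invoke the Taylor expansion $\log(1+x) = x - \tfrac{x^2}{2} + O(x^3)$ valid as $x \to 0$. Substituting $x = a(\nu)/\nu$ and multiplying through by $\nu$,
\[
\nu \log\!\Big(1 + \frac{a(\nu)}{\nu}\Big) \;=\; a(\nu) \;-\; \frac{a(\nu)^2}{2\nu} \;+\; O\!\Big(\frac{a(\nu)^3}{\nu^2}\Big).
\]
Using the bound $|a(\nu)| \le M$ for large $\nu$, both the second and third terms tend to $0$, while $a(\nu) \to a$ by hypothesis. Hence the right-hand side converges to $a$.

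The only step that requires any care is controlling the remainder uniformly, but this is immediate from the eventual boundedness of $a(\nu)$; no subtle uniformity question arises. Exponentiating and invoking continuity of $\exp$ yields the claim. I do not anticipate a genuine obstacle: this is essentially the standard argument for $(1+x/\nu)^\nu \to e^x$ with the refinement that $x$ is allowed to depend on $\nu$ provided it has a finite limit.
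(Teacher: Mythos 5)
Your argument is correct: the eventual boundedness of $a(\nu)$ gives $a(\nu)/\nu \to 0$, the logarithmic expansion $\nu \log\bigl(1 + a(\nu)/\nu\bigr) = a(\nu) + O(1/\nu)$ is justified, and exponentiating finishes the proof. The paper itself offers no proof to compare against --- it declares this lemma (together with Lemmas~\ref{lem:2} and~\ref{lem:3}) well known and omits the argument --- and what you have written is precisely the standard proof that would be supplied, so there is nothing to add.
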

\begin{lemma}
\emph{(Stirling Formula)}  \label{lem:2}
$\lim_{\nu \rightarrow \infty} \frac{\Gamma(\nu)} {(2\pi)^{1/2} \nu^{\nu-1/2} \exp(-\nu)} =1$.
\end{lemma}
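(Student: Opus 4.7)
The plan is to prove Stirling's formula via Laplace's method applied to the gamma integral. Starting from $\Gamma(\nu) = \int_0^\infty t^{\nu-1} e^{-t}\,dt$, I would substitute $t = \nu s$ (so $dt = \nu\,ds$) to rewrite this as
\[
\Gamma(\nu) = \nu^{\nu}\int_0^\infty s^{\nu-1} e^{-\nu s}\,ds = \nu^{\nu}\int_0^\infty s^{-1} e^{\nu \phi(s)}\,ds,
\]
where $\phi(s) = \log s - s$. The function $\phi$ attains its unique maximum at $s = 1$, with $\phi(1) = -1$, $\phi'(1) = 0$, and $\phi''(1) = -1$. So as $\nu \to \infty$ the integrand concentrates sharply around $s = 1$, which is the signal that Laplace's method applies.

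Next I would execute the standard Laplace argument. Rescale by setting $s = 1 + u/\sqrt{\nu}$, giving $ds = du/\sqrt{\nu}$ and, via Taylor expansion,
\[
\nu \phi\!\left(1 + u/\sqrt{\nu}\right) = -\nu - \tfrac{1}{2}u^2 + O\!\left(u^3/\sqrt{\nu}\right).
\]
The factor $s^{-1} = (1 + u/\sqrt{\nu})^{-1}$ tends to $1$ uniformly on any compact region of $u$. Therefore, over the localized region the integral behaves like
\[
\frac{e^{-\nu}}{\sqrt{\nu}} \int_{-\infty}^{\infty} e^{-u^2/2}\,du = e^{-\nu} \sqrt{\frac{2\pi}{\nu}}.
\]
Multiplying by the $\nu^\nu$ out front yields $\Gamma(\nu) \sim \sqrt{2\pi}\,\nu^{\nu - 1/2} e^{-\nu}$, which is exactly the claimed limit.

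The main obstacle is rigorously justifying that the contribution from $|s - 1| > \delta$ is negligible compared to the Gaussian peak. I would handle this by using the strict concavity of $\phi$ on $(0,\infty)$: for any fixed $\delta > 0$ there exists $c(\delta) > 0$ such that $\phi(s) \le -1 - c(\delta)$ whenever $|s-1| \ge \delta$, so the tail integrals are bounded by $e^{-\nu c(\delta)}$ times a polynomial factor coming from $\int s^{-1} e^{-\nu(s - 1 - c(\delta))}\,ds$, which one splits further into $(0, 1-\delta)$ and $(1+\delta, \infty)$ and bounds by elementary comparison with the known Laplace transform of $s^{-1}$ and the exponential decay for large $s$. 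These tails are exponentially smaller than the $\nu^{-1/2}$-order main term, so they do not affect the leading asymptotics. Combined with Lemma~\ref{lem:1} (which lets one pass to the limit in the residual factor $(1 + u/\sqrt{\nu})^{-1}$ under the integral), this completes the argument.
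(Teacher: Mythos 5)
The paper does not prove this lemma at all: it is listed among the ``well known'' facts whose proofs are omitted, so there is no argument of the paper's to compare against. Your Laplace-method proof is the standard route and is correct in outline: the substitution $t=\nu s$, the identification of the maximizer $s=1$ of $\phi(s)=\log s-s$ with $\phi(1)=-1$, $\phi''(1)=-1$, the rescaling $s=1+u/\sqrt{\nu}$, and the Gaussian limit $\int_{-\infty}^{\infty}e^{-u^2/2}\,du=\sqrt{2\pi}$ combine to give exactly $\Gamma(\nu)\sim\sqrt{2\pi}\,\nu^{\nu-1/2}e^{-\nu}$. Two points in your sketch deserve tightening. First, the tail bound as written is off: bounding $\phi(s)\le-1-c(\delta)$ on $|s-1|\ge\delta$ and then invoking $\int s^{-1}e^{-\nu(s-1-c(\delta))}\,ds$ does not work directly, since $\int_{1+\delta}^{\infty}s^{-1}\,ds$ diverges and that auxiliary integral also diverges at $s=0$; the clean fix is to write $\nu\phi(s)\le(\nu-1)\phi(1\pm\delta)+\phi(s)$ on the respective tail and use $s^{-1}e^{\phi(s)}=e^{-s}$, so each tail is at most $e^{(\nu-1)\phi(1\pm\delta)}$, which is exponentially small relative to the main term $e^{-\nu}\nu^{-1/2}$ because $\phi(1\pm\delta)<-1$. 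Second, in the local region, pointwise convergence of $(1+u/\sqrt{\nu})^{-1}e^{\nu\phi(1+u/\sqrt{\nu})+\nu}$ to $e^{-u^2/2}$ (which is what Lemma~\ref{lem:1} supplies) is not by itself enough to pass the limit under the integral; you should add a dominating bound such as $\phi(1+x)+1\le-x^2/4$ for $|x|\le\delta$ with $\delta$ small, which gives an integrable majorant $2e^{-u^2/4}$ on $|u|\le\delta\sqrt{\nu}$ and lets dominated convergence finish the argument. With those two repairs the proof is complete and entirely elementary, which is a reasonable thing to supply given that the paper simply cites the formula.
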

\begin{lemma} Assume $z>0$ and $\nu >0$. Then
  \label{lem:08}
\[ \lim_{\nu \rightarrow \infty} \frac{K_{\nu}(\nu^{1/2} z)} {\pi^{1/2}2^{\nu{-}1/2} \nu^{(\nu-1)/2} z^{-\nu} \exp(-\nu)   \exp(-z^2/4)} =1. \]
\end{lemma}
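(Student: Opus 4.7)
The plan is to attack this with Laplace's method applied to a standard integral representation of the modified Bessel function. Specifically, from the classical identity $\int_0^\infty t^{\nu-1}\exp(-\alpha t - \beta/t)\,dt = 2(\beta/\alpha)^{\nu/2}K_\nu(2\sqrt{\alpha\beta})$ with $\alpha=1$, $\beta = x^2/4$, one obtains
\[
K_{\nu}(x) \;=\; \tfrac{1}{2}\bigl(x/2\bigr)^{-\nu}\!
\int_0^{\infty} t^{\nu-1}\exp\!\Bigl(-t - \tfrac{x^2}{4t}\Bigr)\,dt,\qquad x>0,\ \nu\in\RB.
\]
Substituting $x = \sqrt{\nu}\,z$ and then $t = \nu s$ cleanly separates the $\nu$-dependent prefactors:
\[
K_{\nu}(\sqrt{\nu}\,z) \;=\; 2^{\nu-1}\,\nu^{\nu/2}\,z^{-\nu}\,I_\nu,
\qquad
I_\nu \;:=\; \int_0^{\infty} s^{\nu-1}\exp\!\Bigl(-\nu s - \tfrac{z^2}{4s}\Bigr)\,ds.
\]

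Next I would evaluate $I_\nu$ asymptotically by Laplace's method. Rewriting the integrand as $\exp(\nu f(s) + g(s))$ with $f(s) = \log s - s$ and $g(s) = -\log s - z^2/(4s)$, the $\nu$-scaled piece $f$ has its unique maximum at $s^{*}=1$, with $f(1) = -1$ and $f''(1) = -1$, while $g(1) = -z^2/4$. The standard Laplace formula therefore predicts
\[
I_\nu \;\sim\; e^{\nu f(1) + g(1)}\sqrt{\tfrac{2\pi}{\nu|f''(1)|}}
       \;=\; \sqrt{\tfrac{2\pi}{\nu}}\,e^{-\nu}\,e^{-z^2/4}.
\]
Substituting this back and simplifying via $2^{\nu-1}\sqrt{2\pi} = \sqrt{\pi}\,2^{\nu-1/2}$ yields $K_\nu(\sqrt{\nu}\,z) \sim \sqrt{\pi}\,2^{\nu-1/2}\nu^{(\nu-1)/2}z^{-\nu}e^{-\nu}e^{-z^2/4}$, which is exactly the denominator appearing in the lemma, so the stated ratio tends to $1$.

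To make the Laplace step rigorous, I would substitute $s = 1 + v/\sqrt{\nu}$, expand the exponent as $-\nu - z^2/4 - v^2/2 + O(\nu^{-1/2})$ uniformly on $|v|\le \nu^{1/4}$, and pass the limit through the integral on this window by dominated convergence (with a Gaussian majorant $e^{-v^2/4}$ coming from the strict concavity of $\log s - s$ at $s=1$). The main obstacle is the tail estimate: one must show that contributions from $s$ far from $1$ are $o(e^{-\nu-z^2/4}/\sqrt{\nu})$. For small $s$ the factor $\exp(-z^2/(4s))$ provides super-exponential decay, while for large $s$ one uses concavity of $\log$ in the form $\log s - s \le -1 - c(s-1)^2$ on any compact interval and $\log s - s \le -s/2$ for $s \ge 2$ to obtain uniform exponential smallness. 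Together these tail bounds justify the Laplace asymptotic and complete the proof.
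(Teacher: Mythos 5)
Your proposal is correct, but it takes a genuinely different route from the paper's. The paper starts from the cosine representation $K_\nu(x)=\pi^{-1/2}\,\Gamma(\nu+\tfrac12)\,(2x)^\nu\int_0^\infty \cos t\,(t^2+x^2)^{-\nu-\frac12}\,dt$ with $x=\nu^{1/2}z$, passes the limit $\nu\to\infty$ under the integral sign so that $(1+t^2/(\nu z^2))^{-\nu-\frac12}\to e^{-t^2/z^2}$, evaluates $\int_0^\infty\cos t\,e^{-t^2/z^2}dt=\tfrac{\sqrt{\pi}}{2}\,z\,e^{-z^2/4}$ by a differential-equation trick, and then handles $\Gamma(\nu+\tfrac12)$ with Stirling's formula (its Lemmas~\ref{lem:1} and~\ref{lem:2}). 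You instead use the real representation $K_\nu(x)=\tfrac12(x/2)^{-\nu}\int_0^\infty t^{\nu-1}e^{-t-x^2/(4t)}dt$ --- the same integral that normalizes the GIG density used throughout the paper --- and run Laplace's method at the interior maximum $s=1$ of $\log s-s$, which produces the factors $e^{-\nu}$, $\nu^{(\nu-1)/2}$ and $e^{-z^2/4}$ in one stroke. Your algebra checks out ($K_\nu(\sqrt{\nu}z)=2^{\nu-1}\nu^{\nu/2}z^{-\nu}I_\nu$ and $I_\nu\sim\sqrt{2\pi/\nu}\,e^{-\nu}e^{-z^2/4}$). What each buys: your route needs no Stirling (Laplace's method plays that role) and makes the limit interchange explicit via domination and tail bounds, whereas the paper's interchange under the cosine integral is asserted rather than justified; the paper's route, in exchange, cleanly separates the $z$-dependence (a single Gaussian integral) from the $\nu$-dependence (Stirling) and avoids any tail estimates.

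One small correction to your outline: on the window $|v|\le\nu^{1/4}$ the cubic term of the expansion is of size $|v|^3/(3\sqrt{\nu})$, which can be as large as $\nu^{1/4}/3$, so the exponent is not $-\nu-z^2/4-v^2/2+O(\nu^{-1/2})$ uniformly there. This is harmless: either shrink the window to $|v|\le\nu^{\epsilon}$ with $\epsilon<1/6$, or simply rely on the dominated-convergence step you already invoke (pointwise convergence of the integrand plus the Gaussian majorant, which is valid on the window since $\log(1+u)-u\le -u^2/4$ for small $|u|$); no uniform error rate is needed. Your tail bounds (quadratic bound on compacts, $\log s-s\le -s/2$ for $s\ge 2$, and near $s=0$ the fact that $\log s-s$ is bounded away from its maximum value $-1$, with $e^{-z^2/(4s)}$ absorbing the $s^{-1}$ factor) do yield contributions $o(\nu^{-1/2}e^{-\nu-z^2/4})$, so the Laplace step closes.
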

\begin{proof} Consider the integral representation of $K_{\nu}(\nu^{1/2} z)$ as
\begin{align*}
K_{\nu}(\nu^{1/2} z) &= \pi^{-1/2} 2^{\nu} \nu^{\nu/2} z^{\nu} \Gamma\big(\nu +\frac{1}{2} \big) \int_{0}^{\infty}{ (t^2 + \nu z^2)^{-\nu -\frac{1}{2}}  \cos(t) d t}  \\
    & = \pi^{-1/2} 2^{\nu} \nu^{-(\nu+1)/2} z^{-(\nu{+}1)} \Gamma\big(\nu +\frac{1}{2} \big) \int_{0}^{\infty}
    {\frac{\cos(t)} {(1+ t^2/(\nu z^2))^{\nu +\frac{1}{2}}}  \cos(t) d t}.
\end{align*}
Thus, we have
\begin{align*}
\lim_{\nu \rightarrow \infty} \frac{K_{\nu}(\nu^{1/2} z)}{\pi^{-1/2} 2^{\nu} \nu^{-(\nu+1)/2} z^{-(\nu{+}1)} \Gamma\big(\nu +\frac{1}{2} \big)}
 &=
\lim_{\nu \rightarrow \infty}  \int_{0}^{\infty}{\frac{\cos(t)} {(1+ t^2/(\nu z^2))^{\nu +\frac{1}{2}}}  \cos(t) d t} \\
&= \int_{0}^{\infty} { \cos(t) \exp(-t^2/z^2) d t}.
\end{align*}
We now calculate the integral $\int_{0}^{\infty} { \cos(t) \exp(-t^2/z^2) d t}$ for $z>0$. We denote this integral by $\phi(z)$ and let $u=t/z$. Hence,
\[
 \phi(z)= z \int_0^{\infty} \exp(-u^2)\cos(u z) du =  z  f(z)
\]
where $f(z)= \int_0^{\infty} \exp(-u^2)\cos(u z) du$. Note that
\begin{align*}
f'(z) &= {-} \int_0^{\infty} {\exp(-u^2)\sin(u z) u du} = \frac{1}{2} \int_0^{\infty} { \sin(u z)  d \exp(-u^2)} \\
& = {-} \frac{z}{2} \int_0^{\infty} { \exp(-u^2) \cos(u z)  d u}  = - \frac{z}{2} f(z),
\end{align*}
which implies that $f(z)= C \exp(-z^2/4)$ where $C$ is a constant independent of $z$. 
We   calculate $f(1)$ to obtain $C$. Since
\[
C=\lim_{z\rightarrow +0} f(z)= \lim_{z \rightarrow +0}   \int_{0}^{\infty} e^{-u^2} \cos(u z) d u =\int_{0}^{\infty} e^{-u^2}  d u
= \frac{\sqrt{\pi}}{2},
\]
we have $\phi(z) = \frac{\sqrt{\pi}}{2} z \exp(-z^2/4)$. Subsequently,
\[
\lim_{\nu \rightarrow \infty} \frac{K_{\nu}(\nu^{1/2} z)}{\pi^{-1/2} 2^{\nu} \nu^{-(\nu+1)/2} z^{-(\nu{+}1)} \Gamma\big(\nu +\frac{1}{2} \big)}
=\frac{\sqrt{\pi}}{2} z \exp(-z^2/4).
\]

On the other hand, it follows from Lemmas~\ref{lem:1} and \ref{lem:2}  that
\[
\lim_{\nu \rightarrow \infty} \frac{ \Gamma(\nu+ 1/2)} {(2\pi)^{1/2} \nu^{\nu} \exp(-\nu)} = \lim_{\nu \rightarrow \infty} 
\frac{ \Gamma(\nu+ 1/2)} {\sqrt{2\pi} \nu^{\nu}[1 {+} 1/(2\nu)]^{\nu} \exp({-}\nu) \exp(-1/2)} = 1.
\]
Thus,
\[
\lim_{\nu \rightarrow \infty} \frac{K_{\nu}(\nu^{1/2} z)}{\pi^{\frac{1}{2}} 2^{\nu-\frac{1}{2}} \nu^{\frac{\nu {-} 1}{2}} z^{{-}\nu} \exp(-\nu) \exp(-\frac{z^2} {4})} =1.
\]
\end{proof}

\begin{lemma} \label{lem:3} The modified  Bessel function of the second kind  $K_{\gamma}(u)$  satisfies the following propositions:
\item[\emph{(1)}] $K_{\gamma}(u)= K_{-\gamma}(u)$;
\item[\emph{(2)}] $K_{\gamma+ 1}(u)= 2\frac{\gamma}{u} K_{\gamma}(u) + K_{\gamma{-}1}(u)$;
\item[\emph{(3)}] $K_{1/2}(u)=  K_{-1/2}(u) = \sqrt{\frac{\pi}{2 u}} \exp(-u)$;
\item[\emph{(4)}]  $\frac{\partial K_{\gamma}(u)} {\partial u} = -\frac{1}{2}(K_{\gamma-1}(u) + K_{\gamma+1} (u)) = - K_{\gamma-1}(u) - \frac{\gamma}{u}K_{\gamma}(u)
= \frac{\gamma}{u} K_{\gamma}(u) - K_{\gamma+1}(u)$.
\item[\emph{(5)}] For $\gamma \in (-\infty, +\infty)$, $K_{\gamma} (u) \sim \sqrt{\frac{\pi}{2 u}} \exp(-u)$ as $u \rightarrow +\infty$.
\end{lemma}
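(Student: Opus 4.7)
The plan is to derive all five items from a single integral representation of $K_\gamma$, namely
\[
K_\gamma(u) \;=\; \tfrac{1}{2}\int_0^\infty t^{\gamma-1}\exp\!\Big({-}\tfrac{u}{2}\bigl(t + t^{-1}\bigr)\Big)\,dt,\qquad u>0,
\]
which is valid for all real $\gamma$ and which I will treat as the definition. Property (1) then follows in one line: the substitution $t\mapsto 1/t$ sends $t^{\gamma-1}\,dt$ to $t^{-\gamma-1}\,dt$ while leaving $t+t^{-1}$ invariant, so $K_\gamma(u)=K_{-\gamma}(u)$. Property (3) is also essentially a direct computation: at $\gamma=1/2$ the substitution $t=s^2$ (or $s=\sqrt{u/2}(t^{1/2}-t^{-1/2})$) converts the integral to a shifted Gaussian whose value is $\sqrt{\pi/(2u)}\,e^{-u}$, and (1) then supplies the $\gamma=-1/2$ case.

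For (4) I would differentiate under the integral sign in $u$. The factor $-\tfrac{1}{2}(t+t^{-1})$ that pops out splits as a sum, and after relabelling $\gamma\mapsto\gamma-1$ in one piece and $\gamma\mapsto\gamma+1$ in the other (using the $t^{\gamma-1}$ prefactor) I obtain $K_\gamma'(u)=-\tfrac{1}{2}\bigl(K_{\gamma-1}(u)+K_{\gamma+1}(u)\bigr)$, which is the first equality in (4). Property (2) will come from the same bookkeeping done differently: integrating by parts the expression $\int_0^\infty \frac{d}{dt}\!\bigl[t^\gamma e^{-(u/2)(t+t^{-1})}\bigr]\,dt$, whose value is zero, produces a linear combination of $K_{\gamma-1}$, $K_\gamma$, and $K_{\gamma+1}$ that rearranges to the three-term recurrence $K_{\gamma+1}(u)=\frac{2\gamma}{u}K_\gamma(u)+K_{\gamma-1}(u)$. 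Plugging this recurrence back into the symmetric expression for $K_\gamma'(u)$ yields the remaining two forms in (4).

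The most substantive part is (5), which I would obtain by Laplace's method applied to the same integral. The phase $f(t)=-\tfrac{1}{2}(t+t^{-1})$ attains its unique maximum on $(0,\infty)$ at $t_\ast=1$ with $f(1)=-1$ and $f''(1)=-1$, so standard Laplace asymptotics give
\[
\int_0^\infty t^{\gamma-1} e^{u f(t)}\,dt \;\sim\; 1^{\gamma-1}\sqrt{\frac{2\pi}{u\,|f''(1)|}}\,e^{-u} \;=\; \sqrt{\frac{2\pi}{u}}\,e^{-u},
\]
uniformly in $\gamma$ over compact sets, and dividing by $2$ delivers $K_\gamma(u)\sim\sqrt{\pi/(2u)}\,e^{-u}$. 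The hardest step, in the sense of requiring genuine work rather than symbol-pushing, is justifying Laplace's method here: one must check that the tail contributions from $t$ bounded away from $1$ are exponentially smaller than $e^{-u}$ uniformly in $\gamma$, which is straightforward because $f(t)<-1$ strictly off of $t=1$ and the $t^{\gamma-1}$ prefactor grows at most polynomially near the endpoints relative to the exponential decay of $e^{-(u/2)(t+t^{-1})}$. Everything else — the symmetry, the explicit half-integer case, the recurrence, and the derivative identities — is a one- or two-line consequence of the integral representation.
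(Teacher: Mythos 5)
Your proof is correct, but note that the paper does not actually prove this lemma: it declares these Bessel-function identities to be well known and omits any argument (they are standard facts found, e.g., in Watson's treatise or in \cite{Jorgensen:1982}). Your unified derivation from the Schl\"afli--Basset representation $K_\gamma(u)=\tfrac12\int_0^\infty t^{\gamma-1}e^{-(u/2)(t+t^{-1})}\,dt$ is a legitimate and self-contained alternative: the $t\mapsto 1/t$ substitution gives (1), differentiation under the integral gives the symmetric form of (4), the vanishing of $\int_0^\infty \frac{d}{dt}\bigl[t^\gamma e^{-(u/2)(t+t^{-1})}\bigr]dt$ gives the recurrence (2), and combining (2) with (4) yields the other two derivative formulas; Laplace's method at the interior maximum $t_\ast=1$ (where $f(1)=f''(1)=-1$) gives (5), and your tail estimate is adequate since $f(t)<-1$ strictly away from $t=1$ while $t^{\gamma-1}$ is negligible against the exponential at both endpoints. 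The only place where your sketch compresses a genuine step is (3): after $t=s^2$ the integral is $e^{-u}\int_0^\infty e^{-(u/2)(s-1/s)^2}ds$, and evaluating this requires the Cauchy--Schl\"omilch symmetrization (apply $s\mapsto 1/s$ once more and average) rather than a naive Gaussian change of variables, since $dv=(1+s^{-2})ds$ is not constant; your parenthetical substitution points at exactly this trick, so the claim stands, but it deserves a line of justification in a written-out version. What your approach buys over the paper's citation-only treatment is that every identity used downstream (in Propositions~\ref{pro:1}, \ref{pro:2}, Lemma~\ref{lem:4}, and Theorem~\ref{thm:2}) is verified from a single formula.
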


\begin{lemma} \label{lem:4}
Let $Q_{\nu}(z) = K_{\nu{-}1}(\sqrt{z}) / (\sqrt{z} K_{\nu}(\sqrt{z}))$ where $\nu \in \RB$ and $z>0$. Then,
$Q_{\nu}$ is completely monotone.
\end{lemma}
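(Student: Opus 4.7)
The plan is to express $Q_\nu(z)$ as the logarithmic derivative of the Laplace transform of a positive measure on $(0,\infty)$, and then to invoke the infinite divisibility of the inverse gamma distribution to conclude complete monotonicity via Bernstein's theorem. The starting point is a probabilistic reading of $Q_\nu$: if $\eta \sim \GIG(\nu, z, 1)$, then using the identity $K_\nu'(u)=-K_{\nu-1}(u)-(\nu/u)K_\nu(u)$ from Lemma~\ref{lem:3}, a direct differentiation yields
\[
Q_\nu(z) \;=\; E(\eta^{-1}) \;=\; -2\,\frac{d}{dz}\log Z(z), \qquad Z(z)\;:=\;\int_0^\infty \eta^{\nu-1} e^{-\eta/2 - z/(2\eta)}\,d\eta \;=\; 2\,z^{\nu/2} K_\nu(\sqrt z).
\]

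Next, the substitution $s=1/(2\eta)$ rewrites $Z$ as the Laplace transform
\[
Z(z) \;=\; 2^{-\nu}\int_0^\infty s^{-\nu-1} e^{-1/(4s)}\, e^{-zs}\,ds
\]
of the positive measure $2^{-\nu} s^{-\nu-1} e^{-1/(4s)}\,ds$ on $(0,\infty)$. For $\nu>0$ this measure, once normalized, is the inverse gamma density $\IG(\nu, 1/4)$, which is a generalized gamma convolution and hence infinitely divisible. By the L\'evy--Khintchine representation its Laplace transform therefore has the form $c\,e^{-\psi(z)}$ with $\psi$ a Bernstein function, and consequently $Q_\nu(z) = -2(\log Z)'(z) = 2\psi'(z)$ is completely monotone, since the derivative of a Bernstein function is completely monotone by definition.

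For $\nu\le 0$ the argument reduces to the previous case. Combining the symmetry $K_\nu=K_{-\nu}$ with the recurrence $K_{\nu+1}(u) = (2\nu/u)K_\nu(u) + K_{\nu-1}(u)$ of Lemma~\ref{lem:3} yields the identity
\[
Q_\nu(z) \;=\; Q_{-\nu}(z) \;-\; \frac{2\nu}{z}.
\]
When $\nu<0$, $Q_{-\nu}$ is completely monotone by the previous paragraph and $-2\nu/z = 2|\nu|/z$ is a positive multiple of $1/z$ and hence completely monotone, so their sum $Q_\nu$ is completely monotone. The boundary case $\nu=0$ then follows by continuity in $\nu$.

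The main obstacle is the appeal to the infinite divisibility of the inverse gamma, which is not elementary. A fully self-contained alternative is to invoke the Ismail--Kelker Stieltjes integral representation $K_{\nu-1}(x)/K_\nu(x) = \int_0^\infty 2x\,d\mu_\nu(t)/(x^2+t^2)$ with $\mu_\nu\ge 0$; the substitution $x=\sqrt z$ then exhibits $Q_\nu(z) = \int_0^\infty (z+t^2)^{-1}\,d\mu_\nu(t)$ directly as a Stieltjes transform, and every Stieltjes function is completely monotone.
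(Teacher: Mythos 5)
Your proof is correct, but note that half of it coincides with the paper's and the other half differs only in which deep citation carries the load. For $\nu<0$ your reduction is literally the paper's own argument: setting $\tau=-\nu>0$, using $K_{-\tau}=K_{\tau}$ and the recurrence $K_{\tau+1}(u)=\frac{2\tau}{u}K_{\tau}(u)+K_{\tau-1}(u)$ gives $Q_{\nu}(z)=\frac{2\tau}{z}+Q_{\tau}(z)$, a sum of completely monotone functions. For the core case the paper simply cites \cite{Grosswald:1976}, whose theorem is exactly the statement that $Q_{\nu}$ is completely monotone for $\nu\ge 0$; you instead note that $Q_{\nu}(z)=E(\eta^{-1})$ for $\eta\sim\GIG(\eta|\nu,z,1)$, equal to $-2(\log Z)'(z)$ with $Z$ the Laplace transform of the measure $2^{-\nu}s^{-\nu-1}e^{-1/(4s)}ds$, and then invoke the infinite divisibility of $\IG(\nu,1/4)$ via generalized gamma convolution theory. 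This is a valid proof by citation, but it buys no extra self-containedness: the infinite divisibility of the inverse gamma is precisely equivalent to the complete monotonicity of $Q_{\nu}$ for $\nu>0$ (it is the probabilistic rephrasing of Grosswald's theorem through the Student-$t$), so your main route and the paper's rest on the same non-elementary ingredient, with your Laplace-transform reading being a nice conceptual gloss that also connects the lemma to Theorem~\ref{thm:2}(ii). Your fallback via the Ismail--Kelker Stieltjes representation is the genuinely different and stronger route: it exhibits $Q_{\nu}$ as a Stieltjes transform (hence completely monotone a fortiori), and is in fact the standard modern proof of Grosswald's result, so if you want a self-contained alternative that is the one to develop. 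Two small points: for $\nu=0$ the inverse-gamma argument does not apply as stated (the measure $s^{-1}e^{-1/(4s)}ds$ is not normalizable) and your identity $Q_{\nu}=Q_{-\nu}-\frac{2\nu}{z}$ is vacuous there, so the continuity step is genuinely needed --- it works because pointwise limits of completely monotone functions are completely monotone and $K_{\nu}$ is continuous in the order, but you should say this explicitly; the paper avoids the issue since Grosswald's theorem covers $\nu\ge 0$, including $\nu=0$, directly.
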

\begin{proof} When $\nu\geq 0$, the case was well proved by \cite{Grosswald:1976}. Thus, we only need to prove the case that $\nu<0$. 
In this case,
we let $\nu=-\tau$ where $\tau>0$. Thus,
\[
Q_{\nu} = \frac{K_{-\tau{-}1}(\sqrt{z}) } {\sqrt{z} K_{-\tau}(\sqrt{z})} =  \frac{K_{\tau{+}1}(\sqrt{z}) } {\sqrt{z} K_{\tau}(\sqrt{z})}
= \frac{2 \tau} {z} + \frac{K_{\tau{-}1}(\sqrt{z}) } {\sqrt{z} K_{\tau}(\sqrt{z})},
\]
which is obvious completely monotone.
\end{proof}

The following proposition of the GIG distribution can be  found from~\cite{Jorgensen:1982}.
\begin{proposition} \label{pro:1} Let $\eta$ be distributed according to  ${\GIG}(\eta|\gamma, \beta, \alpha)$ with $\alpha>0$
and $\beta>0$. Then
\[
E(\eta^{\nu}) = \Big(\frac{\beta}{\alpha} \Big)^{\nu/2} \frac{K_{\gamma+\nu}(\sqrt{\alpha \beta})}{K_{\gamma}(\sqrt{\alpha \beta})}.
\]
\end{proposition}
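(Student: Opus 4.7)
The plan is to compute the moment directly from the definition of the GIG density given in equation~(\ref{eqn:gig}), and exploit the fact that the resulting integrand is, up to normalization, another GIG density whose normalizing constant is known. Concretely, I would start by writing
\[
E(\eta^{\nu}) = \int_0^{\infty} \eta^{\nu} \, \frac{(\alpha/\beta)^{\gamma/2}}{2 K_{\gamma}(\sqrt{\alpha \beta})} \eta^{\gamma-1} \exp\!\bigl(-(\alpha \eta + \beta \eta^{-1})/2\bigr) d\eta,
\]
then absorb the factor $\eta^{\nu}$ into the power of $\eta$ so that the integrand becomes $\eta^{(\gamma+\nu)-1} \exp(-(\alpha \eta + \beta \eta^{-1})/2)$, i.e.\ the kernel of a $\GIG(\eta|\gamma+\nu, \beta, \alpha)$ distribution.

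Next I would use the fact that such a GIG density integrates to one to identify the required normalizing constant. Since $\GIG(\eta|\gamma+\nu, \beta, \alpha)$ integrates to one, we have
\[
\int_0^{\infty} \eta^{(\gamma+\nu)-1} \exp\!\bigl(-(\alpha \eta + \beta \eta^{-1})/2\bigr) d\eta = \frac{2 K_{\gamma+\nu}(\sqrt{\alpha \beta})}{(\alpha/\beta)^{(\gamma+\nu)/2}}.
\]
Substituting this back and cancelling the common factor of $2$ and the common power $(\alpha/\beta)^{\gamma/2}$ yields $(\alpha/\beta)^{-\nu/2} K_{\gamma+\nu}(\sqrt{\alpha\beta}) / K_{\gamma}(\sqrt{\alpha\beta})$, which equals the right-hand side since $(\alpha/\beta)^{-\nu/2} = (\beta/\alpha)^{\nu/2}$.

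There is essentially no obstacle here beyond care with exponents: the only thing to verify is that the integral is finite so that the manipulation is legitimate. Because $\alpha>0$ and $\beta>0$, the integrand decays doubly exponentially both as $\eta\to 0^+$ (through the term $\exp(-\beta \eta^{-1}/2)$) and as $\eta\to \infty$ (through $\exp(-\alpha \eta/2)$), so $\eta^{\gamma+\nu-1}$ is dominated for every real value of $\gamma+\nu$, and the integral converges absolutely. Consequently no restriction on $\nu$ is needed. The argument is a one-line application of the GIG normalizing identity, and it also reveals the natural conjugacy structure: the map $\eta \mapsto \eta^{\nu}$ just shifts the index parameter $\gamma$ to $\gamma+\nu$, which is exactly why raw moments of every real order admit a clean closed form in terms of modified Bessel functions.
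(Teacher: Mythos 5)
Your argument is correct. Note, though, that the paper itself gives no derivation for this proposition---it simply cites \cite{Jorgensen:1982}---so your proof supplies what the authors leave to a reference. The route you take is the standard one: absorbing $\eta^{\nu}$ into the GIG kernel and reading off the normalizing constant of $\GIG(\eta|\gamma+\nu,\beta,\alpha)$ is legitimate precisely because the normalizing constant in (\ref{eqn:gig}) is valid for every real index as long as $\alpha>0$ and $\beta>0$ (equivalently, you are invoking the Bessel integral identity $\int_0^{\infty}\eta^{\lambda-1}\exp(-(\alpha\eta+\beta\eta^{-1})/2)\,d\eta=2(\beta/\alpha)^{\lambda/2}K_{\lambda}(\sqrt{\alpha\beta})$ at index $\lambda=\gamma+\nu$). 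Your convergence remark is also the right one to make: the double-exponential decay at both endpoints dominates any power $\eta^{\gamma+\nu-1}$, so the formula holds for all real $\nu$, which is exactly what the paper needs later when it applies the result with $\nu=-1$ and with negative Bessel orders (handled via $K_{-\lambda}=K_{\lambda}$). No gaps.
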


We are especially interested in the  cases that $\gamma=1/2$, $\gamma=-1/2$, $\gamma=3/2$ and $\gamma=-3/2$. For these cases,
we have the following results.
\begin{proposition} \label{pro:2} Let $\alpha>0$
and $\beta>0$.
\begin{enumerate}
\item[\emph{(1)}] If $\eta$ is distributed according to  ${\GIG}(\eta|1/2, \beta, \alpha)$, then
\[
E(\eta) = \frac{1+\sqrt{\alpha \beta}} {\alpha}, \quad  E(\eta^{-1}) = \sqrt{\frac{\alpha}{\beta}}.
\]
\item[\emph{(2)}] If $\eta$ is distributed according to  ${\GIG}(\eta|{-}1/2, \beta, \alpha)$, then
\[
E(\eta)  = \sqrt{\frac{\beta}{\alpha}}, \quad  E(\eta^{-1}) = \frac{1+\sqrt{\alpha \beta}} {\beta}.
\]
\item[\emph{(3)}] If $\eta$ is distributed according to  ${\GIG}(\eta|3/2, \beta, \alpha)$, then
\[
E(\eta) = \frac{3}{\alpha} +  \frac{\beta}{1+\sqrt{\alpha \beta}} , \quad  E(\eta^{-1}) =  \frac{\alpha}{1+\sqrt{\alpha \beta}}.
\]
\item[\emph{(4)}] If $\eta$ is distributed according to  ${\GIG}(\eta|{-}3/2, \beta, \alpha)$, then
\[
E(\eta)  = \frac{\beta}{1+\sqrt{\alpha \beta}},  \quad  E(\eta^{-1}) =  \frac{3}{\beta} + \frac{\alpha}{1+\sqrt{\alpha \beta}} .
\]
\end{enumerate}
\end{proposition}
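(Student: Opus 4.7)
The plan is to deduce all four parts of Proposition~\ref{pro:2} as direct corollaries of Proposition~\ref{pro:1}. Proposition~\ref{pro:1} gives
\[
E(\eta^{\nu}) = \Big(\frac{\beta}{\alpha}\Big)^{\nu/2} \frac{K_{\gamma+\nu}(u)}{K_{\gamma}(u)}, \qquad u:=\sqrt{\alpha\beta},
\]
so each statement reduces to evaluating a ratio $K_{\gamma\pm 1}(u)/K_{\gamma}(u)$ for half-integer orders, and then multiplying by the appropriate power of $\beta/\alpha$ and simplifying.

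First, I would assemble the two ingredients from Lemma~\ref{lem:3}: the closed form $K_{1/2}(u)=K_{-1/2}(u)=\sqrt{\pi/(2u)}\exp(-u)$, and the recurrence $K_{\gamma+1}(u)=(2\gamma/u) K_{\gamma}(u)+K_{\gamma-1}(u)$, together with the parity $K_{-\gamma}(u)=K_{\gamma}(u)$. These immediately give the four ratios I need:
\[
\frac{K_{-1/2}(u)}{K_{1/2}(u)}=1, \quad \frac{K_{3/2}(u)}{K_{1/2}(u)}=\frac{1+u}{u}, \quad \frac{K_{5/2}(u)}{K_{3/2}(u)}=\frac{3}{u}+\frac{u}{1+u},
\]
where the last follows from applying the recurrence once more with $\gamma=3/2$ and substituting $K_{1/2}/K_{3/2}=u/(1+u)$.

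Next I would plug these into Proposition~\ref{pro:1} for each case. For part (1), with $\gamma=1/2$ and $\nu=\pm 1$, the two ratios above give $E(\eta)=(\beta/\alpha)^{1/2}(1+u)/u = (1+\sqrt{\alpha\beta})/\alpha$ and $E(\eta^{-1})=(\alpha/\beta)^{1/2}=\sqrt{\alpha/\beta}$. Part (2) follows by symmetry: since $K_{-3/2}=K_{3/2}$ and $K_{-1/2}=K_{1/2}$, the roles of $E(\eta)$ and $E(\eta^{-1})$ swap, modulo the prefactor $(\beta/\alpha)^{\pm 1/2}$. For part (3), with $\gamma=3/2$ and $\nu=\pm 1$, the ratio $K_{5/2}/K_{3/2}$ yields $E(\eta)=(\beta/\alpha)^{1/2}(3/u+u/(1+u))=3/\alpha+\beta/(1+\sqrt{\alpha\beta})$, and $K_{1/2}/K_{3/2}=u/(1+u)$ yields $E(\eta^{-1})=\alpha/(1+\sqrt{\alpha\beta})$. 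Part (4) again follows from parity $K_{-5/2}=K_{5/2}$, $K_{-1/2}=K_{1/2}$, with the $(\beta/\alpha)^{\pm 1/2}$ prefactor reorganizing the two expressions as stated.

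There is really no main obstacle here—the entire argument is a bookkeeping exercise once Lemma~\ref{lem:3} is in hand. The only care needed is consistent substitution of $u=\sqrt{\alpha\beta}$ and keeping track of the $(\beta/\alpha)^{\nu/2}$ prefactor so that, for example, $(\beta/\alpha)^{1/2}\cdot \sqrt{\alpha\beta}/(1+\sqrt{\alpha\beta})$ collapses to $\beta/(1+\sqrt{\alpha\beta})$ rather than something messier. I would present the four cases in parallel to make the symmetry $\gamma\leftrightarrow -\gamma$ visible and to avoid redoing the algebra twice.
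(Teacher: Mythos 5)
Your proposal is correct and follows essentially the same route as the paper: apply Proposition~\ref{pro:1} and reduce everything to ratios of half-integer-order Bessel functions via the closed form for $K_{\pm 1/2}$, the recurrence, and the parity $K_{-\gamma}=K_{\gamma}$ from Lemma~\ref{lem:3}. Your write-up is in fact slightly more explicit than the paper's (which works out only the $\gamma=1/2$ and $\gamma=-3/2$ cases and says ``likewise'' for the rest), and all four computations check out.
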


\begin{proof}
It follows from  Lemma~\ref{lem:3} that
$K_{3/2}(u)= \frac{1{+}u}{u} K_{1/2}(u) = \frac{1{+}u}{u} K_{-1/2}(u)$.

We first consider the case that $\eta \sim \GIG(\eta|1/2, \beta, \alpha)$. Consequently, $E(\eta^{-1})= \alpha/\beta$ and
\[
E(\eta)= \Big(\frac{\beta}{\alpha} \Big)^{1/2} \frac{K_{\frac{3}{2}}(\sqrt{\alpha \beta})}{K_{\frac{1}{2}}(\sqrt{\alpha \beta})}
= \Big(\frac{\beta}{\alpha} \Big)^{1/2} \frac{1+ \sqrt{\alpha \beta}}{\sqrt{\alpha \beta} } = \frac{1+\sqrt{\alpha \beta} }{\alpha}.
\]

As for the case that $\eta \sim \GIG(\eta|{-}3/2, \beta, \alpha)$, it follows from Proposition~\ref{pro:1} that
\[
E(\eta) =  \Big(\frac{\beta}{\alpha}\Big)^{1/2} \frac{K_{-1/2}(\sqrt{\alpha \beta})} {K_{-3/2}(\sqrt{\alpha \beta})} 
= \frac{\beta}{1+\sqrt{\alpha \beta}}
\]
and
\[
E(\eta^{-1}) =  \Big(\frac{\beta}{\alpha}\Big)^{-1/2} \frac{K_{-5/2}(\sqrt{\alpha \beta})} {K_{-3/2}(\sqrt{\alpha \beta})}  
= \frac{3}{\beta} + \frac{\alpha}{1+\sqrt{\alpha \beta}}.
\]

Likewise, we have the second and third parts.
\end{proof}

\subsection{The Proof of Proposition~\ref{pro:7}}

Note that
\begin{align*}
\lim_{\tau \rightarrow \infty} G(\eta|\tau, \tau \lambda) &= \lim_{\tau \rightarrow \infty} \frac{(\tau \lambda)^{\tau}}
{\Gamma(\tau)} \eta^{\tau-1} \exp(-\tau \lambda \eta) \\
& = \lim_{\tau \rightarrow \infty}  \frac{(\tau \lambda)^{\tau}}{(2\pi)^{\frac{1}{2}}  \tau^{\tau-\frac{1}{2}} \exp(-\tau)} 
\eta^{\tau-1} \exp(-\tau \lambda \eta)
\quad \mbox{(Use the Stirling Formula)} \\
& = \lim_{\tau \rightarrow \infty} \frac{\tau^{\frac{1}{2}}} {(2\pi)^{\frac{1}{2}} \eta}  \frac{(\lambda \eta)^{\tau}} 
{\exp((\lambda \eta-1) \tau)}.
\end{align*}
Since $\ln u \leq u -1$ for $u>0$, with equality if and only if $u=1$, we can obtain the proof.

Likewise, we also have the second part.

\subsection{The Proof of Proposition~\ref{pro:08}}

Using Lemma~\ref{lem:08}
\begin{align*}
\lim_{\gamma \rightarrow +\infty}\GIG(\eta|\gamma, \beta, \gamma \alpha) &
= \lim_{\gamma \rightarrow +\infty} \frac{\gamma^{\gamma/2} (\alpha/\beta)^{\gamma/2}}{2 K_{\gamma}(\sqrt{\gamma \alpha \beta})} 
\eta^{\gamma-1} \exp(-(\gamma \alpha \eta + \beta \eta^{-1})/2) \\
&= \lim_{\nu \rightarrow +\infty} \frac{ \alpha^{\gamma} \exp(\frac{\alpha \beta} {4}) \exp(-\beta \eta^{-1}/2)} {\pi^{\frac{1}{2}} 2^{\gamma{+}\frac{1}{2}} \gamma^{-\frac{1}{2}}
} \eta^{\gamma-1} \exp(-\gamma(\alpha \eta/2 {-} 1)) \\
&= \lim_{\gamma \rightarrow +\infty} \frac{\eta^{-1} \gamma^{\frac{1}{2}} \exp(\frac{\alpha \beta} {4})}{ (2\pi)^{\frac{1}{2}} 
\exp(\beta \eta^{-1}/2) } (\alpha \eta/2)^{\gamma} \exp(-\gamma(\alpha \eta/2 {-} 1)) \\
& = \delta(\eta|2/\alpha).
\end{align*}
Again since $\ln u \leq u -1$ for $u>0$, with equality if and only if $u=1$, we can obtain the proof of Part (1).

Let $\tau=-\gamma$. We have
\begin{align*}
\lim_{\gamma \rightarrow -\infty} \GIG(\eta|\gamma,  -\gamma \beta,  \alpha)
&= \lim_{\tau \rightarrow +\infty} \GIG(\eta|-\tau,  \tau \beta, \alpha)  \\
&= \lim_{\tau \rightarrow +\infty} \frac{ (\alpha/(\tau \beta))^{-\tau/2}}{2 K_{\tau}(\sqrt{\tau \alpha \beta})} \eta^{-\tau-1} 
\exp(-(\alpha \eta + \tau \beta \eta^{-1})/2)
\end{align*}
due to $K_{-\tau}(\sqrt{\tau \alpha \beta}) = K_{\tau}(\sqrt{\tau \alpha \beta})$. Accordingly, we also have the second part.

\subsection{The Proof of Proposition~\ref{pro:09}}

Based on (\ref{eqn:gig2}) and Lemma~\ref{lem:3},  we have that
\begin{align*}
\lim_{\psi \rightarrow +\infty} p(\eta) = \lim_{\psi \rightarrow + \infty} \frac{\psi^{1/2}}{\sqrt{2 \pi}} \frac{1} {\exp(\frac{\psi}{2 \phi \eta} (\phi \eta-1)^2)}  
=\delta(\eta|\phi).
\end{align*}

\subsection{The Proof of Theorem~\ref{thm:1}}

\begin{align*}
p(b) &= \int_{0}^{+\infty}{{\EP}(b|0, \eta, q) {\GIG}(\eta|\gamma, \beta, \alpha) d \eta} \\
     & = \int_{0}^{+\infty}{\frac{\eta^{-1/q} }{2^{\frac{q+1}{q}} \Gamma(\frac{q{+}1} {q})} \exp(-\eta^{-1} |b|^q/2) \frac{(\alpha/\beta)^{{\gamma}/2}}{2 K_{\gamma}(\sqrt{\alpha \beta})}  \eta^{{\gamma}-1} \exp(-(\alpha \eta + \beta \eta^{-1})/2) d \eta} \\
     & = \frac{(\alpha/\beta)^{{\gamma}/2}}{2^{\frac{2q+1}{q}} \Gamma(\frac{q{+}1} {q}) K_{\gamma}(\sqrt{\alpha \beta})}
     \int_{0}^{+\infty}{ \eta^{\frac{\gamma q- 1}{q}-1} \exp[-(\alpha  \eta + (\beta + |b|^q) \eta^{-1})/2] d \eta} \\
     & = \frac{K_{\frac{\gamma q{-}1}{q}}(\sqrt{ \alpha(\beta {+}|b|^q)})} {2^{\frac{q+1}{q}} \Gamma(\frac{q{+}1} {q}) K_{\gamma}(\sqrt{\alpha \beta})} \frac{\alpha^{1/(2q)}}{\beta^{{\gamma}/2} } [\beta {+}|b|^q]^{(\gamma q-1)/(2q)}.
\end{align*}

\subsection{The Proof of Theorem~\ref{thm:02}}

 The proof can be directly obtained from Proposition~\ref{pro:08}. That is,
\begin{align*}
\lim_{\gamma \rightarrow +\infty}  \EIG(b|\gamma \alpha, \beta, \gamma, q) &= \lim_{\gamma \rightarrow +\infty}
\int{\EP(b|0,  \gamma, q) \GIG(\eta|\gamma, \beta, \gamma \alpha) d \eta}  \\
&= \int{\EP(b|0,  \gamma, q) \delta(\eta| 2/\alpha) d \eta} =\EP(b|0, 2/\alpha, q).
\end{align*}
Likewise, we have the proof for the second part.  As for the third part, it can be immediately obtained from
Proposition~\ref{pro:09}.

\subsection{The Proof of Theorem~\ref{thm:5}}

Note that
\[
\lim_{\tau \rightarrow \infty} \GT(b|u, \tau/\lambda, \tau/2, q) = \lim_{\tau \rightarrow \infty} \frac{q}{2} \frac{\Gamma(\frac{\tau}{2} {+}\frac{1}{q})} {\Gamma(\frac{\tau}{2}) \Gamma(\frac{1}{q})}  \Big(\frac{\lambda}{\tau}\Big)^{\frac{1}{q}}
\Big(1 + \frac{\lambda}{\tau} |b-u|^q \Big)^{-(\frac{\tau}{2}+\frac{1}{q})}.
\]

It follows from  Lemmas~\ref{lem:1} and \ref{lem:2}  that
\[
\lim_{\tau \rightarrow \infty}  \; \Big(1 + \frac{\lambda}{\tau} |b-u|^q \Big)^{-(\frac{\tau}{2}+\frac{1}{q})} = \exp\left(-\frac{\lambda}{2} |b-u|^q\right)
\]
and
\begin{align*}
\lim_{\tau \rightarrow \infty} \frac{q}{2} \frac{\Gamma(\frac{\tau}{2} {+}\frac{1}{q})} {\Gamma(\frac{\tau}{2}) \Gamma(\frac{1}{q})}  \Big(\frac{\lambda}{\tau}\Big)^{\frac{1}{q}}
& = \lim_{\tau \rightarrow \infty} \frac{q}{2  \Gamma({1}/{q}) }   \Big(\frac{\lambda}{\tau}\Big)^{\frac{1}{q}}
\frac{( \frac{\tau}{2} {+}\frac{1}{q})^{\frac{\tau}{2} {+}\frac{1}{q}-\frac{1}{2}}}  { (\frac{\tau}{2})^{\frac{\tau-1}{2} } }
\exp\Big(-\frac{1}{q} \Big)
\\
&=  \lim_{\tau \rightarrow \infty} \frac{q}{2  \Gamma({1}/{q}) }
\Big(\frac{\lambda}{2} + \frac{\lambda}{q \tau} \Big)^{1/q} \Big(1+ \frac{2}{q \tau} \Big)^{\frac{\tau-1}{2} }  \exp\Big(-\frac{1}{q} \Big)
\\
&= \frac{q}{2  \Gamma({1}/{q}) }
\Big(\frac{\lambda}{2}  \Big)^{1/q}.
\end{align*}
Thus, the proof completes.

\subsection{The Proof of Theorem~\ref{thm:6}}

According to  Proposition~\ref{pro:7}, we have
\begin{align*}
\lim_{\gamma \rightarrow \infty} \EG(b|\lambda \gamma, \gamma/2, q)  &=\lim_{\gamma \rightarrow \infty} \int_{0}^{\infty}{\EP(b|0, \eta, q)
   G(\eta|\gamma/2, \lambda \gamma/2)} d \eta  \\
 &=  \int_{0}^{\infty}{\EP(b|0, \eta, q)
  \big[\lim_{\gamma \rightarrow \infty} G(\eta|\gamma/2, \lambda \gamma/2)} \big] d \eta \\
  &=    \int_{0}^{\infty}{ \EP(b|0, \eta, q)  \delta(\eta|1/\lambda) }d \eta \\
  &= \EP(b|0, 1/\lambda, q).
 \end{align*}

\subsection{The Proof of Theorem~\ref{thm:00}}

With the setting that  $\gamma=\frac{1}{2} +\frac{1}{q}$, we have
\begin{align*}
\int_{0}^{\infty}{\EP(b|0, \eta, q)    G(\eta|\gamma, \alpha/2)} d \eta
&=   \frac{\alpha^{\frac{1}{q}  + \frac{1}{4}}
|b|^{\frac{q}{4} } }  {2^{\frac{2}{q} {+}\frac{1}{2} }  \Gamma(\frac{q{+}1}{q})  \Gamma(\frac{1}{2} +\frac{1}{q})}
K_{1/2} (\sqrt{\alpha |b|^q }) \\
& = \frac{\alpha^{\frac{1}{q}  + \frac{1}{4}}
|b|^{\frac{q}{4} } }  {2^{\frac{2}{q} {+}\frac{1}{2} }   2^{-\frac{2}{q}} \sqrt{\pi}  \frac{2}{q} \Gamma(\frac{2}{q}) }
 \frac{2^{-1/2} \sqrt{\pi}}{ (\alpha |b|^q)^{1/4}} \exp(-\sqrt{\alpha |b|^q })
\\
& = \frac{q \alpha^{1/q}} {4  \Gamma(\frac{2}{q}) } \exp(-\sqrt{\alpha |b|^q }) =\EP(b|0, \alpha^{-1/2}/2, q/2) .
\end{align*}
Here we use the fact that $\Gamma(\frac{q{+}1}{q})  \Gamma(\frac{1}{2} {+}\frac{1}{q}) = 2^{1- 2(\frac{1}{2} {+} \frac{1}{q})}
\sqrt{\pi} \Gamma(1+ \frac{2}{q})
= 2^{-\frac{2}{q}} \sqrt{\pi} \frac{2}{q} \Gamma(\frac{2}{q}) $.


\subsection{The Proof of Theorem~\ref{thm:2}}

The first part is immediate. We consider the proof of the second part.
It follows from Lemma~\ref{lem:3} that
\begin{align*}
\frac{\partial -\log p(b)} {\partial |b|^q}  &= \frac{K_{\frac{\gamma q -1} {q} -1}(\sqrt{\alpha(\beta+ |b|^q)})  +  \frac{(\gamma q -1)/q} { \sqrt{\alpha(\beta+ |b|^q)}}  K_{\frac{\gamma q -1} {q} }  (\sqrt{\alpha(\beta+ |b|^q)})}  {K_{\frac{\gamma q -1} {q} }
 (\sqrt{\alpha(\beta+ |b|^q)}) }   \frac{1}{2} \frac{\alpha}{\sqrt{\alpha(\beta+ |b|^q)}} \\
& \quad - \frac{\gamma q -1}{2 q} \frac{1}{\beta+ |b|^q} \\
& = \frac{1}{2} \frac{\sqrt{\alpha}}{\sqrt{\beta+ |b|^q}}    \frac{K_{\frac{\gamma q -1} {q} -1}(\sqrt{\alpha(\beta+ |b|^q)})}
 {K_{\frac{\gamma q -1} {q} }  (\sqrt{\alpha(\beta+ |b|^q)}) }
= \frac{1}{2} E(\eta^{-1}| b).
\end{align*}
due to that $\eta|b \sim \GIG(\eta|(\gamma q -1)/q, \sqrt{\beta+|b|^q}, \alpha)$.

\subsection{The Proof of Theorem~\ref{thm:monoto}}

For notational simplicity, we let $z=|b^q|$, $\nu=\frac{\gamma q -1} {q}$ and $\phi(z) = \frac{\partial -\log p(b)} {\partial |b|^q}$.
According to the above proof, we have
\[
\phi(z) = \frac{\alpha}{2} \frac{1}{\sqrt{\alpha(\beta+ z)}}    \frac{K_{ \nu-1}(\sqrt{\alpha(\beta+ z)})}  {K_{\nu}
(\sqrt{\alpha(\beta+ z)}) }.
\]
It then follows from Lemma~\ref{lem:4}
that $\phi(z)$ is completely monotone.

\subsection{The Proof of Theorem~\ref{thm:oracle1}}

Let ${\b}_n^{(1)} =\b^{*} + \frac{\u}{\sqrt{n}}$ and
\[
\hat{\u} = \argmin_{\u} \; \bigg\{\Psi(\u) := \Big\|\y - \X (\b^{*} +\frac{\u}{\sqrt{n}} ) \Big\|^2  
  + \lambda_n \sum_{j=1}^p \omega_j^{(0)} |b^{*}_j {+} \frac{u_j}{\sqrt{n}}| \bigg\},
\]
where
\[\omega_j^{(0)} =  \frac{\sqrt{\alpha_n\beta_n + \alpha_n}}{\sqrt{ \alpha_n(\beta_n+ |b^{(0)}_j|) }}
\frac{ K_{\gamma  {-}2 } (\sqrt{\alpha_n(\beta_n+ |b^{(0)}_j|)})}
{K_{\gamma  {-}1}    (\sqrt{\alpha_n(\beta_n {+} |b^{(0)}_j|)}) }  \frac{ K_{\gamma  {-}1 } (\sqrt{\alpha_n(\beta_n+ 1))}}
{K_{\gamma  {-}2}    (\sqrt{\alpha_n(\beta_n {+} 1) )}}.
\]
Consider that
\[
 \Psi(\u) - \Psi(0) 
= \u^T(\frac{1}{n}\X^T \X) \u {-} 2 \frac{\epsi^T \X}{\sqrt{n}} \u {+} \lambda_n \sum_{j=1}^p
\omega_j^{(0)} \Big \{ \big|b^{*}_j {+} \frac{u_j}{\sqrt{n}} \big| {-} |b^{*}_j| \Big\}.
\]
We know that $\X^T\X/n \rightarrow \C$ and $\frac{ \X^T \epsi}{\sqrt{n}} \rightarrow_{d} N(\0, \sigma^2 \C)$. We thus only consider
the third term of the right-hand side of the above equation. Since $\alpha_n \beta_n \rightarrow c_1$ and $\alpha_n \rightarrow \infty$ (note that $\alpha_n/{n} \rightarrow c_2 >0$ implies $\alpha_n \rightarrow +\infty$), we have
\[
 \frac{ K_{\gamma  {-}1 } (\sqrt{\alpha_n(\beta_n+ 1))}}
{K_{\gamma  {-}2}    (\sqrt{\alpha_n(\beta_n {+} 1) )}} \rightarrow 1.
\]
If $b^{*}_j=0$, then $\sqrt{n}(|b^{*}_j+\frac{u_j}{\sqrt{n}}|-|b^{*}_j|)=|u_j|$. And since $\sqrt{n} b_j^{(0)}=O_p(1)$, we have $\alpha_n |b^{(0)}_j|
= (\alpha_n/\sqrt{n}) \sqrt{n} |b^{(0)}_j| =O_p(1)$. Hence, $Q_{\gamma{-}1} (\alpha_n(\beta_n +|b^{(0)}_j|))$ converges to a positive constant in probability. 
As a result,
we obtain
\[
\frac{\lambda_n \omega^{(0)}_j}{\sqrt{n}}   \rightarrow_p   \rightarrow \infty.
\]
due to
\[
\frac{\sqrt{\alpha_n\beta_n + \alpha_n}}{\sqrt{n} } \frac{K_{\gamma{-}1} (\sqrt{\alpha_n\beta_n + \alpha_n})}{K_{\gamma{-}2} (\sqrt{\alpha_n\beta_n + \alpha_n})} \rightarrow \sqrt{c_2}.
\]
If $b^{*}_j\neq0$, then $\omega^{(0)}_j\rightarrow_p \frac{1} {\sqrt{|b^{(0)}_j|}} >0$ and  $\sqrt{n}(|b^{*}_j+\frac{u_j}{\sqrt{n}}|-|b^{*}_j|)\rightarrow u_j \sgn(b^{*}_j)$. 
Thus $\lambda_n \frac{\omega^{(0)}_j}{\sqrt{n}}\sqrt{n}(|b^{*}_j {+} \frac{u_j}{\sqrt{n}}| - |b^{*}_j|)\rightarrow_p 0$.
The remaining parts of the proof  can be immediately obtained via some slight modifications to that in \cite{ZouJasa:2006} or
\cite{ZouLi:2008}.  We here omit them.

\section{Several Special EP-GIG Distributions} \label{app:cases}

We now present eight other important concrete EP-GIG distributions, obtained
from particular settings of ${\gamma}$ and $q$.

\paragraph{Example 1}   We first discuss the case that $q=1$ and ${\gamma}=1/2$. That is, we employ the mixing distribution 
of $L(b|0, \eta)$
with ${\GIG}(\eta|1/2, \beta, \alpha)$. In this case, since
\[
K_{\frac{1}{2}{-}1}(\sqrt{ \alpha(\beta {+}|b|)}) = K_{-1/2}(\sqrt{\alpha(\beta {+}|b|)}) =
\frac{(\pi/2)^{1/2}}{( \alpha(\beta {+}|b|))^{1/4}} \exp(-\sqrt{ \alpha (\beta {+}|b|)})
\]
and
\[
K_{1/2}(\sqrt{\alpha \beta})  =
\frac{(\pi/2)^{1/2}}{( \alpha \beta)^{1/4}} \exp(-\sqrt{ \alpha \beta}),
\]
we obtain the following pdf for ${\EIG}(b|\alpha, \beta, 1/2, 1)$:
\begin{equation} \label{eqn:eig11}
p(b)= \frac{\alpha^{1/2}} {4} \exp(\sqrt{\alpha \beta}) (\beta {+} |b|)^{-1/2}  \exp(-\sqrt{\alpha(\beta {+} |b|)}).
\end{equation}

\paragraph{Example 2}  The second special EP-GIG distribution is based on the setting of $q=1$ and ${\gamma}=3/2$. Since
\[
K_{3/2}(u) = \frac{u+1}{u} K_{1/2}(u) = \frac{u+1}{u} \frac{(\pi/2)^{1/2}}{u^{1/2}} \exp(-u),
\]
we obtain that the pdf of ${\GIG}(\eta|3/2, \beta, \alpha)$ is
\[
p(\eta|\alpha, \beta, 3/2) = \frac{\alpha^{3/2}}{\sqrt{2\pi}} \frac{\exp(\sqrt{\alpha \beta})} { \sqrt{\alpha \beta} +1 }  \eta^{\frac{1}{2}}
\exp(-(\alpha \eta + \beta \eta^{-1})/2)
\]
and that the pdf of ${\EIG}(b|\alpha, \beta, 3/2, 1)$ is
\begin{equation} \label{eqn:eig12}
p(b) =
 \frac{\alpha \exp(\sqrt{\alpha \beta})}{4(\sqrt{\alpha \beta} +1)} \exp(-\sqrt{ \alpha (\beta {+}|b|)}).
\end{equation}

\paragraph{Example 3}  We now consider the case that $q=1$ and ${\gamma}=-1/2$. In this case,
we   have $\EIG(b|\alpha, \beta, {-}1/2, 1)$ which is a mixture of
$L(b|0, \eta)$ with density $\GIG(\eta|{-}1/2,  \beta, \alpha)$.  The density of $\EIG(b|\alpha, \beta, {-}1/2, 1)$
is
\[
p(b)=\frac{\beta^{1/2} \exp(\sqrt{\alpha \beta})} {4 (\beta+|b|)^{3/2}} (1+ \sqrt{\alpha(\beta + |b|)}) \exp(-\sqrt{\alpha(\beta+|b|)}).
\]

\paragraph{Example 4}  The fourth special EP-GIG distribution is ${\EIG}(b|\alpha, \beta, 0, 2)$; that is, we let $q=2$ and $\gamma=0$.
In other words, we consider the mixture of the Gaussian distribution $N(b|0, \eta)$
with the hyperbolic distribution $\GIG(\eta|\beta, \alpha, 0)$. We now have
\begin{equation} \label{eqn:eig22}
p(b) = \frac{1}{2 K_0(\sqrt{\alpha \beta}) \sqrt{\beta+b^2} } \exp(-\sqrt{\alpha(\beta+b^2)}).
\end{equation}

\paragraph{Example 5}   In the fifth special case we set $q=2$ and $\gamma=1$; that is,
we consider the mixture of the Gaussian distribution $N(b|0, \eta)$
with the generalized inverse Gaussian $\GIG(\eta|1, \beta, \alpha)$. The density of the corresponding
EP-GIG distribution ${\EIG}(b|\alpha, \beta, 1, 2)$ is
\begin{equation} \label{eqn:eig21}
p(b) = \frac{1}{2 K_1(\sqrt{\alpha \beta}) \beta^{1/2}} \exp(-\sqrt{\alpha(\beta+b^2)}).
\end{equation}

\paragraph{Example 6}  The final special case is based on the settings $q=2$ and $\gamma=-1$. In this case, we have
\[
p(b) = \int_{0}^{\infty} {N(b|0, \eta)  \GIG(\eta|{-}1, \beta, \alpha) d \eta} = \frac{(\beta/\alpha)^{1/2} } {2 K_1(\sqrt{\alpha \beta})}
\frac{1+\sqrt{\alpha(\beta+b^2)}}{\exp(\sqrt{\alpha(\beta+b^2)})} (\beta + b^2)^{-\frac{3}{2}}.
\]

\paragraph{Example 7}  We are also interested  EP-GIG  with $q=1/2$, i.e. a class of bridge scale mixtures. In this and next examples, 
we present two special cases. First,  we  set $q=1/2$ and ${\gamma}=3/2$. That is,
\[
p(b) = \int_{0}^{\infty} {\EP\big(b|0, \eta, {1}/{2}\big)  \GIG\big(\eta|{3}/{2}, \beta, \alpha\big) d \eta} =
 \frac{\alpha^{\frac{3}{2}}\exp(\sqrt{\alpha \beta})} {2^4 (1+\sqrt{\alpha \beta})}  \frac{\exp(-\sqrt{\alpha(\beta {+} |b|^{1/2})})} 
 {(\beta {+} |b|^{\frac{1}{2}})^{\frac{1}{2}}}.
\]

\paragraph{Example 8}    In this case we set $q=1/2$
and $\gamma=5/2$.  We now have
\[
p(b) = \int_{0}^{\infty} {\EP(b|0, \eta, 1/2)  \GIG(\eta|5/2, \beta, \alpha) d \eta} =  \frac{\alpha^2 \exp(\sqrt{\alpha \beta})}
{2^4(3+ 3\sqrt{\alpha \beta} + \alpha \beta)} \exp\Big(-\sqrt{ \alpha (\beta {+}|b|^{1/2})}\Big).
\]

\section{EP-Jeffreys Priors} \label{app:jeff}

We first consider the definition of EP-Jeffreys prior, which the mixture of  $\EP(b|0, \eta, q)$ with the Jeffreys prior $1/\eta$.
It is easily verified that
\[
p(b) \propto \int{\EP(b|0, \eta, q)  \eta^{-1} d \eta}  = \frac{q}{2} |b|^{-1}
\]
and that $[\eta|b] \sim \IG(\eta|1/q, |b|^q/2)$.
In this case, we obtain
\[
E(\eta^{-1}|b) = \frac{1}{2q} |b|^{-q} .
\]

On the other hand, the EP-Jeffreys prior induces penalty $\log |b|$ for $b$. Moreover, it is immediately  calculated that
\[
\frac{d \log|b|} {|b|^q} \triangleq  \frac{1}{q} |b|^{-q}= 2 E(\eta^{-1}|b).
\]
As we can see,  our discussions here present an alternative derivation for the adaptive lasso~\citep{ZouJasa:2006}. Moreover,
we also obtain the relationship of the adaptive lasso with an EM algorithm.

Using the EP-Jeffreys prior, we in particular define a hierarchical model:
\begin{align*}
[\y|\b, \sigma]  & \sim N(\y|\X \b, \sigma \I_n), \\
[b_j | \eta_j, \sigma] & \stackrel{ind}{\sim}  \EP(b_j|0, \sigma \eta_j, q), \\
[\eta_j] & \stackrel{ind}{\propto}  \eta_j^{-1}, \\
 p(\sigma) & = ``Constant".
\end{align*}
It is easy to obtain that
\[
[\eta_j|b_j, \sigma] \sim \IG\big(\eta_j\big|1/q, \;  \sigma^{-1}|b_j|^q/2 \big).
\]

Given the $t$th estimates $(\b^{(t)}, \sigma^{(t)})$ of $(\b, \sigma)$, the E-step of EM calculates
\begin{align*}
Q(\b, \sigma|\b^{(t)}, \sigma^{(t)}) &\triangleq \log p(\y|\b, \sigma) + \sum_{j=1}^p
\int{\log p[b_j | \eta_j, \sigma] p(\eta_j|b_j^{(t)}, \sigma^{(t)})} d \eta_j \\
& \propto  -\frac{n}{2} \log \sigma {-} \frac{1}{2 \sigma}(\y{-}\X \b)^T (\y {-}\X\b) - \frac{p}{q} \log \sigma \\
& \quad - \frac{1}{2 \sigma} \sum_{j=1}^p
|b_j|^q \int \eta_j^{-1} p(\eta_j|b_j^{(t)}, \sigma^{(t)}) d \eta_j.
\end{align*}
Here we omit some terms that are independent of
parameters $\sigma$ and $\b$.
Indeed, we only need to calculate $E(\eta_j^{-1}|b_j^{(t)}, \sigma^{(t)})$ in the E-step.
That is,
\[
w_j^{(t{+}1)} \triangleq  E(\eta_j^{-1}|b_j^{(t)}, \sigma^{(t)}) = \frac{2 \sigma^{(t)}}
{q |b^{(t)}_j|^q}.
\]

The M-step maximizes $Q(\b, \sigma|\b^{(t)}, \sigma^{(t)})$ with respect to $(\b, \sigma)$.
In particular, it is obtained as follows:
\begin{align*}
\b^{(t{+}1)} & = \argmin_{\b} \; (\y{-}\X \b)^T (\y {-}\X\b) +  \sum_{j=1}^p w_j^{(t{+}1)} |b_j|^q, \\
\sigma^{(t{+}1)} & = \frac{q}{qn {+} 2 p} \Big\{(\y{-}\X \b^{(t{+}1)})^T (\y {-}\X\b^{(t{+}1)}) + \sum_{j=1}^p w_j^{(t{+}1)} 
|b_j^{(t{+}1)}|^q \Big\}.
\end{align*}

\section{The Hierarchy with the Bridge Prior Given in (\ref{eqn:eig13})} \label{app:eig13}

Using the bridge prior in (\ref{eqn:eig13})  yields the following hierarchical model:
\begin{align*}
[\y|\b, \sigma]  & \sim N(\y|\X \b, \sigma \I_n), \\
[b_j | \eta_j, \sigma] & \stackrel{ind}{\sim}  L(b_j|0, \sigma \eta_j), \\
[\eta_j] & \stackrel{ind}{\propto}  G(\eta_j|3/2, \alpha/2), \\
 p(\sigma) & = ``Constant".
\end{align*}
It is easy to obtain that
\[
[\eta_j|b_j, \sigma] \sim \GIG\big(\eta_j\big|1/2, \;  \sigma^{-1}|b_j|, \alpha \big).
\]

Given the $t$th estimates $(\b^{(t)}, \sigma^{(t)})$ of $(\b, \sigma)$, the E-step of EM calculates
\begin{align*}
Q(\b, \sigma|\b^{(t)}, \sigma^{(t)}) &\triangleq \log p(\y|\b, \sigma) + \sum_{j=1}^p
\int{\log p[b_j | \eta_j, \sigma] p(\eta_j|b_j^{(t)}, \sigma^{(t)})} d \eta_j \\
& \propto  -\frac{n}{2} \log \sigma {-} \frac{1}{2 \sigma}(\y{-}\X \b)^T (\y {-}\X\b) - \frac{p}{q} \log \sigma \\
& \quad - \frac{1}{2 \sigma} \sum_{j=1}^p
|b_j|^q \int \eta_j^{-1} p(\eta_j|b_j^{(t)}, \sigma^{(t)}) d \eta_j.
\end{align*}
Here we omit some terms that are independent of
parameters $\sigma$ and $\b$.
Indeed, we only need to calculate $E(\eta_j^{-1}|b_j^{(t)}, \sigma^{(t)})$ in the E-step.
That is,
\[
w_j^{(t{+}1)} \triangleq  E(\eta_j^{-1}|b_j^{(t)}, \sigma^{(t)}) = \sqrt{\frac{\alpha \sigma^{(t)}}
{ |b^{(t)}_j|}}.
\]

The M-step maximizes $Q(\b, \sigma|\b^{(t)}, \sigma^{(t)})$ with respect to $(\b, \sigma)$.
In particular, it is obtained as follows:
\begin{align*}
\b^{(t{+}1)} & = \argmin_{\b} \; (\y{-}\X \b)^T (\y {-}\X\b) +  \sum_{j=1}^p w_j^{(t{+}1)} |b_j|^q, \\
\sigma^{(t{+}1)} & = \frac{q}{qn {+} 2 p} \Big\{(\y{-}\X \b^{(t{+}1)})^T (\y {-}\X\b^{(t{+}1)}) + \sum_{j=1}^p w_j^{(t{+}1)} |b_j^{(t{+}1)}|^q \Big\}.
\end{align*}

\bibliography{ncvs2}

\end{document}